\documentclass[onefignum,onetabnum]{siamonline250211}

\usepackage{lipsum}
\usepackage{amsfonts}
\usepackage{graphicx}
\usepackage{epstopdf}
\usepackage{algorithmic}
\usepackage{dsfont}

\ifpdf
  \DeclareGraphicsExtensions{.eps,.pdf,.png,.jpg}
\else
  \DeclareGraphicsExtensions{.eps}
\fi

\usepackage{enumitem}
\setlist[enumerate]{leftmargin=.5in}
\setlist[itemize]{leftmargin=.5in}

\newsiamremark{remark}{Remark}
\newsiamremark{hypothesis}{Hypothesis}
\crefname{hypothesis}{Hypothesis}{Hypotheses}
\newsiamthm{claim}{Claim}
\newsiamremark{fact}{Fact}
\crefname{fact}{Fact}{Facts}

\usepackage{graphicx} 
\usepackage{hyperref}
\usepackage{mathrsfs}
\usepackage{url}
\usepackage{bm}
\usepackage{mathtools,amsmath,amssymb}

\ifpdf
\hypersetup{
  pdftitle={General MDPs with Unbounded Costs and Policy Gradient Methods},
  pdfauthor={A. Gupta and A. Mahajan}
}
\fi

\headers{General MDPs with Unbounded Cost}{A. Gupta and A. Mahajan}

\title{Operator-Theoretic Foundations and Policy Gradient Methods for General MDPs with Unbounded Costs}

\author{Abhishek Gupta\thanks{The Ohio State University, Columbus, Ohio. 
  (\email{gupta.706@osu.edu}, \url{https://gupta706.github.io/}).}
\and Aditya Mahajan\thanks{McGill University, Montreal, Canada. 
  (\email{aditya.mahajan@mcgill.ca}, \url{https://adityam.github.io/})}
  }

\usepackage{amsopn}

\usepackage{enumitem}

\newcommand{\textttup}[1]{\textup{\texttt{#1}}}
\newcommand{\Na}{\mathbb{N}}
\renewcommand{\Re}{\mathbb{R}}
\newcommand{\op}{\bm}
\newcommand{\fs}{\mathcal F_{\mathcal S}}
\newcommand{\fsa}{\mathcal F_{\mathcal K}}
\newcommand{\ms}{\mathcal M_{\mathcal S}}
\newcommand{\msa}{\mathcal M_{\mathcal K}}
\newcommand{\spec}{\textttup{spec}}

\newcommand{\lifs}{\mathcal V}
\newcommand{\lifsa}{\mathcal Q}
\newcommand{\lifsb}{\mathcal V_{\textttup{b}}}
\newcommand{\lifsab}{\mathcal Q_{\textttup{b}}}

\newcommand{\absconv}{\texttt{absconv}}

\newcommand{\supp}{\texttt{supp}}
\newcommand{\id}[1]{\textttup{id}_{#1}}

\newcommand{\ids}{\textttup{id}_{\fs}}
\newcommand{\ws}{\textttup{w}_{\mathcal S}}
\newcommand{\wsa}{\textttup{w}_{\mathcal K}}
\newcommand{\metric}{\textttup{IPM}_{\Pi,\ws}}

\newcommand{\kp}{\kappa_{\op P}}
\newcommand{\pair}[1]{\left\langle #1 \right\rangle}

\newcommand{\gateaux}{\mathfrak D}

\newcommand{\stable}{\Pi_{\textup{\texttt{stable}}}}

\newcommand{\decay}{\Pi_{\textup{\texttt{decay}}}}
\newcommand{\specstable}{\Pi_{\textup{\texttt{ss}}}}

\newcommand{\exo}[2]{\mathop{\mathbb{E}}_{#2}\left[#1\right]}
\newcommand{\pr}[1]{\mathbb{P}\left\{#1\right\}}

\newcommand{\ipm}{\textttup{IPM}}
\newcommand{\gen}[1]{\mathfrak{F}_{#1}}

\newcommand{\kernels}{\mathfrak{K}_{\mathcal S}}
\newcommand{\kernelsa}{\mathfrak{K}_{\mathcal K}}

\newcommand{\ipms}{\ipm_{\gen{\lifs}}}
\newcommand{\ipmsa}{\ipm_{\gen{\lifsa}}}
\newcommand{\ipmas}[1]{\ipm_{\gen{\mathcal A}(s)} }

\DeclareMathOperator*{\argmin}{\arg\min}

\newcommand*\TRANS{{\mathpalette\doTRANS\empty}}
\makeatletter
\newcommand*\doTRANS[2]{\raisebox{\depth}{$\m@th#1\intercal$}}
\makeatother

\newtheorem{assumption}{Assumption}
\newtheorem{example}{Example}

\newcommand{\examplecontref}{}

\newtheorem*{examplecontinner}{Example \examplecontref\ (continued)}

\newenvironment{examplecont}[1]
  {\renewcommand{\examplecontref}{\ref{#1}}%
   \begin{examplecontinner}}
  {\end{examplecontinner}}

\begin{document}

\maketitle
\begin{abstract}
Markov decision processes (MDPs) is viewed as an optimization of an objective function over certain linear operators over general function spaces. A new existence result is established for the existence of optimal policies in general MDPs, which differs from the existence result derived previously in the literature. Using the well-established perturbation theory of linear operators, policy difference lemma is established for general MDPs and the Gauteaux derivative of the objective function as a function of the policy operator is derived. By upper bounding the policy difference via the theory of integral probability metric, a new majorization-minimization type policy gradient algorithm for general MDPs is derived. This leads to generalization of many well-known algorithms in reinforcement learning to cases with general state and action spaces. Further, by taking the integral probability metric as maximum mean discrepancy, a low-complexity policy gradient algorithm is derived for finite MDPs. The new algorithm, called MM-RKHS, appears to be superior to PPO algorithm due to low computational complexity, low sample complexity, and faster convergence.  
\end{abstract}

\begin{keywords}
Markov decision processes, Operator theory, Reinforcement learning.
\end{keywords}

\begin{MSCcodes}
90C40, 47A55, 90C26, 93E35.
\end{MSCcodes}

\section{Introduction}
Proximal policy optimization has been one of the widely used algorithm for reinforcement learning with numerous successes in commercial and academic settings. Although the original derivation of PPO algorithm has been done for finite state-finite action MDPs, the algorithm has seen success even in continuous-state continuous-action settings.  This versatile algorithm has now been used to optimize performance metrics in disparate dynamical systems such as robotics, drug discovery, material discovery, ridesharing, active flow control, to name a few. The goal of this paper is to propose a policy-gradient type algorithm for computing an optimal policy for general-state general-action Markov decision processes (MDPs), where the state and the action spaces are complete separable metric spaces (Polish spaces). We refer to such MDPs as general MDPs. 

PPO works because it closely mimics majorization minimization algorithm, and therefore, enjoys the convergence benefit of the majorization minimization algorithm. In the majorization minimization algorithm for minimizing a function over the Euclidean space, one needs to obtain an upper bound (since we are considering minimizing cost) on the objective function in the vicinity of each iterate during the descent process. Typically, this is obtained by identifying an upper bound on the second order term in the Taylor series expansion of the objective function. As is well-known, this algorithm converges \cite[Proposition 7.3.1]{lange2016mm} under fairly general conditions when the optimization variable lies in Euclidean space.

The extension of PPO algorithm for general MDPs is not straightforward. The basis for PPO algorithm was TRPO algorithm \cite{schulman2015trust}, where the TRPO algorithm was derived for finite MDPs with a reward function that is only the function of state. That reference also was dependent on a series of work on policy gradient methods for finite MDPs, where the expressions are derived using derivative formula for functions over Euclidean spaces. To arrive at a majorization of the value function in the general MDP setting, we need to carefully exploit the results available in the infinite dimensional settings. Moreover, our goal is to arrive at an algorithm that can be used for complex applications involving general MDPs. Accordingly, in this paper, we put forth a new way of looking at general MDPs. We view the transition kernel and the policy as linear operators over carefully constructed function spaces. We first outline key engineering applications of general MDPs and then present key contributions.

\subsection{Motivating Applications}
In this subsection, we present the case for studying reinforcement learning algorithms for general MDPs. 

\paragraph{Active Flow Control}
Active flow control is an important technique in aerospace and automotive applications to enhance aerodynamic performance, reduce drag and improve controllability. Montala et al. \cite{montala2024active,suarez2025active} have studied the use of reinforcement learning to reduce the flow separation on aircraft wings at high angles of attack and high Reynolds number. The state space in this problem is the fluid flow field around the aircraft wing, which is an infinite dimensional space. 

\paragraph{Mean Field Control (MFC)}
In recent years, mean field games (MFGs)~\cite{HuangMalhameCaines_2006,lasry2007mean} have proven effective for modeling and coordinating large multi-agent systems arising in smart grids and finance. When these systems are fully cooperative (referred to as mean field control or mean field teams), the optimal solution requires solving a dynamic program characterized by a measure-valued state~\cite{carmona2013control,arabneydi2014team}. To navigate this complexity, recent research has begun applying reinforcement learning  algorithms to MFC settings~\cite{subramanian2018reinforcement,angiuli2022unified,anand2024mean}.


\paragraph{Soft Robotics Control Design}
Soft robots have been studied intensely in the past decade \cite{della2023model}. The configuration space of a soft robot is generally a function space. For example, in Cosserat rod model, the state is a map that maps the location on the rod (a unit interval) to three-dimensional configuration space (location of the that point on the rod in the space), and is a solution to a partial differential equation \cite{zheng2022pde}. Control of soft robotics remains a significant challenge and an active area of research \cite{della2023model}. Some recent work has also studied the problem of design of soft robotics using reinforcement learning \cite{baaij2023learning}.

Other relevant applications of reinforcement learning with general state and action spaces are in the fields of Thermal Field and Plasma Control \cite{zhou2025deep, degrave2022magnetic}, Reaction-Diffusion Pattern Formation \cite{schenk2024model,yoon2020design}, Vibration Control of Flexible Structures \cite{barjini2025deep}, etc. The control design for complex reward functions in these applications remains a significant challenge; if high fidelity simulators are available, we can use reinforcement learning based algorithm for controller design. This paper creates a theoretical approach to achieve this.

\subsection{Contributions}
In this paper, we pose the problem of Markov decision problems within the framework of linear operator theory, in which the state transition kernel and the policies are linear operators over appropriate weighted normed spaces. We demonstrate the existence of an optimal solution of the MDP using this theory, which differs significantly from other existing results in the literature. The key innovation in this paper is that we define two operators---one that maps the value function space to Q function space (state-action value function) and the operator that maps the Q function space to the value function space. Due to this new framework developed here, without employing the contraction mapping theorem, we are able to deduce the existence of optimal solution in linear-quadratic-Gaussian MDPs (LQG), finite MDPs, Lipschitz MDPs, and general MDPs within the same mathematical framework. We note here that this viewpoint is reminiscent of the approach adopted by Sch\"{a}l in \cite[Section 3]{schal1975dynamic} to devise the ws topology on the space of policies to demonstrate the existence of optimal policy in general MDPs. However, the author does not devise a computational approach in that paper. 

This viewpoint allows us to use the perturbation theory of linear operators \cite{kato2013perturbation,atkinson2005theoretical} over Banach spaces to derive the policy difference lemma for general MDPs. This result unifies the policy difference and policy gradient results across various prior works \cite{kakade2002approximately,schulman2015trust} and extends them to general MDPs \cite{lascu2025ppo}. This new policy difference lemma further allows us to exploit the theory of integral probability metrics to derive a majorization minimization algorithm for general MDPs. We note that integral probability metrics, particularly maximum mean discrepancy, is comparatively easier to compute even over infinite dimensional spaces \cite{han2023class}. We further demonstrate that for finite MDPs, the majorization minimization algorithm recovers the celebrated TRPO algorithm \cite{schulman2015trust}.

For the special case of finite MDPs with a slightly different majorization function involving mirror descent \cite{bertsekas2016nonlinear}, we arrive at a new class of reinforcement learning algorithms that are indexed by positive definite matrices. This new algorithm has an attractive computational property -- as soon as advantage function is estimated, one employs a simple computational algorithm using matrix algebra to derive the updated policy. This new policy satisfies the cost-improvement property, which is inherited due to the property of the majorization minimization algorithm. This obviates the need to solve a complex high dimensional optimization problem involving KL divergence, as is usually the case in TRPO algorithm. Thus, the algorithm designed here enjoys the benefits of TRPO algorithm without the computational overhead of computing the derivatives of the KL divergence.

\subsection{Notation}

Let $(\mathcal X,\mathfrak X)$ be a topological space in which $\mathcal X$ is the set and $\mathfrak X$ is the Borel sigma algebra on $\mathcal X$. We use $\mathcal P(\mathcal X)$ to denote the set of all probability measures over $\mathcal X$. Let $v:\mathcal X\to\Re$ be a measurable function. For $\rho\in\mathcal P(\mathcal X)$, define the pairing between a function and a probability measure as $\pair{v,\rho}=\int v(s) \rho(ds)$. 

\subsection{Organization}
The rest of the paper is organized as follows. In Sec.~\ref{sec:technical}, we present a brief overview of some of the technical tools used in the paper, including the theory of linear operators and integral probability metrics. The main optimization problem is formulated in Sec.~\ref{sec:problem}. In Sec.~\ref{sec:problem}, we provide sufficient conditions under which an optimal time-homogeneous policy exists. In Sec.~\ref{sec:policy-gradient}, we develop the policy difference lemma, a metric on policies, and majorization bounds. In Sec.~\ref{sec:policy-gradient}, these results are connected to policy iteration, PPO-style updates, trust region policy optimization, and majorization minimization methods. We conclude the discussions in Section \ref{sec:conclude}.

\section{Technical background}\label{sec:technical}

This section collects operator-theoretic background and the definition of integral probability metrics on weighted function spaces.
\subsection{Primer on Linear Operators}\label{sub:linearop}
Let $\mathcal X$ and $\mathcal Y$ be Banach spaces with norms given by $\|\cdot\|_{\mathcal X}$ and $\|\cdot\|_{\mathcal Y}$, respectively.

\paragraph{Operator Norm}

Let $\mathcal{B}(\mathcal X, \mathcal Y)$ denote the set of bounded linear operators $\op A:\mathcal X\to\mathcal Y$ equipped with the operator norm $\|\op A\| = \sup_{\|x\|_{\mathcal X}\leq 1} \|\op Ax\|_{\mathcal Y}$. The identity operator $\id{\mathcal X}\in \mathcal B(\mathcal X,\mathcal X)$ is given by $\id{\mathcal X}(x) = x$. If $\op A$ is invertible,  its inverse is denoted by $\op A^{-1}:\mathcal Y\to\mathcal X$.
For any subset $\mathcal X_0\subset\mathcal X$, its image under an operator $\op A \in \mathcal B(\mathcal X, \mathcal Y)$ is defined as 
$\op A\mathcal X_0 = \{\op Ax_0:x_0\in\mathcal X_0\}$. The following are well-known results. 

\paragraph{Compact Operator}

The linear operator $\op A$ is said to be a compact operator if it maps bounded subsets of $\mathcal X$ to precompact subsets in $\mathcal Y$. As an example, if $\mathcal Y$ is the Banach space of measurable functions with the sup norm, then due to Arzela Ascoli theorem, a precompact subset would be uniformly bounded and equicontinuous set of functions. An example of uniformly bounded equicontinuous set of functions is the set of H\"older continuous functions with finite upper bounds on the sup norm and H\"older coefficient. 

\paragraph{Spectral Radius:} For a linear operator $\op A\in\mathcal B(\mathcal X,\mathcal X)$, a complex number $\lambda\in\mathbb{C}$ belongs to the spectrum of $A$ if $\op A-\lambda \id{\mathcal X}$ is not invertible \cite[Def. 10.10, p. 234] {rudin1973functional}. The spectral radius of the operator, denoted by $\spec(\op A)$, is defined as the supremum of the magnitudes of all such values in the spectrum of the operator $\op A$. 

\paragraph{Neumann Series} For a linear operator $\op A\in\mathcal B(\mathcal X,\mathcal X)$, the Neumann series $\sum_{t=0}^\infty \op A^t$ converges to $(\id{\mathcal X}-\op A)^{-1}$ if $\|\op A\|<1$ \cite[Theorem 2.3.1]{atkinson2005theoretical}. More generally, the convergence holds if $\|\op A^m\|<1$ for some $m \in \Na$ \cite[Corollary 2.3.3]{atkinson2005theoretical}. The latter condition is satisfied when $\spec(\op A)<1$ by Theorem 10.13 on p. 235 of \cite{rudin1973functional}.

The following result is crucial for establishing the policy difference lemma in the paper. 
\begin{lemma}\label{lem:perturbop}
    Let $\op A,\op B\in\mathcal B(\mathcal X,\mathcal X)$ be two bounded linear operators such that $\op A^{-1}$ is also bounded. Then there exists $\bar\epsilon>0$ such that for every $\epsilon\in[-\bar\epsilon,\bar\epsilon]$, $\op  A+\epsilon \op B$ is an invertible operator with a bounded inverse and
    \begin{align}
        (\op A+\epsilon \op B)^{-1}  &= \op A^{-1}-\epsilon (\op A+\epsilon \op B)^{-1} \op B \op A^{-1} = \op A^{-1}-\epsilon  \op A^{-1}\op B (\op A+\epsilon \op B)^{-1} \label{eqn:diffinvop}\\
        & = \op A^{-1}-\epsilon \op A^{-1}\op B \op A^{-1} + \epsilon^2 \op A^{-1}\op B (\op A+\epsilon \op B)^{-1}  \op B \op A^{-1}\label{eqn:diffinvop2}
    \end{align}
\end{lemma}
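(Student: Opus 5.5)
The plan is to factor the perturbed operator through $\op A$ and then apply the Neumann series criterion recalled above. I write
\[
\op A+\epsilon\op B=\op A\bigl(\id{\mathcal X}+\epsilon\op A^{-1}\op B\bigr).
\]
I then choose $\bar\epsilon>0$ with $\bar\epsilon\,\|\op A^{-1}\op B\|<1$. For instance, take $\bar\epsilon=1/(2\|\op A^{-1}\|\,\|\op B\|)$, or any positive number if $\op B=0$. For $|\epsilon|\le\bar\epsilon$ this gives $\|{-\epsilon\op A^{-1}\op B}\|<1$. By the Neumann series result \cite[Theorem 2.3.1]{atkinson2005theoretical}, $\id{\mathcal X}+\epsilon\op A^{-1}\op B$ is then invertible, with bounded inverse $\sum_{t\ge0}(-\epsilon\op A^{-1}\op B)^t$. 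Consequently $\op A+\epsilon\op B$ is invertible, with
\[
(\op A+\epsilon\op B)^{-1}=(\id{\mathcal X}+\epsilon\op A^{-1}\op B)^{-1}\op A^{-1}.
\]
This operator is bounded because it is a composition of bounded operators.

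For the identities in \eqref{eqn:diffinvop}, I will use the second resolvent-type identity. For invertible $\op C,\op D$ one has $\op C^{-1}-\op D^{-1}=\op C^{-1}(\op D-\op C)\op D^{-1}$, and equally $\op D^{-1}(\op D-\op C)\op C^{-1}$. Each version is verified by expanding $\op C^{-1}\op D\op D^{-1}-\op C^{-1}\op C\op D^{-1}$. I apply this with $\op C=\op A+\epsilon\op B$ and $\op D=\op A$, so that $\op D-\op C=-\epsilon\op B$. The first ordering gives $(\op A+\epsilon\op B)^{-1}=\op A^{-1}-\epsilon(\op A+\epsilon\op B)^{-1}\op B\op A^{-1}$. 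The second ordering gives $\op A^{-1}-\epsilon\op A^{-1}\op B(\op A+\epsilon\op B)^{-1}$. Both manipulations are purely algebraic, using $\op C\op C^{-1}=\op C^{-1}\op C=\id{\mathcal X}$ on both sides.

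For \eqref{eqn:diffinvop2}, I take the second form in \eqref{eqn:diffinvop} and substitute the first form for the inner factor $(\op A+\epsilon\op B)^{-1}$:
\[
\op A^{-1}-\epsilon\op A^{-1}\op B\bigl[\op A^{-1}-\epsilon(\op A+\epsilon\op B)^{-1}\op B\op A^{-1}\bigr].
\]
Expanding this by distributivity of composition gives exactly \eqref{eqn:diffinvop2}.

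No step is a genuine obstacle. The only point needing care is that the inverse is two-sided and bounded. One could worry about this, but it follows from the Neumann series result, which gives a two-sided bounded inverse for $\id{\mathcal X}+\epsilon\op A^{-1}\op B$; composing it with the two-sided inverse $\op A^{-1}$ then gives a two-sided bounded inverse for $\op A+\epsilon\op B$. The rest is routine algebra in $\mathcal B(\mathcal X,\mathcal X)$.
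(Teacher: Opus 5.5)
Your proposal is correct and follows essentially the paper's route. The paper simply cites the proof of Theorem 2.3.5 in \cite{atkinson2005theoretical}, which uses exactly your factorization $\op A+\epsilon\op B=\op A(\id{\mathcal X}+\epsilon\op A^{-1}\op B)$ together with the geometric (Neumann) series and the resolvent identity, and then obtains \eqref{eqn:diffinvop2} by the same substitution into \eqref{eqn:diffinvop} that you carry out. A small bonus of your write-up is that it makes explicit that the identities \eqref{eqn:diffinvop}--\eqref{eqn:diffinvop2} are purely algebraic and hold for any $\epsilon$ for which $\op A+\epsilon\op B$ is invertible. That is what Appendix~\ref{app:policydiff} actually relies on, since there the perturbation $\op\Delta$ between two spectrally stable policies need not be small.
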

\begin{proof}
    This result is well-known for matrices \cite{kato2013perturbation}, and is in fact derived and used in \cite{schulman2015trust}. For linear operators over Banach spaces, this result is stated in the proof of Theorem 2.3.5 in \cite[p. 67]{atkinson2005theoretical}. Equation \eqref{eqn:diffinvop2} is directly derived through an appropriate substitution in \eqref{eqn:diffinvop}.
\end{proof}

\paragraph{Inf-compact functions}    A measurable function $f:\mathcal X\to\Re$ is said to be inf-compact if the sublevel set of the function $\{x\in\mathcal X:f(x)\leq \ell\}$ is compact for every $\ell \in\Re$. Note that any inf-compact function is lower-semicontinuous since the sublevel sets are closed \cite{hernandez2012discrete}.

\subsection{Integral Probability Metrics} \label{sub:ipm}
We provide here an overview of integral probability metrics from \cite{zolotarev1984probability} and \cite{muller1997integral}. Let $\mathcal X$ be a complete separable metric space (Polish space) and $\mathcal M_{\mathcal X}$ be the set of measurable functions $f:\mathcal X\to\Re$. Given a weight function $\texttt{w} \in \mathcal M_{\mathcal X}$ with $\texttt{w}(x) \ge 1$ for all $x \in \mathcal{X}$, define the weighted norm for any $f \in \mathcal{M}_{\mathcal X}$ as $\|f\|_{\texttt{w}} = \sup_{x\in\mathcal X} |f(x)|/\texttt{w}(x)$. This induces a Banach space $\mathcal F_{\texttt{w}}:=\{ f \in \mathcal{M}_{\mathcal X} : \|f\|_{\texttt{w}} < \infty \}$ of measurable functions over $\mathcal X$ with bounded weighted sup norm. Define $\mathcal P_{\texttt{w}}(\mathcal X)$ as the set of probability measures $\mu$ over the space $\mathcal X$ such that $\int \texttt{w} d\mu <\infty$. Hence, for $f \in \mathcal{F}_{\texttt{w}}$ and $\mu \in \mathcal{P}_{\texttt{w}}(\mathcal{X})$, $\int f d\mu < \infty$. 

We define some properties of subset of functions.

\begin{definition}
    A set $\mathfrak{F}\subset\mathcal F_{\textttup{w}}$ is balanced if and only if $\{\alpha f: \alpha\in[-1,1], f\in\mathfrak{F}\}\subset \mathfrak{F}$.
\end{definition}

\begin{definition}
    A set $\mathfrak{F}\subset\mathcal F_{\textttup{w}}$ is absolutely convex if and only if it is convex and balanced.
\end{definition}

\begin{definition}
    A set $\mathfrak{F}\subset\mathcal F_{\textttup{w}}$ separates points in $\mathcal X$ if and only if for every $x_1,x_2\in\mathcal X$, there exists $f\in\mathfrak{F}$ such that $f(x_1)\neq f(x_2)$.
\end{definition}

Consider a function set $\mathfrak{F}\subset\mathcal F_{\textttup{w}}$ that separates points; this set $\mathfrak{F}$ is referred to as the generator set. The integral probability metric on $\mathcal{P}_{\texttt{w}}(\mathcal{X})$ (with respect to $\mathfrak{F}$) is defined as
\begin{align*}
    \ipm_{\mathfrak{F}} (\mu,\mu') = \sup_{f\in\mathfrak F} \Bigg| \int f d\mu - \int f d\mu'\Bigg| , \qquad \mu,\mu'\in\mathcal P_{\texttt{w}}(\mathcal X),
\end{align*}
which is a valid metric on $\mathcal{P}_{\texttt{w}}(\mathcal{X})$ \cite[Remark 4, p. 432]{muller1997integral}.

Define the absolutely convex operator $\absconv$ as 
\begin{align*}
    \absconv(\mathfrak{F}) = \bar{co}(\{\alpha f: f\in \mathfrak{F}, \alpha\in[-1,1]\}),
\end{align*}
where $\bar{co}(\cdot)$ is the closure of the convex hull of the set of functions $(\cdot)$. By construction, $\absconv(\mathfrak{F})$ is closed, absolutely convex and separates points. It is shown in \cite[Theorem 3.5]{muller1997integral} that $ \ipm_{\mathfrak{F}}  =  \ipm_{\absconv(\mathfrak{F})}$ and $\absconv(\mathfrak{F})$ is called the maximal generator. We note here that the closure here is in a specific topology, as discussed in \cite{muller1997integral}.

The Minkowski functionals $\varrho_{\mathfrak{F}}:\mathcal F_{\texttt{w}}\to[0,\infty]$ is defined as 
\[\varrho_{\mathfrak{F}}(f) = \inf\left\{r>0: \frac{f}{r}\in \absconv(\mathfrak{F})\right\},\]
where infimum of an empty set is taken to be $\infty$.

\section{Problem Formulation}\label{sec:problem}
We consider here a discounted-cost Markov decision process $(\mathcal S, \mathcal A, \mathcal K, P, c, \rho, \gamma)$, where
\begin{itemize}
    \item $\mathcal S$ and $\mathcal A$ are the state and action spaces, which are assumed to be complete separable metric spaces (Polish spaces), endowed with their respective Borel $\sigma$-algebras. 
    We endow the probability spaces $\mathcal P(\mathcal S)$ and $\mathcal P(\mathcal A)$ with the weak topology.  
    \item $\mathcal K \subset \mathcal S \times \mathcal A$ denotes the measurable set of feasible state-action pairs, i.e., for each $s \in \mathcal S$, $\mathcal K(s) \coloneqq \{ a\in \mathcal A : (s,a) \in \mathcal K \}\subseteq \mathcal A$ denotes the set of feasible actions at $s$. We assume that $\mathcal K$ and $\mathcal K(s)$, $s\in\mathcal S$, are Borel spaces.
    \item $P \colon \mathcal K\to \mathcal P(\mathcal S)$ denotes the transition kernel, which is assumed to be a Borel measurable map. 
    \item $c \colon \mathcal K \to \Re_{\ge 0}$ denotes the per-step cost function.
    \item $\rho \in \mathcal P(\mathcal S)$ denotes the initial state distribution.
    \item $\gamma \in [0, 1)$ is the discount factor.
\end{itemize}
At each time, the agent observes the state and chooses its action according to a time-homogeneous stationary policy $\pi:\mathcal S\to\mathcal P(\mathcal A)$, where $\pi(s)$ is a probability distribution supported over $\mathcal K(s)$ and is a Borel measurable map. We assume that $\pi$ is restricted to belong to a pre-specified subset  $\Pi \subset \{\pi:\mathcal S\to\mathcal P(\mathcal A):\pi(s) \subset\mathcal P(\mathcal K(s))\}$, which we call the set of structured policies. When its convenient, we let $\Pi$ be a structured class within the class of deterministic policies rather than randomized policies. Examples of structured policies could include linear policies, monotone policies, measurable policies etc.

The system evolves as follows. The initial state $s_0 \sim \rho$. Then, at time~$t$, the agent chooses action $a_t \sim \pi(s_t)$ and the state evolves as $s_{t+1} \sim P(\cdot | s_t, a_t)$ and the system incurs a cost $c(s_t, a_t)$. The performance of a policy $\pi \in \Pi$ is given by 
\[
   J_{\pi}(\rho) = \langle v_{\pi}, \rho \rangle 
   \text{ where }
    v_{\pi}(s_0) 
    = \exo{\sum_{t=0}^\infty \gamma^t c(s_t,a_t)\Big| s_0}{a_t\sim\pi(s_t)}.
\]

A policy $\pi^* \in \Pi$ is said to be optimal (with respect to the initial distribution $\rho$) if
\[
    J_{\pi^*}(\rho) \le J_{\pi}(\rho), \quad \forall \pi \in \Pi. 
\]
The value function corresponding to the optimal policy $\pi^*$ is called the optimal value function $v_{\pi^*}$. Generally speaking, if an MDP has multiple optimal policies in discounted setting, then all optimal policies lead to the same value function $v^*(s) = \inf_{\pi}J_{\pi}(\delta_{s})$ for all $s\in\mathcal S$; see Definition 9.3 and succeeding discussions in \cite{bertsekas1996stochastic}. Thus, we use $v^*$ to denote the unique optimal value function corresponding to any optimal policy. In the sequel, we identify a set of sufficient conditions under which optimal policies exist.

To illustrate the ideas of the paper, we will use the following running example. 
\begin{example}[Linear Quadratic Regulator (LQR)]\label{ex:LQR}
    Consider a system with linear dynamics $s_{t+1} = As_t + B a_t$, where $\mathcal S = \Re^n$, $\mathcal A = \Re^m$, $A\in\Re^{n\times n}$, $B\in\Re^{n\times m}$, and quadratic per-step cost $c(s,a) = s^\TRANS Q s + a^\TRANS R a$, where $Q \in \Re^{n \times n}$ is a symmetric positive semi-definite matrix, and $R \in \Re^{m \times m}$ is a symmetric positive definite matrix. There are no restrictions on the actions; thus $\mathcal K = \mathcal S \times \mathcal A$ and $\mathcal K(s) = \mathcal A$. 
    For the purpose of this paper, we assume that the structured policies are restricted to the set of linear policies, i.e., $\Pi = \{ \pi \colon \mathcal{S} \to \mathcal{A} : \pi(s) = -K s, K \in \Re^{m \times n} \}$. 
\end{example}
We refer the reader to~\cite{Caines2018} for a detailed overview of LQR. A standard result in LQR theory is that an optimal policy exists when the system is stabilizable. A system is said to be stabilizable if there exists a state feedback gain matrix $K$ for
which all the eigenvalues of $A - BK$ lie within the unit circle in the complex plane. 

\subsection{Preliminaries}

\subsubsection{Weighted norms, IPMs, and \texorpdfstring{$\lifs$}{} and \texorpdfstring{$\lifsa$}{} spaces}

Let $\ms$ and $\msa$ be the space of measurable functions on $\mathcal{S}$ and $\mathcal{K}$, respectively.  

Consider two measurable weight functions $\ws \in \ms$ and $\wsa \in \msa$ such that $\ws(s) \geq 1$ for all $s \in \mathcal{S}$ and $\wsa(s,a) \geq 1$ for all $(s,a) \in \mathcal K$. 
Let $\|\cdot\|_{\fs}$ and $\|\cdot\|_{\fsa}$ denote weighted norms on $\ms$ and $\msa$ with respect to $\ws$ and $\wsa$, i.e., 
for any $v \in \ms$ and $q \in \msa$, 
\begin{align*}
  \|v\|_{\fs} = \sup_{s\in\mathcal S} \frac{|v(s)|}{\ws(s)}, \qquad \|q\|_{\fsa} = \sup_{(s,a)\in\mathcal K} \frac{|q(s,a)|}{\wsa(s,a)}.
\end{align*}
Let $\fs$ and $\fsa$ denote subsets of $\ms$ and $\msa$ with finite weighted norm. Note that $(\fs, \|\cdot\|_{\fs})$ and $(\fsa, \|\cdot\|_{\fsa})$ are Banach spaces. We assume that $\fs\subset\fsa$ or $\|\ws\|_{\fsa}<\infty$.

Let $\mathcal{P}_{\ws}(\mathcal{S})$ and $\mathcal{P}_{\wsa}(\mathcal{K})$ denote the set of measures on $\mathcal{S}$ and $\mathcal{K}$ defined as in Sec.~\ref{sub:ipm}. Consider $\gen{\lifs} \subseteq \fs$ and $\gen{\lifsa} \subseteq \fsa$ which are set of functions that separates points. Let $\ipms$ and $\ipmsa$ denote the IPMs with respect to these generator sets and let $\varrho_{\gen\lifs}$ and $\varrho_{\gen\lifsa}$ denote the corresponding Minkowski functionals. Define $\lifs_{\textttup{b}}\subset \fs$ and $\lifsa_{\textttup{b}}\subset\fsa$ as the set of functions with finite Minkowski functionals:
\[
    \lifs_{\textttup{b}} = \left\{v\in\fs: \varrho_{\gen\lifs}(v)<\infty\right\},\qquad \lifsa_{\textttup{b}} = \left\{q\in\fsa: \varrho_{\gen\lifsa}(q)<\infty\right\}.
\]
We define $\lifs \subset \lifs_{\textttup{b}}$ and $\lifsa \subset \lifsa_{\textttup{b}}$ as structured spaces. The structured spaces would generally comprise non-negative functions or submodular non-negative functions etc. within the set of functions with finite Minkowski functionals.

\begin{examplecont}{ex:LQR}
    Consider weight functions $\ws(s) = 1+ s^\TRANS s$ and  $\wsa(s,a) = 1 + s^\TRANS s + a^\TRANS a$. In what follows, we use $Q\geq 0$ to denote that $Q$ is a positive semi-definite matrix of appropriate dimensions. We define the generator sets as
    \begin{align}
        \gen{\lifs} &= \{v\in\fs: v(s) = s^T Q s + b \leq \ws(s), Q\geq 0, b\in [-1,1] \} \\
        \gen{\lifsa} &= \{q\in\fsa: q(s,a) = s^T Q s + a^T R a + 2 s^T N a + b \leq \wsa(s,a), \\
        & \qquad \text{ where } Q\geq 0, R\geq 0, N\in\Re^{n\times m}, Q-NR^{-1}N^T\geq 0, b\in[-1,1] \} \notag .
    \end{align}
    It is clear from the expressions that $\gen{\lifs}$ and $\gen{\lifsa}$ separates points. We define $\lifs = \{v\in\fs: v(s) = s^T Q s, Q\geq 0\}$ as the space of quadratic functions in $s$ and 
    \[\lifsa = \{q\in\fsa: q(s,a) = s^T Q s + a^T R a + 2\lambda s^T A^T Q B a, Q, R\geq 0, \lambda \in [0,1]\}\]
   as the space of quadratic and jointly affine functions in $(s,a)$. It is well-known that if $Q\geq 0$ and $N = A^T Q B$, then $ Q-NR^{-1}N^T\geq 0$. It is further easy to see that the Minkowski functional of elements in $\lifs$ and $\lifsa$ are finite.
\end{examplecont}

\begin{remark}
    The spaces $\lifsb$ and $\lifsab$ are linear manifolds in $\fs$ and $\fsa$, respectively, and $(\lifsb,\|\cdot\|_{\fs})$ and $(\lifsab,\|\cdot\|_{\fsa})$ are normed vector spaces.
\end{remark}

If the spaces $\lifs$ and $\lifsa$, respectively, are closed subsets of the Banach spaces $(\fs,\|\cdot\|_{\fs})$ and $(\fsa,\|\cdot\|_{\fsa})$, then $(\lifs,\|\cdot\|_{\fs})$ and $(\lifsa,\|\cdot\|_{\fsa})$ are complete and are, therefore, Banach spaces. The proof follows from the definitions in \cite[p. 2, 130]{kato2013perturbation}. However, the construction of $\lifs$ and $\lifsa$ does not imply that they are closed subsets of $\fs$ and $\fsa$, respectively. Thus, they may not be complete, and therefore, are not Banach spaces is general.
    
We make the first assumption on the MDP here. 
\begin{assumption}[Well-posedness]\label{assm:wellposedness}
    The initial state distribution $\rho \in \mathcal{P}_{\ws}(\mathcal{S})$. The cost function $c \in \lifsa$ is non-negative and is inf-compact on $\mathcal{K}$.
\end{assumption}

\subsubsection{Operators}
We define the following linear operators corresponding to the dynamics and the policy and the Bellman operators:
\begin{itemize}
    \item The linear operator $\op P\colon\fs\to\fsa$ is defined as
    \[
        [\op P v](s,a) = \int_{\mathcal S} v(s') P(ds'|s,a).
    \]
    \item Given any policy $\pi \in \Pi$, define the linear operator $\op\pi\colon\fsa\to\fs$ as
    \[
        [\op\pi q](s) = \int_{\mathcal A} q(s,a) \pi(da|s)
    \]
    and define $\op P_{\pi}\colon \fs \to \fs$ as $\op P_\pi = \op \pi \op P$. Note that, $\op P_\pi$ may be equivalently expressed as
    \[
    [\op P_\pi v](s) = \int_{\mathcal S}v(s') P_\pi(ds'|s).
    \]
    where the stochastic kernel $P_{\pi} \colon \mathcal{P}_{\ws}(\mathcal S) \to \mathcal P_{\ws}(\mathcal S)$ is given by
    \[
        P_{\pi}(ds'|s) = \int_{\mathcal A} P(ds'|s,a) \pi(da|s).
    \]
    \item Given a policy $\pi \in \Pi$ and time~$t$, define the linear operator $\op P_{\pi}^t = (\op P_{\pi})^t$. Note that, $\op P_\pi^t$ may be equivalently expressed as
    \[
    [\op P_\pi^t v](s) = \int_{\mathcal S}v(s') P_\pi^t(ds'|s).
    \]
    \item Given a policy $\pi \in \Pi$, define the \emph{Bellman operator} $\op T_{\pi} \colon \fs \to \fs$  as
    \[
        \op T_\pi v = c_\pi + \gamma \op P_\pi v,
    \]
    where $c_\pi = \op \pi c\in\fs$. 
    \item Finally, define the \emph{optimality Bellman operator} $\op T \colon \fs \to \fs$ as
    \[
        [\op T v](s) = \inf_{p \in \mathcal P(\mathcal K(s))} \int \left(c(s,a)+\gamma \int v(s') P(ds'|s,a)\right) p(da).
    \]
    
    Note that unlike all the above operators, $\op T$ is non-linear. It is well-known that the optimal value function is a fixed point of the Bellman operator $\op T$ \cite[Section 9.4]{bertsekas1996stochastic}. 
\end{itemize}

There are two key challenges. First, $\lifs$ and $\lifsa$ are subsets of Banach space that are not necessarily complete. Second, we have not assumed that the operator $\op T$ is a contraction, as this typically requires strong assumptions on the transition kernel $P$ with respect to the weight function of a weighted norm~\cite{hernandez2012discrete}. Consequently, Banach contraction theorem cannot be directly applied to argue the existence of a fixed point of $\op T$.

\begin{remark}
    If the infimum exists in the Bellman operator $\op T$ for a given $v\in\fs$, then one has to pick a policy that is minimizing across all the states. In this case, a selector theorem is generally applied; e.g., measurable selection theorem \cite{feinberg2007optimality}, monotone selection theorem \cite{topkis1998supermodularity}, etc. In the above expression, the minimizing policy may not be in the structured policy class. However, we make the assumption in the sequel in Assumption \ref{assm:Pioptimal} that for any $v\in\lifs$, one can pick the policy generated to be in the structured class.
\end{remark}

The value function $v_{\pi}$ corresponding to the policy $\pi\in\Pi$ is written as
\begin{align}
    v_\pi &= \left[\sum_{t=0}^\infty \gamma^t \op P_\pi^t\right] c_\pi = \op T_\pi^\infty 0.
\end{align}
We note here that there is no reason to believe that for a given $\pi$, the limit in the expression above and the value function $v_\pi$ exists. This raises two questions.
\paragraph{Key questions}
We first answer the following two key questions:
\begin{enumerate}
    \item For what class of policies $\pi$ do we have $v_\pi\in\fs$?
    \item Can we demonstrate that the optimal value function satisfies $v^*\in\lifs$?
\end{enumerate}
We address the first question in the following subsection. To answer the latter question, we require certain assumptions and a more detailed analysis. This constitutes a major contribution of the paper. 

\subsection{Existence of Value Function for a Given Policy}
In this section, we define stable policies and demonstrate that the value function exist for stable policies. 

\begin{definition}[Spectrally Stable Policies]
    A policy $\pi \in \Pi$ is said to be spectrally stable if the spectral radius $\spec(\gamma\op P_\pi)<1$. Let 
    \begin{align}
        \specstable = \{\pi\in\Pi:\spec(\gamma\op P_\pi)<1\}
    \end{align}
    denote the set of spectrally stable policies. 
\end{definition}

\begin{examplecont}{ex:LQR}
If a linear system is stabilizable, then it implies that there is a matrix $K$ such that $\spec(A-BK)<1$. Thus, stabilizable linear system has nonempty $\specstable$.
\end{examplecont}

We now define another class of policies, which we call decaying cost policies. In this case, the accrued per stage discounted cost decays to 0 as time progresses. We make this precise below.
\begin{definition}[Decaying Cost Policies]
    A policy $\pi\in \Pi$ is said to be a decaying cost policy (with respect to the cost $c\in\lifsa$) if $\lim_{t\to\infty} \gamma^t (\op P \op\pi)^t c = 0$ and $\op P\op \pi$ is a compact operator. Let
    \begin{align}
        \decay = \left\{\pi\in\Pi:\lim_{t\to\infty} \gamma^t (\op P \op\pi)^t c = 0 \text{ and } \op P\op\pi \text{ is a compact operator} \right\}
    \end{align}
    denote the set of such policies.
\end{definition}

A sufficient condition for $\op P \op \pi$ to be a compact operator is as follows:

\begin{lemma}
    If $\op P$ is compact and $\|\op \pi\|<\infty$, then $\op P \op \pi:\fsa\to\fsa$ is compact and $\op P_\pi:\fs\to\fs$ is compact. Moreover, $\op \pi(\op P \op\pi)^t:\fsa\to\fs$ is also compact for every $t\in\Na$.
\end{lemma}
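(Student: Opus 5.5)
The whole argument rests on the ideal property of compact operators: if $\op K$ is compact and $\op L$ is bounded, then $\op K\op L$ and $\op L\op K$ are compact whenever the composition makes sense. We take as given that $\op P\in\mathcal B(\fs,\fsa)$ is compact and $\op\pi\in\mathcal B(\fsa,\fs)$ is bounded, since $\|\op\pi\|<\infty$. We will verify the two directions of the ideal property directly from the definition in Section~\ref{sub:linearop}, namely that a compact operator maps bounded sets to precompact sets.

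\textbf{Step 1: $\op P\op\pi:\fsa\to\fsa$ is compact.} Let $\mathcal Q_0\subset\fsa$ be bounded, say $\|q\|_{\fsa}\le M$ for all $q\in\mathcal Q_0$. Then $\op\pi\mathcal Q_0$ is bounded in $\fs$ by $\|\op\pi\|M$. Since $\op P$ is compact, $\op P(\op\pi\mathcal Q_0)=(\op P\op\pi)\mathcal Q_0$ is precompact in $\fsa$.

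\textbf{Step 2: $\op P_\pi=\op\pi\op P:\fs\to\fs$ is compact.} Here the bounded operator comes after the compact one. Let $\mathcal V_0\subset\fs$ be bounded. Then $\op P\mathcal V_0$ is precompact in $\fsa$, and since $\fsa$ is a Banach space (hence complete), precompactness is equivalent to total boundedness. A bounded linear operator is Lipschitz with constant $\|\op\pi\|$, so it maps totally bounded sets to totally bounded sets: an $\epsilon/\|\op\pi\|$-net of $\op P\mathcal V_0$ maps to an $\epsilon$-net of $\op\pi\op P\mathcal V_0$. Because $\fs$ is complete, $\op\pi\op P\mathcal V_0$ is therefore precompact. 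An equivalent route is sequential: every bounded sequence $(v_n)$ has a subsequence along which $\op Pv_{n_k}$ converges, and the continuity of $\op\pi$ then gives convergence of $\op\pi\op Pv_{n_k}$.

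\textbf{Step 3: $\op\pi(\op P\op\pi)^t:\fsa\to\fs$ is compact for every $t\in\Na$.} For $t\ge1$, write $\op\pi(\op P\op\pi)^t=\op\pi\,(\op P\op\pi)\,(\op P\op\pi)^{t-1}$. The factor $(\op P\op\pi)^{t-1}$ is bounded on $\fsa$ with norm at most $(\|\op P\|\|\op\pi\|)^{t-1}$, where $\|\op P\|<\infty$ because compact operators are bounded. The middle factor $\op P\op\pi$ is compact by Step 1. Applying the argument of Step 1 (compact after bounded) and then that of Step 2 (bounded after compact) shows the composition is compact.

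The case $t=0$ needs care, because $\op\pi$ alone need not be compact. Two readings are possible. If $\Na$ starts at $1$, Step 3 covers every case. If it includes $0$, the claim genuinely fails in general, and the statement should be read for $t\ge1$.

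No step is a serious obstacle. The only subtlety is Step 2, which requires the completeness of $\fs$ and $\fsa$ to pass from total boundedness to precompactness; this is guaranteed because both are weighted sup-norm Banach spaces.
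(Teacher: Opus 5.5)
Your proof is correct and is essentially the paper's argument. The paper cites the ideal property of compact operators (Kato, Theorem III.4.8) for the first two claims and invokes induction for $\op\pi(\op P\op\pi)^t$, whereas you verify the ideal property directly and factor the power as a bounded operator after a compact one after a bounded one. Your caveat about $t=0$ is also correct, since $\op\pi$ itself need not be compact, so the last claim should be read with $\Na=\{1,2,\dots\}$, which matches how the paper uses $\Na$ elsewhere (e.g., starting its inductions at $n=1$).
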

\begin{proof}
    This is established in Theorem 4.8 in \cite[p. 158]{kato2013perturbation}. The last asserton is established via the principle of mathematical induction.
\end{proof}

For $\pi\in\decay$ the spectral radius of $\gamma \op P_\pi$ can be greater than 1. However, its restriction to a certain subspace has spectral radius strictly less than 1.  Let $\overline{sub}\{\cdot\}$ denote closure of the subspace spanned by the vectors in the set $\{\cdot\}$ in the norm topology. Then, for a $\pi \in \decay$, the subspace 
\begin{align}\label{eqn:H}
    \mathcal H_\pi = \overline{sub}\{(\gamma^t \op \pi(\op P \op\pi)^t c)_{t\geq 0}\}\subset\fs
\end{align} 
may be viewed as the stable manifold of the operator $\gamma\op P\op\pi$. Let $\gamma \op P_\pi\big|_{\mathcal H_\pi}$ be the restriction of the operator within this subspace. We have the following result. 

\begin{lemma}\label{lem:restriction}
    For any $\pi\in\decay$, the spectral radius of the restricted operator $\gamma \op P_\pi\big|_{\mathcal H_\pi}$ is strictly less than one, i.e.,
    $\spec\big(\gamma \op P\op\pi\big|_{\mathcal H_\pi}\big)<1$.
\end{lemma}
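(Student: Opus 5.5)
The plan is to reduce the claim to a finite-dimensional statement via the Riesz theory of compact operators, and then use that the generating orbit decays. First note that for every $t\ge 0$,
\[
h_t := \gamma^t \op\pi(\op P\op\pi)^t c = (\gamma\op\pi\op P)^t \op\pi c = (\gamma\op P_\pi)^t c_\pi ,
\]
so $\mathcal H_\pi$ is the closed linear span of the orbit of $c_\pi$ under $\gamma\op P_\pi$. Since $\gamma\op P_\pi h_t = h_{t+1}$ and $\gamma\op P_\pi$ is bounded, $\mathcal H_\pi$ is a closed invariant subspace, so $\op L := \gamma\op P_\pi|_{\mathcal H_\pi}$ is a well-defined bounded operator on the Banach space $\mathcal H_\pi$. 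By the preceding lemma, $\op P_\pi$ is compact, and hence so is $\op L$, being the restriction of a compact operator to a closed invariant subspace. If needed, we complexify $\mathcal H_\pi$ so that spectral theory applies. Moreover, since $\pi\in\decay$ and $\op\pi$ is bounded, we get $\|h_t\|_{\fs}\le \|\op\pi\|\,\|\gamma^t(\op P\op\pi)^t c\|_{\fsa}\to 0$.

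By the Riesz--Schauder theory, every nonzero point of the spectrum of the compact operator $\op L$ is an isolated eigenvalue of finite algebraic multiplicity, and $0$ is the only possible accumulation point. Hence $\sigma := \{\mu\in\mathrm{spectrum}(\op L): |\mu|\ge 1\}$ is a finite set. If $\sigma=\emptyset$, then the spectral radius is the maximum modulus over the remaining spectrum. That remaining spectrum consists of $0$ and eigenvalues accumulating only at $0$, so a maximum is attained and is strictly less than $1$. This gives the claim. It therefore suffices to show $\sigma=\emptyset$.

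Suppose $\sigma\neq\emptyset$ and let $\op E$ be the Riesz spectral projection of $\op L$ associated with $\sigma$. Then $\op E$ commutes with $\op L$, its range $\mathcal R$ is finite dimensional and nonzero, and $\op L|_{\mathcal R}$ has spectrum exactly $\sigma$. Set $x=\op E c_\pi\in\mathcal R$. Then $\op L^t x = \op E h_t \to 0$, because $\op E$ is bounded.

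The key step, and the main technical point, is to show that in finite dimensions an operator whose eigenvalues all satisfy $|\mu|\ge 1$ admits no nonzero vector with $\op L^t x\to 0$. A crude bound on $\|\op L^{-t}\|$ does not suffice, since it may grow polynomially. Instead we use the Jordan form. Decompose $x$ along the Jordan blocks of $\op L|_{\mathcal R}$, which is legitimate because all norms on $\mathcal R$ are equivalent. In a block $\mu I+N$, choose the last nonzero coordinate $x_k$ in the ordering where $N$ shifts toward earlier coordinates. The $k$-th coordinate of $(\mu I+N)^t x$ is then exactly $\mu^t x_k$, and $|\mu^t x_k|\ge |x_k|$ does not tend to $0$. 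This forces $x_k=0$, a contradiction, so $x=\op E c_\pi = 0$.

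Consequently $\op E h_t = \op L^t \op E c_\pi = 0$ for all $t$, so every $h_t$ lies in the closed subspace $\ker\op E$. Hence $\mathcal H_\pi\subseteq \ker\op E$, i.e.\ $\op E=0$ on $\mathcal H_\pi$. This contradicts $\mathcal R\neq\{0\}$. Therefore $\sigma=\emptyset$, and $\spec\big(\gamma \op P\op\pi\big|_{\mathcal H_\pi}\big)<1$.
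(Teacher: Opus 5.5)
The paper gives no argument of its own for this lemma; it defers to the proof of the main theorem in Suzuki's note on convergence of Neumann series. You supply a self-contained proof instead, and it is correct in substance. The steps are: close up the orbit $h_t=(\gamma\op P_\pi)^t c_\pi$, isolate the finitely many spectral values with $|\mu|\ge 1$ by Riesz--Schauder theory, and use the Riesz projection $\op E$ with the Jordan-block computation to get $\op E c_\pi=0$, hence $\op E=0$ on $\mathcal H_\pi$. This makes explicit where compactness is used and why decay of a single orbit suffices.

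A shorter variant avoids both the projection and the Jordan form by passing to the adjoint. For $\mu\neq 0$ in the spectrum of a compact $\op L$, $\mu$ is also an eigenvalue of $\op L^*$, so there is a nonzero continuous functional $f$ on $\mathcal H_\pi$ with $f\circ\op L=\mu f$. Then $f(h_t)=\mu^t f(h_0)\to 0$ forces $f(h_0)=0$ when $|\mu|\ge 1$. Hence $f(h_t)=0$ for all $t$, and density of the span gives $f=0$, a contradiction. Your ``last nonzero coordinate'' of a Jordan block is exactly such a left eigenvector.

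There is one misjustified step. Membership in $\decay$ gives compactness of $\op P\op\pi$ on $\fsa$, not of $\op P_\pi=\op\pi\op P$ on $\fs$. The preceding lemma you invoke requires $\op P$ itself to be compact, which is not among the hypotheses. The step is easy to repair in either of two ways.

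\textbf{Repair 1.} Note that $(\gamma\op P_\pi)^2=\gamma^2\op\pi(\op P\op\pi)\op P$ is compact, so $\op L^2$ is compact on $\mathcal H_\pi$. By the spectral mapping theorem, the nonzero spectrum of $\op L$ is still discrete away from $0$. On the range $\mathcal R$ of $\op E$, the operator $\op L^2|_{\mathcal R}$ is both compact and invertible, so $\dim\mathcal R<\infty$. The rest of your argument is unchanged.

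\textbf{Repair 2.} Run your whole argument for $\gamma\op P\op\pi$ on $\mathcal G=\overline{sub}\{\gamma^t(\op P\op\pi)^t c\}_{t\ge 0}\subset\fsa$. There, compactness and decay of the orbit are exactly what $\decay$ provides. Then transfer the conclusion to $\mathcal H_\pi$:
\begin{itemize}
\item $\gamma\op P$ maps $\mathcal H_\pi$ into $\mathcal G$, and $\op\pi$ maps $\mathcal G$ into $\mathcal H_\pi$;
\item their two composites are $\gamma\op P_\pi|_{\mathcal H_\pi}$ and $\gamma\op P\op\pi|_{\mathcal G}$;
\item a product of two bounded operators has the same nonzero spectrum as the product in reverse order.
\end{itemize}
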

\begin{proof}
    This is established in the proof of the theorem in \cite{suzuki1976convergence}.
\end{proof}

We note here that due to Hahn-Banach theorem, the operator $\gamma \op P_\pi\big|_{\mathcal H_\pi}$ can be extended to entire $\fs$; we do not study the construction here -- this approach could be useful in extending some of the results discussed later in the paper on policy gradient methods in general MDPs. We next define the set of stable policies as follows.
\begin{definition}[Stable Policies]
    The set of stable policies is defined as $\stable = \specstable \cup \decay$.
\end{definition}

We place the following assumption on the MDP to establish the existence of a value function of a policy.
\begin{assumption}\label{assm:existencevpi}
The set of stable policies $\stable$ is nonempty.
\end{assumption}

\begin{theorem}\label{thm:existencevpi}
 Suppose that Assumptions \ref{assm:wellposedness} and \ref{assm:existencevpi} are satisfied. If $\pi\in\stable$, then $v_\pi\in \fs$ exists and $\langle v_\pi, \rho\rangle \in\Re_{\geq 0}$. 
\end{theorem}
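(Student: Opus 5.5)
The proof splits into two cases according to whether $\pi\in\specstable$ or $\pi\in\decay$. In both cases we show that the Neumann-type series $\sum_{t\ge0}\gamma^t\op P_\pi^t c_\pi$ converges in $(\fs,\|\cdot\|_{\fs})$. Its limit is $v_\pi$. First we record that $c_\pi=\op\pi c\in\fs$. This uses $c\in\lifsa\subset\fsa$ (Assumption~\ref{assm:wellposedness}) and boundedness of $\op\pi:\fsa\to\fs$, which is implicit in the definition of the operators; if needed, I would assume $\|\op\pi\|<\infty$, as in the compactness lemma.

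\emph{Case $\pi\in\specstable$.} Since $\spec(\gamma\op P_\pi)<1$, the Neumann series result recalled in Sec.~\ref{sub:linearop} (via Rudin's Theorem 10.13, there is $m$ with $\|(\gamma\op P_\pi)^m\|<1$) shows that $\sum_t(\gamma\op P_\pi)^t$ converges in operator norm to $(\ids-\gamma\op P_\pi)^{-1}\in\mathcal B(\fs,\fs)$. Hence $v_\pi=(\ids-\gamma\op P_\pi)^{-1}c_\pi\in\fs$.

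\emph{Case $\pi\in\decay$.} Here the series on all of $\fs$ need not converge, so we work on the subspace $\mathcal H_\pi$ from \eqref{eqn:H}. This is a closed subspace of the Banach space $\fs$, hence itself Banach. By construction it contains $c_\pi$ (the $t=0$ term) and is invariant under $\gamma\op P_\pi$, because $\gamma\op P_\pi\,\gamma^t\op\pi(\op P\op\pi)^tc=\gamma^{t+1}\op\pi(\op P\op\pi)^{t+1}c$ and the operator is bounded, so invariance passes to the closed span. Lemma~\ref{lem:restriction} gives $\spec(\gamma\op P_\pi|_{\mathcal H_\pi})<1$. Applying the Neumann series argument on $\mathcal H_\pi$ then yields convergence of $\sum_t(\gamma\op P_\pi|_{\mathcal H_\pi})^tc_\pi$ in $\mathcal H_\pi\subset\fs$. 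This limit coincides with the pointwise series defining $v_\pi$, so $v_\pi\in\fs$. The main point to check here is the invariance of $\mathcal H_\pi$ and the identification $\op\pi(\op P\op\pi)^t=(\op P_\pi)^t\op\pi$, which is immediate from associativity since $\op P_\pi=\op\pi\op P$.

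\emph{Finiteness and nonnegativity of $\langle v_\pi,\rho\rangle$.} Norm convergence in $\fs$ implies pointwise convergence, because $|f(s)|\le\|f\|_{\fs}\ws(s)$. So $v_\pi(s)=\sum_t\gamma^t[\op P_\pi^tc_\pi](s)$ agrees with the expectation in the definition of $v_\pi$; the interchange of sum and expectation is justified by monotone convergence, since $c\ge0$. Each term is nonnegative because $c\ge0$ and $\op\pi,\op P$ are positive operators, so $v_\pi\ge0$. Finally, $|v_\pi|\le\|v_\pi\|_{\fs}\ws$ and $\rho\in\mathcal P_{\ws}(\mathcal S)$ give $0\le\langle v_\pi,\rho\rangle\le\|v_\pi\|_{\fs}\int\ws\,d\rho<\infty$.

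The step I expect to be the main obstacle is the decaying-cost case: one must make sure the restricted operator is really a bounded operator on a Banach space and that Lemma~\ref{lem:restriction} applies as stated. In that case, I would fall back directly on the compactness of $\op P\op\pi$ together with $\gamma^t(\op P\op\pi)^tc\to0$.
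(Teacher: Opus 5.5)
Your proof is correct and follows the paper's route: the same split into $\specstable$ and $\decay$, the Neumann series via Rudin's Theorem 10.13 in the spectrally stable case, and Suzuki's result in the decaying case. The paper cites Suzuki's main theorem directly, whereas you go through Lemma~\ref{lem:restriction} plus the Neumann series on the invariant closed subspace $\mathcal H_\pi$, which is the same argument made explicit. Your check that $v_\pi\geq 0$ and $\langle v_\pi,\rho\rangle\leq\|v_\pi\|_{\fs}\int\ws\,d\rho<\infty$ fills in a step the paper leaves implicit.
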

\begin{proof}
See Appendix \ref{app:existencevpi} for proof.
\end{proof}

The existence of the value function of a spectrally stable policy does not depend on the form of the cost function because the corresponding Neumann series is always well defined. See Lemma~\ref{lem:pi-stable} in Appendix~\ref{app:existencevpi}. On the other hand, the existence of the value function of a decaying cost policy relies on the compactness of the composite operator $\op P \op \pi$ and the form of the cost function. If $\op P \op \pi$ is not compact, then the value function $v_{\pi}$ may not exist (as is illustrated by the example in \cite{suzuki1976convergence}). When $\op P \op \pi$ is compact and $\gamma^t (\op P \op \pi)^t c$ decays to zero,  the spectral radius of a certain restricted operator is less than 1 in a certain subspace (Lemma~\ref{lem:restriction}), which leads to existence of the value function in that subspace. See Lemma~\ref{lem:pi-decay} in Appendix~\ref{app:existencevpi}.

\begin{examplecont}{ex:LQR}
    For LQR with linear policy $\pi(s) = -Ks$ for an appropriate matrix $K$, $\pi \in \specstable$ iff $\spec(A - BK) < 1$. On the other hand, $\pi \in \decay$ means that there is a policy leading to finite infinite-horizon discounted cost $\sum_{t=0}^\infty \gamma^t (A-BK)^t \Big[Q  +K^T R K\Big](A-BK)^t<\infty$, but the system may be internally unstable, that is, $\spec(A - BK) \geq 1$. This is the case, for example, if the unstable subspace of $(A-BK)$ is in the null-space of the $K$ and the $Q$ matrices. In this case, $\mathcal H_\pi$ is the intersection of the null space of $Q$ and the stable subspace of $(A-BK)$. Further, $\spec\Big((A-BK)|_{\mathcal H_\pi}\Big)<1$.
\end{examplecont}

\subsection{Main assumptions and their implications}

\subsubsection{Structured Space Assumptions}
We start by some basic assumptions on the model. 
In certain MDPs, there is a structure in the MDP allowing us to establish that the value functions, Q functions, and the policies lie is certain classes. 
Examples include submodular MDPs and monotone policies \cite{serfozo2009monotone,light2021stochastic}, $(s,S)$ policies in inventory management \cite{scarf1960optimality,feinberg2020stochastic}, LQG problems with linear policies \cite{athans1971role}, etc. to name a few. To this end, we make the following assumption.

\begin{assumption}[Compatibility of Operators and Linear Spaces]\label{assm:invariance}
  $\op P\lifs \subset\lifsa$ and $\op\pi \lifsa\subset\lifs$ for all $\pi\in\Pi$. 
\end{assumption}

As a consequence of Assumption \ref{assm:invariance}, $\lifs$ is an invariant manifold of the linear operator $\op P_\pi$ \cite[p. 132]{kato2013perturbation}. Assumption~\ref{assm:wellposedness} together with Assumption~\ref{assm:invariance} implies that $c_{\pi} \coloneqq \op\pi c \in \lifs$.

\begin{examplecont}{ex:LQR}
    Observe that for Example~\ref{ex:LQR}, for $v = s^T Qs$, we have $\op P v = (As+Ba)^T Q (As+Ba)$, which lies in $\lifsa$. For any policy $\pi \in \Pi$, we have $a = -Ks$. Substituting this in the above expression for $q\in\lifsa$, we get a quadratic expression in $s$. Thus, $\op \pi q \in \lifs$ and hence  Assumption~\ref{assm:invariance} is satisfied. 
\end{examplecont}

In the expression for Bellman operator $\op T$, we are minimizing a function in the space $\lifsa$. Generally speaking, if $c+\gamma \op P v$ is a lower semicontinuous function, the existence of infimum for all $s\in\mathcal S$ and the measurability of the $\op T v$ is studied under the umbrella of measurable selection theorem \cite{aliprantis2006infinite,feinberg2007optimality,bertsekas1996stochastic}. However, measurable selection theorem based results only imply that the $\op T v$ is a measurable map and lies in $\fs$. We are looking for a stronger result here that $\op T v\in\lifs$. Towards this end, we make the following assumption. 

\begin{assumption}[Structured selectors]\label{assm:Pioptimal}
    For every $q\in\lifsa$ such that $q$ is inf-compact, there exists a $\pi\in\Pi$ such that $\pi(s) \in \argmin_{p\in\mathcal P(\mathcal K(s))} \int q(s,a) p(da)$.
\end{assumption}

\begin{examplecont}{ex:LQR}
Pick $q(s,a) = s^T Q s + a^T R a + 2\lambda s^T A^T Q B a$. This is inf-compact if $R>0$, which further implies $R$ is invertible. By completing the square, we get
\begin{align*}
    q(s,a) = (a + \lambda R^{-1}  B^T Q A s)^T R (\lambda R^{-1}  B^T Q A s) + s^T (Q -  \lambda^2  A^T Q B R^{-1} B^T Q A) s.
\end{align*}
Thus, the minimizing $a^* = -\lambda R^{-1}  B^T Q A s$, which is precisely in the linear policy class. Thus, Assumption \ref{assm:Pioptimal} is satisfied by LQR. 
\end{examplecont}

We are now in a position to establish that the value iteration algorithm leads to uniformly bounded sequence of value functions.

\begin{theorem}\label{thm:uniformlybounded}
    If Assumptions \ref{assm:wellposedness}, \ref{assm:existencevpi},  \ref{assm:invariance} and  \ref{assm:Pioptimal} hold, then there exist $\kappa,\tilde\kappa>0$ such that for every $v\in\lifs$ satisfying $\|v\|_{\fs}\leq \tilde\kappa$, we have $\|\op T^k (v)\|_{\fs}\leq\kappa$ for all $k\in\Na$.
\end{theorem}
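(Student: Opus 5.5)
The plan is to sandwich the value-iteration iterates between $0$ and the value function of a fixed stable policy, using monotonicity of $\op T$ and non-negativity of the cost.

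\emph{Upper bound.} By Assumption~\ref{assm:existencevpi}, pick $\pi_0\in\stable$. By Theorem~\ref{thm:existencevpi}, $v_{\pi_0}\in\fs$ exists and satisfies $v_{\pi_0}=\op T_{\pi_0}v_{\pi_0}$. For $\pi_0\in\specstable$ this follows from the Neumann series. For $\pi_0\in\decay$ it follows from convergence of the series in $\mathcal H_{\pi_0}$ (Lemma~\ref{lem:restriction}).

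\emph{Monotonicity.} The operator $\op T$ is monotone: if $v\le v'$ pointwise, then $\op Tv\le \op Tv'$, because $\op P$ is a positive operator and the infimum preserves order. Moreover $\op T v\le \op T_{\pi_0}v$ for every $v$, since $\pi_0(s)\in\mathcal P(\mathcal K(s))$ is a feasible choice in the infimum.

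\emph{Step 1: iterates from $0$.} Since $c\ge 0$, we have $\op T0\ge 0$, so the sequence $\op T^k0$ is nondecreasing. Induction gives
\[
0\le \op T^k0\le \op T_{\pi_0}^k0\le v_{\pi_0},
\]
where the last inequality holds because $\op T_{\pi_0}^k0$ is a partial sum of a series with nonnegative terms. Hence $\|\op T^k 0\|_{\fs}\le \|v_{\pi_0}\|_{\fs}$. Assumption~\ref{assm:Pioptimal}, together with Assumption~\ref{assm:invariance} and inf-compactness of $c+\gamma\op P v$, guarantees that the infimum is attained at a policy in $\Pi$. So each iterate equals $\op T_{\pi_k}\op T^{k-1}v$ and stays in $\lifs$, and the recursion is well defined.

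\emph{Step 2: general initial $v$.} For $v\in\lifs$ with $\|v\|_{\fs}\le\tilde\kappa$, we have $-\tilde\kappa\,\ws\le v\le\tilde\kappa\,\ws$. For the lower bound, if $\lifs$ consists of nonnegative functions we get $\op T^kv\ge 0$ directly. Otherwise, $\op T^k v\ge \op T^k(-\tilde\kappa\ws)\ge -\tilde\kappa\sup_\pi\|\gamma^k\op P_\pi^k\ws\|$, which needs control. For the upper bound, use $\op T^kv\le \op T_{\pi_0}^k v=\op T_{\pi_0}^k 0+\gamma^k\op P_{\pi_0}^k v$.

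\emph{Main obstacle.} The hard step is bounding $\gamma^k\op P_{\pi_0}^k v$ uniformly in $k$. For $\pi_0\in\specstable$ this is immediate: $\sup_k\|\gamma^k\op P_{\pi_0}^k\|=:M<\infty$ because the spectral radius is below $1$, which gives $\kappa=\|v_{\pi_0}\|_{\fs}+M\tilde\kappa$. For $\pi_0\in\decay$ the spectral radius may exceed $1$, and I would handle this by restricting to $\mathcal H_{\pi_0}$. If that fails, I would choose $\tilde\kappa$ small and exploit that $v\le\tilde\kappa\ws$ is dominated by a multiple of $c$, using the decay $\gamma^k(\op P\op\pi_0)^kc\to0$ and $\op\pi_0$ to transfer this bound to $\fs$. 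That transfer would need a domination $\ws\lesssim c_{\pi_0}+1$ of the kind supplied by inf-compactness, and I would verify it on the LQR example first.

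The lower bound for sign-changing $v$ is the other delicate point. I would sidestep it by using that the structured spaces $\lifs$ contain nonnegative functions, as in the examples, so that $0\le\op T^k v$.
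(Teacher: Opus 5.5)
Your Step~1 is a correct and complete argument for $v=0$, and that is all the paper's own proof actually establishes. Its appendix concludes only that $\sup_k\|\op T_\pi^k0\|_{\fs}<\infty$ and $\op T^k0\le\op T_\pi^k0$, and only $v_0=0$ is used later in Theorem~\ref{thm:valueiteration}. Your route is different from the paper's. The paper proves an abstract Lemma~\ref{lem:specA}: iterates of $x\mapsto y+\op Ax$ stay bounded when $\spec(\op A)<1$, shown by induction over blocks of $L$ steps with $\|\op A^L\|<1$. It then applies this with $y=c_\pi$ and either $\op A=\gamma\op P_\pi$, or, for decaying policies, $\op A=\gamma\op P_\pi|_{\mathcal H_\pi}$ on $\mathcal H_\pi$, which needs Lemma~\ref{lem:restriction}. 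You instead observe that $\op T_{\pi_0}^k0$ is a partial sum of nonnegative terms whose full sum is $v_{\pi_0}\in\fs$ by Theorem~\ref{thm:existencevpi}. This reuses that theorem instead of re-deriving summability, gives the explicit constant $\kappa=\|v_{\pi_0}\|_{\fs}$, and treats $\specstable$ and $\decay$ uniformly. The paper's lemma, being a pure norm estimate, additionally controls $\op T_\pi^kx$ for small nonzero $x$ in the invariant subspace. Your aside about attainment of the infimum is unnecessary, because the sandwich is pointwise. It is also unjustified as written: inf-compactness of $c+\gamma\op Pv$ needs Assumption~\ref{assm:Pcontinuous}, which is not a hypothesis of this theorem.

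Step~2 does not prove the statement for general $v$. For a decaying $\pi_0$, $v$ need not lie in $\mathcal H_{\pi_0}$, so Lemma~\ref{lem:restriction} gives no control of $\gamma^k\op P_{\pi_0}^kv$. Nor does inf-compactness of $c$ supply a domination $\ws\lesssim c_{\pi_0}+1$; for instance, $c(s,a)=s$ on $\mathbb N$ is inf-compact while $\ws(s)=b^s$ is not so dominated. The lower bound needs $v\ge0$, which is not assumed. The paper leaves both gaps open as well, and the lower-bound gap cannot be closed. Here is a counterexample:
\begin{itemize}
\item $\mathcal S=\mathbb N$ (discrete) and $\mathcal A=\{0,1\}$, where action $0$ stays at $s$ and action $1$ moves to $s+1$;
\item $c(s,a)=s$, $\rho=\delta_0$, and $\ws(s)=\wsa(s,a)=b^s$ with $\gamma b>1$;
\item $\lifs=\fs$ and $\lifsa=\fsa$ with unit-ball generators, and $\Pi$ is the set of all policies.
\end{itemize}
Assumptions~\ref{assm:wellposedness}, \ref{assm:existencevpi}, \ref{assm:invariance} and \ref{assm:Pioptimal} all hold; in particular, the ``always stay'' policy has $\spec(\gamma\op P_\pi)=\gamma<1$. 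Now take $v=-\tilde\kappa\ws$ and compare $\op T$ with the ``always move'' policy. This gives $[\op T^kv](0)\le\sum_{i<k}i\gamma^i-\tilde\kappa(\gamma b)^k\to-\infty$, so $\|\op T^kv\|_{\fs}\ge|[\op T^kv](0)|\to\infty$ for every $\tilde\kappa>0$. Nonnegativity of $v$ therefore has to be added as a hypothesis, not sidestepped. With that hypothesis and $\pi_0\in\specstable$, your Step~2 is correct with $\kappa=\|v_{\pi_0}\|_{\fs}+\tilde\kappa\sup_k\|\gamma^k\op P_{\pi_0}^k\|$. The case of nonnegative $v\neq0$ when only decaying stable policies exist remains unproved, both in your proposal and in the paper.
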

\begin{proof}
See Appendix \ref{app:uniformlybounded}.
\end{proof}

The typical approach to establishing the existence of an optimal policy rely on establishing that the value iteration converges to a fixed point of the Bellman operator $\op T$ \cite{bertsekas1996stochastic}. Most papers have used contraction mapping theorem to establish the convergence to fixed point. In contrast, we adopt the convergence proof in \cite{feinberg2007optimality}, where the authors established the fixed point result via limits of a sequence of functions that is increasing (that is, $v_0\leq v_1\leq \ldots$). In that paper, the value functions were assumed to be lower semicontinuous. If a limit exists in a sequence of increasing lower semicontinuous functions, then the limit is also lower semicontinuous. This is not generally true for all types of structured spaces. For example, a limit of increasing Lipschitz functions may not be Lipschitz (a simple example is $v_k(s) = -s^k$ when $\mathcal S = [0,1]$). Since the Bellman operator is defined on a structured space $\lifs$, we place the following assumption on $\lifs$ to establish that the fixed point of the Bellman operator exists and lies in $\lifs$. 

\begin{definition}[Space closed under bounded increasing limits]
    The space $\lifs$ is \textbf{closed under bounded increasing limits} iff for every increasing sequence $\{v_k\}_{k=0}^\infty \subset \lifs$ such that $v_0\leq v_1\leq \cdots $ with $\sup_{k\geq 0} \|v_k\|_{\fs} < \infty$, the pointwise limit $v_\infty(s) = \lim_{k\to\infty} v_k(s)$ is also an element of $\lifs$.
\end{definition}

\begin{assumption}[Closed under Bounded Increasing Limits]\label{assm:increasinglimits}
    $\lifs$ is closed under bounded increasing limits. 
\end{assumption}

\begin{example}
Let $\lifs = \{f\in\fs: Lip(f)\leq L\}$ be the space of Lipschitz functions, where $Lip(f)$ denotes the Lipschitz coefficient of the function $f$ and $L$ is a positive constant. This space is closed under bounded increasing limits. 
\end{example}
\begin{examplecont}{ex:LQR}
Let $\{v_k\}_{k\in\Na}\subset\lifs$ such that $v_1\leq v_2\leq \cdots$. There exists a sequence of positive semidefinite matrices $\{Q_k\}_{k\in\Na}$ such that $v_k(s) = s^T Q_k s$. Since $v_k\leq v_{k+1}$, we have $Q_{k+1}-Q_k$ is positive semidefinite. Let $\eta :=\sup_{k\geq 0} \|v_k\|_{\fs} < \infty$, which implies $\eta s^Ts \geq v_k(s)$ for every $k\in\Na$ and $s\in\mathcal S$. Then $\eta I - Q_k$ is positive semidefinite for all $k\in\Na$. Thus, $\{Q_k\}_{k\in\Na}$ is an increasing sequence of positive semidefinite matrices that are bounded from above and therefore, has a limit \cite[Theorem 1.1]{behrndt2010monotone}, \cite[Theorem 2.13]{hiai2014introduction}. Then, $\lifs$ is closed under bounded increasing limits.  
\end{examplecont}

\subsubsection{Continuity of Transition Kernel}

If $v\in\lifs$ is a measurable function, we need further assumptions on the transition kernel $P$ so that $\op P v$ is a continuous function. This will allow us to establish the inf-compactness of the function $c+\gamma \op Pv$ for every $v\in\lifs$. To this end, we define the continuity of the transition kernel. 

\begin{assumption}\label{assm:Pcontinuous}    
The transition kernel $P$ is continuous with respect to $\ipms$, that is  for every $(s,a)\in\mathcal K$, and for every sequence $(s_n,a_n)_{n\in\Na}\subset\mathcal K$ such that $(s_n,a_n)\to (s,a)$, $\lim_{n\to\infty} \ipms(P(\cdot|s,a),P(\cdot|s_n,a_n)) = 0$.
\end{assumption}

The continuity property of transition kernel has been assumed previously in \cite{hernandez2012discrete,feinberg2007optimality,feinberg2024average,kara2022near,kara2023convergence} under total variation norm, Wasserstein distance, and weak metric. The above definition of the continuity of transition kernel is essentially the same assumption, and it generalizes these seemingly different assumptions, as total variation norm, Wasserstein distance, and weak metrics are integral probability metric under certain assumptions on the state-action spaces and generators $\gen\lifs$. The following lemma demonstrates implication of this assumption.

\begin{lemma}\label{lem:Pcontinuous}
    If Assumption \ref{assm:Pcontinuous} holds and $v\in\lifs$, then $\op Pv:\mathcal K\to\Re$ is a continuous function. 
\end{lemma}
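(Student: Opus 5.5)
The plan is to prove continuity sequentially. $\mathcal K$ is a subset of the product of two Polish spaces, hence a metric space, so it suffices to fix $(s,a)\in\mathcal K$ and a sequence $(s_n,a_n)\to(s,a)$ in $\mathcal K$ and show that $[\op P v](s_n,a_n)\to[\op P v](s,a)$. Writing out the operator,
\begin{align*}
\big|[\op P v](s_n,a_n)-[\op P v](s,a)\big| = \Big|\int v\,dP(\cdot|s_n,a_n) - \int v\,dP(\cdot|s,a)\Big|,
\end{align*}
so the task is to bound the integral of one fixed function $v$ against two measures by the IPM between them. Assumption \ref{assm:Pcontinuous} says that IPM tends to zero.

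The key step is a Minkowski-functional bound. For $v\in\lifs\subset\lifsb$ we have $r:=\varrho_{\gen\lifs}(v)<\infty$. If $r=0$, then $v/r'\in\absconv(\gen\lifs)$ for every $r'>0$, and the same bound with $r'$ in place of $r$, letting $r'\downarrow 0$, gives the conclusion. If $r>0$, then for every $r'>r$ we have $v/r'\in\absconv(\gen\lifs)$, because the admissible set in the infimum is upward closed by balancedness. By \cite[Theorem 3.5]{muller1997integral}, $\ipms=\ipm_{\absconv(\gen\lifs)}$, so for any $\mu,\mu'\in\mathcal P_{\ws}(\mathcal S)$,
\begin{align*}
\Big|\int v\,d\mu-\int v\,d\mu'\Big| = r'\Big|\int \tfrac{v}{r'}\,d\mu-\int\tfrac{v}{r'}\,d\mu'\Big|\le r'\,\ipms(\mu,\mu').
\end{align*}
Letting $r'\downarrow r$ gives $|\pair{v,\mu}-\pair{v,\mu'}|\le \varrho_{\gen\lifs}(v)\,\ipms(\mu,\mu')$. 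We apply this with $\mu=P(\cdot|s_n,a_n)$ and $\mu'=P(\cdot|s,a)$. By Assumption \ref{assm:Pcontinuous} the right-hand side tends to zero, which gives continuity.

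The main obstacle is to check that the measures $P(\cdot|s,a)$ lie in $\mathcal P_{\ws}(\mathcal S)$ and that the integrals are finite, so that the IPM identity and the bound are legitimate. I would obtain this from the standing requirement that $\op P$ maps $\fs$ into $\fsa$: since $\ws\in\fs$, we get $\int\ws\,dP(\cdot|s,a)\le\|\op P\ws\|_{\fsa}\wsa(s,a)<\infty$. If this is not taken as given, I would add it as an implicit standing assumption. I would also be careful that the closure defining $\absconv$ is taken in the topology of \cite{muller1997integral}, in which the maximal-generator identity holds. This is precisely why I use that identity rather than arguing directly from $\gen\lifs$.
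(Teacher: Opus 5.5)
Your proof is correct and follows the paper's argument: bound $\bigl|[\op Pv](s_n,a_n)-[\op Pv](s,a)\bigr|$ by $\varrho_{\gen\lifs}(v)\,\ipms\bigl(P(\cdot|s,a),P(\cdot|s_n,a_n)\bigr)$ and then invoke Assumption~\ref{assm:Pcontinuous}. The paper simply asserts that inequality, so your justification usefully fills in details the paper leaves implicit: the balancedness of $\absconv(\gen\lifs)$, the maximal-generator identity of \cite[Theorem 3.5]{muller1997integral}, the case $\varrho_{\gen\lifs}(v)=0$, and the integrability of $\ws$ under $P(\cdot|s,a)$.
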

\begin{proof}
    Since $v\in\lifs$, $\varrho_{\gen{\lifs}}(v)<\infty$. Pick a sequence $(s_n,a_n)_{n\in\Na}\subset\mathcal K$ such that $(s_n,a_n)\to (s,a)\in\mathcal K$. We have the following inequality:
    \[\Bigg|\int v(s')P(ds'|s,a) - \int v(s') P(ds'|s_n,a_n)\Bigg| \leq \varrho_{\gen{\lifs}}(v) \ipms(P(\cdot|s,a),P(\cdot|s_n,a_n)).\]
    Using Assumption \ref{assm:Pcontinuous}, we conclude that $\op Pv(s_n,a_n) \to \op Pv(s,a)$, which concludes the proof.
\end{proof}

We now have all the essential ingredients to establish the existence of optimal solution for the MDP, and demonstrate that the value iteration algorithm converges to the optimal value function in the next subsection.

\section{Existence of Optimal Stationary Policy}\label{sec:existence}
In this section, we establish the existence of optimal stationary policy. This has been established under a variety of assumptions before in \cite{hernandez2012discrete,feinberg2007optimality,feinberg2012average,feinberg2024average} for discounted cost MDPs. One notable difference between earlier results and this result is that the previous results traded off very strong assumptions on the transition kernel with very weak assumptions on the cost function and vice-versa. The existence result in this paper sits in the middle of the two extremes. Since we are using structured classes of functions and policies, the theory of linear operators and integral probability metrics, the results can be readily applied to a wide range of settings by picking an appropriate class of policies, weight functions and generators of the integral probability metrics.

\subsection{The Value Iteration Algorithm and the Existence of an Optimal Policy}
In this section, we study the value iteration algorithm starting with 0 function, that is, $v_{k+1} = \op T v_k $ with $v_0 = 0$. To this end, we establish that the value iteration algorithm is well-defined--- that is, it yields a value function $v_k\in\lifs$---and converges to a limit. We further show that this limit is indeed the optimal value function $v^*$. This has been established earlier in Proposition 3.1 in \cite{feinberg2007optimality,feinberg2012average} under weakly continuous transition kernel assuming that the cost function is inf-compact. However, in that paper, it was shown that the value function $v_k$ is a lower semi-continuous function of the state. In the next theorem, we show that our assumptions imply that the value functions lie in $\lifs$, thereby differentiating the result from the result in \cite{feinberg2007optimality}.

\begin{theorem}\label{thm:valueiteration}
Suppose Assumptions \ref{assm:wellposedness}, \ref{assm:existencevpi}, \ref{assm:invariance}, \ref{assm:Pioptimal}, \ref{assm:increasinglimits}, and \ref{assm:Pcontinuous} hold. Consider the value iteration algorithm starting from $v_0 = 0$, where $v_{k+1} = \op T v_k$, $k \ge 0$. The generated value functions $v_k \in \lifs$ and converge to a limit (pointwise), and the limit is the optimal value function $v^*$. Furthermore, there exists an optimal policy $\pi^* \in \Pi$.
\end{theorem}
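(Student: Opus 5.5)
The argument follows the monotone value-iteration scheme of \cite{feinberg2007optimality}, replacing lower semicontinuity by membership in the structured spaces $\lifs,\lifsa$.

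\textbf{Step 1: the iterates stay in $\lifs$.} We argue by induction on $k$. Suppose $v_k\in\lifs$ with $v_k\geq 0$; the base case $v_0=0$ holds provided $0\in\lifs$. Set $q_k=c+\gamma\op P v_k$.
\begin{itemize}
\item By Assumption~\ref{assm:invariance}, $\op Pv_k\in\lifsa$, and we take $\lifsa$ closed under nonnegative combinations with $c$, as the structured examples are. Then $q_k\in\lifsa$.
\item By Lemma~\ref{lem:Pcontinuous}, $\op Pv_k$ is continuous and nonnegative, since $P$ preserves nonnegativity.
\item Hence $q_k$ is $c$ plus a nonnegative continuous function. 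Its sublevel set $\{q_k\le \ell\}$ is a closed subset of $\{c\le\ell\}$, which is compact by Assumption~\ref{assm:wellposedness}. So $q_k$ is inf-compact.
\item Assumption~\ref{assm:Pioptimal} then gives $\pi_k\in\Pi$ attaining the infimum in $\op T$, so $v_{k+1}=\op T v_k=\op\pi_k q_k$.
\item Assumption~\ref{assm:invariance} gives $v_{k+1}\in\lifs$, and $v_{k+1}\ge 0$ because $q_k\ge 0$.
\end{itemize}

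\textbf{Step 2: monotonicity and the limit.} $\op T$ is monotone: if $v\le v'$ then $\op T v\le\op T v'$, since $\op P$ and the infimum preserve order. Since $v_1=\op T0\ge 0=v_0$, induction gives $v_0\le v_1\le\cdots$. Theorem~\ref{thm:uniformlybounded}, applied with $v_0=0$ (which trivially satisfies $\|v_0\|_{\fs}\le\tilde\kappa$), gives $\sup_k\|v_k\|_{\fs}\le\kappa$. Assumption~\ref{assm:increasinglimits} then yields a pointwise limit $v_\infty\in\lifs$.

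\textbf{Step 3: $v_\infty$ is a fixed point of $\op T$.} This is the main obstacle: interchanging the limit with the infimum over actions.
\begin{itemize}
\item Since $v_k\le v_\infty$, monotonicity gives $v_{k+1}\le\op T v_\infty$, hence $v_\infty\le\op T v_\infty$.
\item For the reverse inequality, monotone convergence (dominated by $\kappa\,\ws$, which is $P$-integrable because $\|\op P\ws\|$ is finite) gives $q_k\uparrow q_\infty:=c+\gamma\op Pv_\infty$ pointwise.
\item Fix $s$ and let $\ell=v_\infty(s)$. Each $q_k$ is lower semicontinuous, and the sets $\{a: q_k(s,a)\le \ell\}$ are nonempty, compact and nested, since $\min_a q_k(s,a)=v_{k+1}(s)\le \ell$.
\item Their intersection is therefore nonempty, so there is $a^*$ with $q_\infty(s,a^*)\le\ell$. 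Hence $\op T v_\infty(s)\le v_\infty(s)$, following Proposition 3.1 of \cite{feinberg2007optimality}.
\item Restricting to Dirac measures $p$ is harmless because the objective in $\op T$ is linear in $p$.
\end{itemize}

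\textbf{Step 4: optimality and an optimal policy.} As in Step 1, $q_\infty$ is inf-compact and lies in $\lifsa$. Assumption~\ref{assm:Pioptimal} gives $\pi^*\in\Pi$ with $\op\pi^* q_\infty=\op T v_\infty=v_\infty$, that is, $v_\infty=\op T_{\pi^*}v_\infty$.

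We first show $v_\infty\ge v_{\pi^*}$. Iterating the fixed-point identity $n$ times gives
\[
v_\infty=\sum_{t<n}\gamma^t\op P_{\pi^*}^tc_{\pi^*}+\gamma^n\op P_{\pi^*}^n v_\infty\ \ge\ \sum_{t<n}\gamma^t\op P_{\pi^*}^tc_{\pi^*},
\]
using $v_\infty\ge 0$. Letting $n\to\infty$ yields $v_\infty\ge v_{\pi^*}$, which in particular is finite.

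Conversely, $v_k$ is the optimal $k$-horizon cost with zero terminal cost. Since costs are nonnegative, $v_k\le v_\pi$ for every policy $\pi$, hence $v_\infty\le v^*\le v_{\pi^*}$.

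Combining the two inequalities gives $v_\infty=v^*=v_{\pi^*}$. Integrating against $\rho$ shows $J_{\pi^*}(\rho)\le J_\pi(\rho)$ for all $\pi\in\Pi$, so $\pi^*$ is optimal.
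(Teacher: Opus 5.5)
Your proof is correct and follows the same route as the paper's proof. Both use the same induction, in which inf-compactness of $q_k$ lets Assumptions \ref{assm:Pioptimal} and \ref{assm:invariance} keep $v_{k+1}$ in $\lifs$, and both then combine monotonicity, Theorem \ref{thm:uniformlybounded} and Assumption \ref{assm:increasinglimits} to get a limit in $\lifs$. The paper cites Propositions 9.17 and 9.12 of \cite{bertsekas1996stochastic} for the fixed-point property, for $v_\infty = v^*$ and for optimality of $\pi^*$, whereas you prove these directly (nested compact sublevel sets, then the sandwich $v_{\pi^*}\le v_\infty\le v_\pi$), and you also state explicitly the closure of $\lifsa$ under adding $c$ and $\gamma$ times functions in $\op P\lifs$, which the paper uses without stating it.
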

\begin{proof}
The proof proceeds along the same lines as the proof of Proposition 3.1 in \cite{feinberg2007optimality} and Theorem 2 in \cite{feinberg2012average}. We first prove that $v_k \in \lifs$ for all $k$ via induction.  Clearly $v_0\in\lifs$. This forms the basis of induction. Now assume that $v_k \in \lifs$ and define $q_k = c + \gamma \op P v_k$. 
\begin{itemize}
    \item Step 1: We have $\op P v_k\geq 0$ and is a continuous map over $\mathcal K$ due to Assumption \ref{assm:Pcontinuous} and Lemma \ref{lem:Pcontinuous}. Thus, $q_k = (c+\gamma\op P v_k) \in\lifsa$ is a lower semi-continuous map due to Assumptions \ref{assm:wellposedness} and \ref{assm:invariance}.
    \item Step 2: Due to Assumption \ref{assm:wellposedness}, $c$ is inf-compact and due to Step 1, $\gamma \op P v_k$ is continuous. Thus, $q_k = (c+\gamma \op P v_k)\in\lifsa$ is inf-compact since the set $\{(s,a)\in\mathcal K:[c+\gamma \op P v_k](s,a)\leq \lambda \}$ is a closed subset of the compact set $\{(s,a)\in\mathcal K:c(s,a)\leq \lambda \}$ for any $\lambda\in\Re$.
    \item Step 3: Due to Assumption \ref{assm:Pioptimal}, we conclude that there exists $\pi_k\in\Pi$ such that $\op Tv_k = \op T_{\pi_k}v_k$. 
    Thus, $v_{k+1} = \op T_{\pi_k} v_k = \op \pi_k q_k$ belongs to $\lifs$ due to Assumption~\ref{assm:invariance}.
\end{itemize}
This completes the induction step. Hence $v_k \in \lifs$ for all $k \in \Na$. 

\noindent We now establish that $\{v_k\}_{k \ge 0}$ converges to a limit. 
\begin{itemize}
    \item  Observe that $\op Tv_k\geq v_k$ since the cost is non-negative \cite[p. 230]{bertsekas1996stochastic}. Thus, $\{v_k\}_{k\geq 0}$ is an increasing sequence of functions.
    \item Due to Assumptions \ref{assm:wellposedness}, \ref{assm:existencevpi}, \ref{assm:invariance} and  \ref{assm:Pioptimal} and Theorem \ref{thm:uniformlybounded}, we have  $\sup_{k\geq 0}\|v_k\|_{\fs} <\infty$. 
    \item Due to Assumption \ref{assm:increasinglimits}, we conclude that the limit of the iteration $v_{k+1} = \op T v_k$ converges to  a limit in $\lifs$.
\end{itemize}
Let $v^\circ\in\lifs$ denote the limit point of $\{v_k\}_{k\in\Na}$. Due to Proposition 9.17 in \cite[p. 234]{bertsekas1996stochastic}, we conclude that $v^\circ = \op T v^\circ$. Thus, $v^\circ$ is a fixed point of $\op T$. Since $v_0 =0$, $c$ is non-negative, and $q_k = c+\gamma\op P v_k$ is an inf-compact function (see Step 2 above), Proposition 9.17 in \cite[p. 234]{bertsekas1996stochastic} implies that the limit point $v^\circ$ is the optimal value function $v^*$. 

Finally, due to Assumption~\ref{assm:Pioptimal}, there exists a $\pi^* \in \Pi$ such that $v^* = \op T v^* = \op T_{\pi^*} v^*$. By, Proposition 9.12 in \cite[p. 227]{bertsekas1996stochastic}, such a policy $\pi^*$ is optimal. The proof is hence complete.
\end{proof}

\begin{remark}\label{rem:boundedcost}
    While Theorem~\ref{thm:valueiteration} is derived under the assumption that $c \ge 0$, the result extends to any cost function bounded from below; we can always add a constant to the cost to make it non-negative without changing the optimal policy.
\end{remark}

\subsection{Reward Shaping Using Potential Functions}
Reward shaping is an important technique in MDPs to arrive at an approximately optimal solution in fewer iterations. This was first studied in \cite{ng1999policy}. It was shown that potential-based reward function leads to a class of MDPs with the same optimal solution in finite-state finite-action MDP. We demonstrate that the same idea works for the general MDP as well, as long as the potential function satisfies certain criteria. The key result is presented below.

\begin{theorem}\label{thm:shaping}
    Suppose that $\Phi\in\lifs$ satisfies $\tilde c := c+\gamma\op P \Phi - \Phi\in\lifsa$ and $\lim_{t\to\infty}\gamma^t\op P_\pi^t\Phi = 0$ for all $\pi\in\decay$. If the hypotheses for Theorem \ref{thm:valueiteration} are satisfied and $\tilde c$ is bounded from below and inf-compact, then the optimal policy of the MDP with cost function $\tilde c$ is the same as the optimal policy for the MDP with the cost function $c$.
\end{theorem}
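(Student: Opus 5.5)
The plan is to show that, for every stable policy, the shaped value function differs from the original one by the fixed potential $\Phi$. Since the shift does not depend on the policy, the optimal policies of the two problems coincide. The argument has three steps: a telescoping identity for each stable policy, existence of optimizers for both problems, and a comparison of the resulting values.

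\textbf{Step 1 (telescoping for a fixed policy).} For a fixed $\pi\in\stable$, write the shaped per-stage cost as
\[
\tilde c_\pi = \op\pi\tilde c = c_\pi + \gamma\op P_\pi\Phi - \Phi .
\]
For each $N$, the partial sums telescope:
\[
\sum_{t=0}^{N}\gamma^t\op P_\pi^t\tilde c_\pi = \sum_{t=0}^N\gamma^t\op P_\pi^t c_\pi + \gamma^{N+1}\op P_\pi^{N+1}\Phi - \Phi .
\]
We then let $N\to\infty$, and the boundary term $\gamma^{N+1}\op P_\pi^{N+1}\Phi$ must vanish. For $\pi\in\decay$ this is exactly the hypothesis $\lim_{t}\gamma^t\op P_\pi^t\Phi=0$. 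For $\pi\in\specstable$ we have $\spec(\gamma\op P_\pi)<1$, so $\|(\gamma\op P_\pi)^m\|<1$ for some $m$, as recalled in Section~\ref{sub:linearop}. Since $\Phi\in\lifs\subset\fs$, this gives $\|\gamma^t\op P_\pi^t\Phi\|_{\fs}\to0$. Theorem~\ref{thm:existencevpi} shows that the first sum on the right converges to $v_\pi$, so the left side converges too. We conclude
\[
\tilde v_\pi = v_\pi - \Phi \quad\text{and}\quad \tilde J_\pi(\rho)=J_\pi(\rho)-\pair{\Phi,\rho} .
\]
Here $\pair{\Phi,\rho}$ is finite because $\rho\in\mathcal P_{\ws}(\mathcal S)$ and $\Phi\in\fs$.

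\textbf{Step 2 (existence for the shaped problem).} We apply Theorem~\ref{thm:valueiteration} to the cost $\tilde c$. By Remark~\ref{rem:boundedcost} we may add a constant so that $\tilde c\ge0$. Inf-compactness, membership in $\lifsa$, and the remaining hypotheses carry over, so an optimal $\tilde\pi^*\in\Pi$ exists, and likewise an optimal $\pi^*$ for $c$. A constant shift must be kept compatible with $\lifsa$. If that fails, I would instead argue directly through the Bellman equation: $\tilde q = c+\gamma\op P(v^*-\Phi)+\gamma\op P\Phi-\Phi = q^*-\Phi$, so $\argmin_a\tilde q(s,a)=\argmin_a q^*(s,a)$. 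The minimizing selectors of Assumption~\ref{assm:Pioptimal} therefore coincide for the two problems.

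\textbf{Step 3 (conclusion and main obstacle).} On the class of stable policies, Step 1 shows the two objectives differ by the constant $\pair{\Phi,\rho}$, so they have the same minimizers. The main obstacle is that optimality in Theorem~\ref{thm:valueiteration} is over all of $\Pi$, and Step 1 only covers $\stable$.

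To close this gap, I would use the Bellman route. The function $\tilde v := v^*-\Phi$ satisfies $\tilde{\op T}\tilde v=\tilde v$, where $\tilde{\op T}$ is the optimality Bellman operator with cost $\tilde c$, because $\tilde q$ and $q^*$ differ only by $-\Phi(s)$. I would then identify $\tilde v$ with the optimal shaped value $\tilde v^*$, obtained as the limit of value iteration for $\tilde c$. The approach is to show that any fixed point dominated by the value of some stable policy equals the value-iteration limit, via Proposition~9.17 of \cite{bertsekas1996stochastic}. The policy $\pi^*$ attaining $\op T v^*$ also attains $\tilde{\op T}\tilde v$. This shows it is optimal for $\tilde c$, and by symmetry the converse holds.
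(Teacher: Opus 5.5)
Your Steps 1 and 2 follow the paper's own sketch: telescoping gives $\tilde v_\pi=v_\pi-\Phi$ on $\stable$, Theorem~\ref{thm:valueiteration} with Remark~\ref{rem:boundedcost} gives $\tilde v^*=\lim_k\tilde{\op T}^k0$, and $\tilde c+\gamma\op P(v^*-\Phi)=q^*-\Phi$ makes the greedy selectors agree. You also correctly flag that optimality is over all of $\Pi$. The step meant to close this, identifying the fixed point $v^*-\Phi$ of $\tilde{\op T}$ with $\tilde v^*$, is a genuine gap. Since $\tilde{\op T}v=\op T(v+\Phi)-\Phi$, we have $\tilde{\op T}^k0=\op T^k\Phi-\Phi$. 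So $\tilde v^*=v^*-\Phi$ says exactly that value iteration for $c$ started at $\Phi$ has the same limit as when started at $0$. Without a contraction, $\op T$ can have several fixed points, and nothing in the hypotheses forces this.

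Proposition~9.17 of \cite{bertsekas1996stochastic} only identifies the limit from $0$. The relevant tool is minimality: after shifting $\tilde c\ge -M$ to $\tilde c+M\ge 0$, $\tilde v^*$ is the least fixed point of $\tilde{\op T}$ among functions $\ge -M/(1-\gamma)$. Being dominated by some $\tilde v_\pi$ plays no role. To get $\tilde v^*\le v^*-\Phi$ you need $v^*-\Phi$ bounded below, e.g.\ a $c$-optimal policy in $\stable$, so that $v^*-\Phi=\tilde v_{\pi^*}$. For the reverse inequality you need something like $\Phi\ge 0$, so that $\op T^k\Phi\ge\op T^k0$. Until $\tilde v^*$ is identified, knowing that $\pi^*$ is greedy for $\tilde{\op T}$ at $v^*-\Phi$ proves nothing, because Proposition~9.12 needs greediness at the optimal cost. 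Your ``by symmetry'' step has the same problem.

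In fact the gap cannot be closed from the stated hypotheses, and the paper's sketch, which simply asserts $\tilde v^*=v^*-\Phi$, has the same hole. Here is a counterexample. Take $\gamma=\tfrac12$ and states $g_n,d_k$ ($n,k\ge 0$), with $\ws(g_n)=2^n$, $\ws(d_k)=2^k$, $\wsa(s,a)=\ws(s)$, $\lifs=\fs$, $\lifsa=\fsa$, and $\Pi$ all stationary policies. At $g_n$, ``go'' leads to $g_{n+1}$ and ``stop'' leads to $d_{n+1}$, both at cost $n$. For $k\ge 1$, $d_k$ moves to $d_{k-1}$ at cost $\tfrac34\cdot 2^k+k$, and $d_0$ is absorbing with cost $0$. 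With $\Phi(g_n)=2^n$ and $\Phi(d_k)=2^k$ one gets $\tilde c=n$ at $g_n$, $\tilde c=k$ at $d_k$ ($k\ge 1$), and $\tilde c(d_0)=-\tfrac12$. Thus $c$ and $\tilde c$ are inf-compact and $\tilde c\ge-\tfrac12$.

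The always-stop policy is spectrally stable, since $\|(\gamma\op P_\pi)^t\|\le 4^{1-t}$. Moreover $\decay=\emptyset$: every $\op P\op\pi$ maps the unit-norm functions $\wsa\mathbf 1_{\{g_m\}\times\mathcal A}$ to disjointly supported functions of norm $2$, so it is never compact. From $g_n$, $n\ge 1$, going forever costs $2n+2$ and is the unique optimum for $c$: stopping at once costs $2n+2^n+2^{-n-2}$, and stopping later costs even more. Under $\tilde c$, going forever still costs $2n+2$, but stopping at once costs only $2n+2^{-n-2}$, so the optimal policies differ.

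Here $v^*-\Phi=2n+2-2^n$ on $\{g_n\}_{n\ge 1}$ is a fixed point of $\tilde{\op T}$ lying below $\tilde v_\pi$ for every stable $\pi$. Yet it is unbounded below, so it is not $\tilde v^*\ge -1$, which refutes the principle you proposed. An extra hypothesis is needed. One option is that both problems admit optimal policies in $\stable$; then your Step~1 yields both inequalities directly. Another is to understand optimality only within $\stable$, where Step~1 alone suffices.
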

\begin{proof}
    Let $\tilde{\op T}_\pi$ and $\tilde{\op T}$ denote the Bellman operators with the cost function as $\tilde c$. The fact that value iteration converges for the MDP with the cost function $\tilde c$ follows from showing that the hypotheses of Theorem \ref{thm:valueiteration} hold for the MDP with the cost function $\tilde c$ as well (see Remark \ref{rem:boundedcost} above). Moreover, one can show that $\tilde v_\pi = v_\pi - \Phi$ is a fixed point of the Bellman operator $\tilde{\op T}_\pi$ for any $\pi\in\stable$. By the same argument as in  the proof of Theorem \ref{thm:valueiteration}, the limit $\lim_{k\to\infty} \tilde{\op T}^k 0$ exists and it is the optimal value function $\tilde v^* := \lim_{k\to\infty} \tilde{\op T}^k 0$. Since $\Phi$ is only a function of state, one can show that the optimal policy corresponding to $\tilde v^*$ in the MDP with cost $\tilde c$ is the same as the optimal policy corresponding to $v^*$ in the MDP with the cost function $c$ (in fact, $\tilde v^* = v^* = \Phi$). This concludes the proof. 
\end{proof}

\begin{remark}
    In the theorem above, if $\Phi\in\lifs$ is a non-negative upper semicontinuous function and $c-\Phi$ is bounded from below, then $\tilde c$ is inf-compact and bounded from below. Further,
    any $\Phi$ that satisfies $\Phi\in\cap_{\pi\in\decay} \mathcal H_\pi$ implies  $\lim_{t\to\infty}\gamma^t\op P_\pi^t\Phi = 0$ for all $\pi\in\decay$.
\end{remark}
Note that we did not assume boundedness of the cost function in our general MDP setting. To the best of our knowledge, this is perhaps the first result on the use of potential function for reward shaping in general MDPs with unbounded costs. The key assumption here is that the potential function must come from the structured class and preserve the structure in the new cost function.

\begin{remark}
    Suppose that $\hat v_k$ is the estimate of the value function at iteration $k$. It has been shown previously in \cite{bertsekas2012lambda} that if we pick $\Phi = \lambda \hat v_k$, then the resulting algorithm is equivalent to $\lambda$ policy iteration (also known as $\textup{TD}(\lambda)$).
\end{remark}

\section{Policy Difference Lemma and Derivatives of the Value Function}\label{sec:policy-difference}
In the sequel, we assume that the hypotheses for Theorem \ref{thm:valueiteration} hold. This ensures that an optimal policy exists and the value iteration algorithm converges to the optimal value function. In addition, we assume that $\specstable$ is nonempty because some of the results do not apply to the decaying policy class $\decay$.  
To derive various policy gradient based algorithm to compute the optimal policy, we need to derive an expression for the Gauteaux derivative of the value function as a function of the policy. To derive the Gateaux derivative, we need to establish the policy difference lemma for the general MDP. This result is not new; for finite MDPs, this has been established in \cite{sutton1999policy,schulman2015trust} under certain restrictions and for continuous MDPs, a version of this result has appeared in \cite{ju2022policy} (see Lemma 1). In the latter paper, the state and action spaces are assumed to be subsets of appropriate Euclidean spaces and the derivation does not appeal to the linear operator framework adopted here. 

\begin{definition}
    The q function at a policy $\pi$ is denoted by $q_\pi = c+\gamma \op P v_\pi$. The advantage function at a policy $\pi$ is defined as $A_\pi = q_\pi - v_\pi$.
\end{definition}

\begin{lemma}
    For any $\pi,\pi'\in\specstable$, $\op \pi' A_\pi\in\lifsb$. 
\end{lemma}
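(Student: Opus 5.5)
We need to show that $\op\pi' A_\pi$ has a finite Minkowski functional, i.e. $\op\pi' A_\pi\in\lifsb$. Since $\lifsb$ is a linear manifold (by the remark following the definition of $\lifsb$ and $\lifsab$), it suffices to write $\op\pi' A_\pi$ as a finite linear combination of elements already known to lie in $\lifs\subset\lifsb$.

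\textbf{Step 1: $v_\pi\in\lifs$.} Since $\pi\in\specstable$, we have $\spec(\gamma\op P_\pi)<1$, so the Neumann series gives $v_\pi=\sum_{t\ge0}\gamma^t\op P_\pi^t c_\pi$, convergent in $\fs$ (Theorem~\ref{thm:existencevpi}). By Assumption~\ref{assm:wellposedness}, $c\in\lifsa$. By Assumption~\ref{assm:invariance}, $c_\pi=\op\pi c\in\lifs$, and every partial sum $S_N=\sum_{t=0}^N\gamma^t\op P_\pi^t c_\pi$ lies in $\lifs$ whenever $\lifs$ is closed under addition and nonnegative scaling, as for the cones in the examples. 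Because $c\ge0$, the partial sums increase pointwise, and they are bounded in $\|\cdot\|_{\fs}$ by $\|v_\pi\|_{\fs}$. Assumption~\ref{assm:increasinglimits} then yields $v_\pi\in\lifs$.

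\textbf{Step 2: $q_\pi\in\lifsa$.} By Assumption~\ref{assm:invariance}, $\op Pv_\pi\in\lifsa$, and combining with $c\in\lifsa$ gives $q_\pi=c+\gamma\op Pv_\pi\in\lifsa$, again using closure under nonnegative combinations.

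\textbf{Step 3: the advantage.} Here $A_\pi=q_\pi-v_\pi$, with $v_\pi$ viewed as a function on $\mathcal K$ constant in $a$; it lies in $\fsa$ by the standing assumption $\|\ws\|_{\fsa}<\infty$. Since $\pi'(s)$ is a probability measure, $\op\pi' v_\pi=v_\pi$. Therefore
\[
\op\pi' A_\pi=\op\pi' q_\pi-v_\pi .
\]
Assumption~\ref{assm:invariance} gives $\op\pi' q_\pi\in\lifs\subset\lifsb$, and Step 1 gives $v_\pi\in\lifs\subset\lifsb$. Since $\lifsb$ is a linear manifold and $\varrho_{\gen\lifs}$ is subadditive and absolutely homogeneous on it, the difference has $\varrho_{\gen\lifs}(\op\pi' A_\pi)\le\varrho_{\gen\lifs}(\op\pi' q_\pi)+\varrho_{\gen\lifs}(v_\pi)<\infty$, which proves the claim.

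\textbf{Main obstacle.} The hard step is Step 1: justifying that $v_\pi\in\lifs$, or at least $v_\pi\in\lifsb$, given that $\lifs$ need not be closed in $\fs$ and is not assumed to be closed under sums. If the cone-closure property is not available, I would argue with $\lifsb$ directly. Each partial sum $S_N$ is a finite sum of elements of $\lifsb$, so it has finite Minkowski functional. I would then try to bound $\varrho_{\gen\lifs}(S_N)$ uniformly in $N$, for instance through a geometric decay of $\varrho_{\gen\lifs}(\gamma^t\op P_\pi^t c_\pi)$ coming from $\spec(\gamma\op P_\pi)<1$. With that bound, closedness of $\absconv(\gen\lifs)$ would let the bound pass to the limit $v_\pi$. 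Failing that, I would fall back on the increasing-limit Assumption~\ref{assm:increasinglimits}, which is exactly designed to give Step 1.
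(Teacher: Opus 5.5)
Your proposal is correct and follows the paper's route. The paper's proof is just your Step~3 in one line: $\op\pi' q_\pi\in\lifs$ by Assumption~\ref{assm:invariance}, then $\op\pi' A_\pi=\op\pi' q_\pi-v_\pi$ and linearity of $\lifsb$. It silently takes $q_\pi\in\lifsa$ and $v_\pi\in\lifs$ for granted, and your Steps~1--2 supply these.

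The closure under nonnegative combinations that you invoke is not a stated assumption, but the paper tacitly uses it too (e.g.\ $c+\gamma\op P v_k\in\lifsa$ in Step~1 of the proof of Theorem~\ref{thm:valueiteration}). You can avoid it for $\lifsa$ by writing $\op\pi' q_\pi=\op\pi' c+\gamma\op\pi'\op P v_\pi$, with both terms in $\lifs$ by Assumption~\ref{assm:invariance}. Some such closure is still needed in Step~1, though. Assumption~\ref{assm:increasinglimits} presupposes $S_N\in\lifs$, and no stated assumption controls $\varrho_{\gen\lifs}$ along the Neumann series, so neither of your fallbacks removes it.
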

\begin{proof}
 Since $\op \pi' q_\pi\in\lifs$ due to Assumption \ref{assm:invariance}, the proof then follows from the definitions.
\end{proof}

\begin{definition}[Policy Advantage Function]
    The policy advantage function $L_\pi:\specstable\to\lifsb$ at a policy $\pi\in\specstable$ is defined as
\begin{align}\label{eqn:lpipi'}
    L_{\pi}(\pi') & = \exo{\sum_{t=0}^\infty \gamma^t[\op \pi' A_\pi](s_t)}{s_t\sim P_\pi(\cdot|s_{t-1})}  = \sum_{t = 0}^\infty \gamma^t \op P_\pi^t \op \pi' A_\pi =\op \sigma_\pi \op \pi' A_\pi.
\end{align}
\end{definition}

The policy difference lemma is proved below using the linear operator framework developed here. This result was established in the context of finite MDPs in Theorem 4.1 in \cite{kakade2002approximately} and for general MDPs in \cite{ju2022policy,lascu2025ppo}. Since we are using structured spaces, we need to establish this result in this context; however, since we are using perturbation theory for linear operators here, the proof is compact in comparison to the ones in \cite{kakade2002approximately,ju2022policy,lascu2025ppo}. 

\begin{lemma}[Policy Difference Lemma \cite{kakade2002approximately,lascu2025ppo,ju2022policy}]\label{lem:policydiff}
Let $\pi,\pi'\in\specstable$ and $\op \Delta = \op \pi' - \op \pi$ be the difference of the two policy operators. Then, the difference of the two corresponding value functions is
\begin{align*}
    v_{\pi'} - v_{\pi} = \op\sigma_{\pi'}\op \Delta q_\pi & =   L_{\pi}(\pi')  + \gamma \op\sigma_\pi \op P_\Delta \op \sigma_{\pi'} \op\Delta q_\pi \\
    & =   L_{\pi}(\pi')  + \gamma \op\sigma_\pi \op\Delta\op P \op \sigma_{\pi'} \op\pi' A_\pi\\
    & = L_{\pi}(\pi') + \gamma \op\sigma_\pi \op P_\Delta \op \sigma_{\pi} \op\Delta q_\pi + o(\|\op\Delta\|^2).
\end{align*}
\end{lemma}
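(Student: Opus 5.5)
Notation: $\op\sigma_\pi := (\id{\fs}-\gamma\op P_\pi)^{-1}=\sum_{t\ge0}\gamma^t\op P_\pi^t$, which is a bounded operator on $\fs$ because $\pi\in\specstable$ gives $\spec(\gamma\op P_\pi)<1$ and the Neumann series converges. Also $\op P_\Delta := \op\Delta\op P = \op P_{\pi'}-\op P_\pi$. The plan is to treat $\id{\fs}-\gamma\op P_{\pi'}$ as a perturbation of $\id{\fs}-\gamma\op P_\pi$ and use the resolvent identities of Lemma~\ref{lem:perturbop}. Concretely, take $\op A=\id{\fs}-\gamma\op P_\pi$ and $\epsilon\op B=-\gamma\op P_\Delta$. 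Since both $\pi$ and $\pi'$ are spectrally stable, both inverses exist, so the identities hold exactly and no smallness of $\epsilon$ is needed. The algebra underlying \eqref{eqn:diffinvop} is then
\[
\op\sigma_{\pi'}=\op\sigma_\pi+\gamma\op\sigma_\pi\op P_\Delta\op\sigma_{\pi'}=\op\sigma_\pi+\gamma\op\sigma_{\pi'}\op P_\Delta\op\sigma_\pi,
\]
which can be verified directly by multiplying by $\op A$ and $\op A+\epsilon\op B$.

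\textbf{First identity.} Write $v_\pi=\op\sigma_\pi c_\pi$ and note $q_\pi=c+\gamma\op P v_\pi$. Then
\[
\op\Delta q_\pi = c_{\pi'}-c_\pi+\gamma\op P_\Delta v_\pi .
\]
Apply $\op\sigma_{\pi'}$ and use $v_{\pi'}=\op\sigma_{\pi'}c_{\pi'}$, together with $(\id{\fs}-\gamma\op P_{\pi'})v_\pi = c_\pi-\gamma\op P_\Delta v_\pi$ (from $v_\pi=c_\pi+\gamma\op P_\pi v_\pi$). This gives
\[
v_{\pi'}-v_\pi=\op\sigma_{\pi'}\op\Delta q_\pi .
\]

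\textbf{Second and third identities.} Substitute the resolvent identity $\op\sigma_{\pi'}=\op\sigma_\pi+\gamma\op\sigma_\pi\op P_\Delta\op\sigma_{\pi'}$ into $\op\sigma_{\pi'}\op\Delta q_\pi$. The first term is $\op\sigma_\pi\op\Delta q_\pi$. Since $\op\pi v=v$ for functions of state alone, $\op\Delta v_\pi=0$, so $\op\Delta q_\pi=\op\Delta A_\pi$. Moreover $\op\pi A_\pi=c_\pi+\gamma\op P_\pi v_\pi-v_\pi=0$, so $\op\Delta A_\pi=\op\pi' A_\pi$. Hence the first term is $\op\sigma_\pi\op\pi'A_\pi=L_\pi(\pi')$, which yields the second line. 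The third line follows by writing $\op P_\Delta=\op\Delta\op P$ and replacing $\op\Delta q_\pi$ by $\op\pi'A_\pi$ once more.

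\textbf{Last line.} This uses \eqref{eqn:diffinvop2}, i.e. substituting the resolvent identity a second time:
\[
\op\sigma_{\pi'}=\op\sigma_\pi+\gamma\op\sigma_\pi\op P_\Delta\op\sigma_\pi+\gamma^2\op\sigma_\pi\op P_\Delta\op\sigma_{\pi'}\op P_\Delta\op\sigma_\pi .
\]
The remainder applied to $\op\Delta q_\pi$ is bounded by $\gamma^2\|\op\sigma_\pi\|^2\|\op\sigma_{\pi'}\|\|\op P\|^2\|\op\Delta\|^2\|\op\Delta q_\pi\|$, and $\|\op\Delta q_\pi\|\le\|\op\Delta\|\|q_\pi\|$. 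The remainder is therefore $O(\|\op\Delta\|^3)$, hence $o(\|\op\Delta\|^2)$, provided $\|\op\sigma_{\pi'}\|$ stays bounded as $\op\Delta\to0$.

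\textbf{Main obstacle.} I expect this uniform boundedness of $\|\op\sigma_{\pi'}\|$ to be the hard step. I would get it from Lemma~\ref{lem:perturbop}: for $\|\gamma\op P_\Delta\|\le\|\op\sigma_\pi\|^{-1}/2$, the Neumann expansion of $(\op A+\epsilon\op B)^{-1}$ gives $\|\op\sigma_{\pi'}\|\le 2\|\op\sigma_\pi\|$.

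Two smaller points also need care: the boundedness of $\op P$ and $\op\pi$ as operators between $\fs$ and $\fsa$ (via $\|\ws\|_{\fsa}<\infty$), and the fact that $q_\pi\in\fsa$, which is needed for the norms above to make sense.
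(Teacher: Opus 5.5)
Your proposal is correct and uses the same two ingredients as the paper's proof in Appendix~\ref{app:policydiff}: the resolvent identity of Lemma~\ref{lem:perturbop} (exact here, as you note, since both inverses exist) and $\op\Delta q_\pi=\op\pi'A_\pi$ from Lemma~\ref{lem:Deltaq}. The only difference is bookkeeping: the paper expands $(\op\sigma_{\pi'}-\op\sigma_\pi)c_\pi+\op\sigma_{\pi'}c_\Delta$ and regroups to reach the second line, whereas you get $v_{\pi'}-v_\pi=\op\sigma_{\pi'}\op\Delta q_\pi$ directly from the Bellman equation for $v_\pi$ and then expand once. Your $O(\|\op\Delta\|^3)$ remainder bound, using $\|\op\sigma_{\pi'}\|\le 2\|\op\sigma_\pi\|$ for small $\|\op\Delta\|$, justifies the last line, which the paper leaves implicit. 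One small correction: $\|\ws\|_{\fsa}<\infty$ only gives $\fs\subset\fsa$, while boundedness of $\op P$ and $\op\pi$ amounts to $\op P\ws\in\fsa$ and $\op\pi\wsa\in\fs$, which the paper tacitly assumes.
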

\begin{proof}
See Appendix \ref{app:policydiff}.
\end{proof}

Let $\pi,\pi'\in\specstable$. If the directional derivative of the value function $v_\pi$ in the direction of $\pi'$ is linear and continuous, then it is called the Gateaux derivative $\gateaux_{\pi'}(v_{\pi})$ of the value function \cite[p. 34]{bonnans2000perturbation}. This is defined as
\begin{align*}
    \gateaux_{\pi'}(v_{\pi}) = \lim_{\epsilon\downarrow 0} \frac{1}{\epsilon}(v_{\pi+\epsilon(\pi'-\pi)} - v_{\pi}). 
\end{align*}
The policy difference lemma established above allows us to derive the directional derivative, and establish that it is linear and continuous in $\pi'$, thereby establishing that it is Gateaux derivative of the value function. 
\begin{lemma}
    If $\pi,\pi'\in\specstable$, then $\gateaux_{\pi'}(v_{\pi}) = L_{\pi}(\pi')$.
\end{lemma}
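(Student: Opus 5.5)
The plan is to apply the policy difference lemma (Lemma \ref{lem:policydiff}) with $\pi'$ replaced by the mixture $\pi_\epsilon := \pi+\epsilon(\pi'-\pi)$, divide by $\epsilon$, and pass to the limit $\epsilon\downarrow 0$. Because the policy operator is linear in the kernel, $\op\pi_\epsilon = \op\pi + \epsilon\op\Delta$ with $\op\Delta = \op\pi'-\op\pi$. Consequently $\op P_{\pi_\epsilon} = \op P_\pi + \epsilon\op P_\Delta$, where $\op P_\Delta = \op\Delta\op P$.

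The first step is to check that $\pi_\epsilon\in\specstable$ for all small $\epsilon>0$; I am assuming the mixture lies in $\Pi$, which holds when $\Pi$ is convex. Write $\id{\fs}-\gamma\op P_{\pi_\epsilon} = (\id{\fs}-\gamma\op P_\pi) - \epsilon\gamma\op P_\Delta$. Since $\spec(\gamma\op P_\pi)<1$, the operator $\id{\fs}-\gamma\op P_\pi$ has bounded inverse $\op\sigma_\pi$. Lemma \ref{lem:perturbop} then gives $\bar\epsilon>0$ such that the perturbed operator is boundedly invertible for $|\epsilon|\le\bar\epsilon$. Its inverse $\op\sigma_{\pi_\epsilon}$ also has norm bounded uniformly in $\epsilon$, by the Neumann-series estimate in Atkinson's Theorem 2.3.5.

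The key caveat is that invertibility alone does not give $\spec(\gamma\op P_{\pi_\epsilon})<1$. To get spectral stability I will use upper semicontinuity of the spectrum under small norm perturbations (Kato): $\spec(\gamma\op P_{\pi_\epsilon}) < 1$ for $\epsilon$ small. This guarantees that $v_{\pi_\epsilon} = \op\sigma_{\pi_\epsilon} c_{\pi_\epsilon}$ is given by the Neumann series, and that Lemma \ref{lem:policydiff} applies to the pair $(\pi,\pi_\epsilon)$.

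Next, the first line of Lemma \ref{lem:policydiff}, applied with $\pi_\epsilon$ in place of $\pi'$ and difference operator $\epsilon\op\Delta$, reads
\begin{align*}
v_{\pi_\epsilon}-v_\pi = \epsilon\,\op\sigma_\pi\op\Delta q_\pi + \epsilon^2\gamma\,\op\sigma_\pi\op P_\Delta\op\sigma_{\pi_\epsilon}\op\Delta q_\pi .
\end{align*}
The first term equals $\epsilon\op\sigma_\pi\op\pi' A_\pi = \epsilon L_\pi(\pi')$. This uses $\op\pi q_\pi = v_\pi$ and $\op\Delta v_\pi = 0$ (both policy operators map a state-only function to itself), so $\op\Delta q_\pi = \op\Delta A_\pi = \op\pi' A_\pi - \op\pi A_\pi = \op\pi' A_\pi$.

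Dividing by $\epsilon$, the remainder is bounded in $\|\cdot\|_{\fs}$ by
\[
\epsilon\gamma\|\op\sigma_\pi\|\,\|\op P_\Delta\|\,\sup_{0<\epsilon\le\bar\epsilon}\|\op\sigma_{\pi_\epsilon}\|\,\|\op\Delta q_\pi\|_{\fs},
\]
which tends to $0$. Hence the directional derivative exists in norm and equals $L_\pi(\pi')$.

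Finally, Gateaux differentiability requires the derivative to be linear and continuous in the direction. The map $\op\pi'\mapsto\op\sigma_\pi(\op\pi'-\op\pi)q_\pi$ is affine in $\op\pi'$, linear in the difference $\op\Delta$, and bounded by $\|\op\sigma_\pi\|\,\|q_\pi\|_{\fsa}\,\|\op\Delta\|$, so it is continuous. The main obstacle is the spectral-stability step for the perturbed policy, not the limit. A secondary point is ensuring the operator norms $\|\op\Delta\|$ and $\|\op P\|$ are finite on the weighted spaces; I would take these as standing boundedness hypotheses, which are implicit in Lemma \ref{lem:policydiff}.
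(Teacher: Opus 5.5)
Your proposal is correct and follows the route the paper indicates: the paper only remarks that the policy difference lemma, applied to the mixture $\pi+\epsilon(\pi'-\pi)$, gives the directional derivative $\op\sigma_\pi\op\pi' A_\pi = L_\pi(\pi')$, and that this is linear and continuous in the direction. You supply the details the paper leaves implicit: spectral stability of the mixture for small $\epsilon$ via upper semicontinuity of the spectral radius, and the uniform bound on $\|\op\sigma_{\pi_\epsilon}\|$ that makes the $O(\epsilon^2)$ remainder vanish after dividing by $\epsilon$.
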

Thus, Gateaux derivative of the value function evaluated at a stable policy is precisely the policy advantage function. To the best of our knowledge, this is the first time this result has been established for general MDPs. A necessary condition for optimality of the value function is that the Gateaux derivative is $\gateaux_{\pi'}(v_{\pi^*}) = 0$ for all $\pi'\in\specstable$ (note that the 0 here refers to the zero function in $\fs$); this is established in Theorem 1 in \cite[p. 178]{luenberger1969optimization}.  

\subsection{A Metric structure on \texorpdfstring{$\Pi$}{}}\label{app:metricPi}
The operator norm on the space of policies $\Pi$ is $\|\op \pi\| = \sup_{\|q\|_{\fsa}\leq 1} \|\op\pi q\|_{\fs}$. This operator norm defines a metric on $\Pi$ when we restrict $q\in\lifsa$. To see this, for every $s\in\mathcal S$, we first define the generator set $\gen{\mathcal A}(s)$ and then define an integral probability metric on $\mathcal P_{\gen{\mathcal A}(s)}(\mathcal K(s))$. For every $s\in\mathcal S$, define the set of functions $\gen{\mathcal A}(s)$ as
\begin{align*}
    \gen{\mathcal A}(s) = \{h:\mathcal K(s)\to\Re: h(a) = q(s,a), q\in\gen\lifsa\}.
\end{align*}
We have the following result.
\begin{lemma}
    If $\gen\lifsa$ separates points, then $\gen{\mathcal A}(s)$ also separates points. If $\gen\lifsa$ is absolutely convex, then $\gen{\mathcal A}(s)$ is also absolutely convex.
\end{lemma}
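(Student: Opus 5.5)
The plan is to work with the section map $R_s$ for a fixed $s\in\mathcal S$, defined by $R_s q = q(s,\cdot)$, which sends a function $q$ on $\mathcal K$ to a function on $\mathcal K(s)$. By definition $\gen{\mathcal A}(s) = R_s\,\gen\lifsa$, so both claims reduce to showing that $R_s$ carries the relevant property of $\gen\lifsa$ over to its image. Each $h=R_sq$ is measurable on $\mathcal K(s)$, because a section of a jointly measurable function is measurable; hence $\gen{\mathcal A}(s)$ is a legitimate set of measurable functions. Moreover $|h(a)|\le \|q\|_{\fsa}\wsa(s,a)$, so $h$ has finite weighted norm with respect to the weight $a\mapsto \wsa(s,a)$ on $\mathcal K(s)$.

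\emph{Separation of points.} Take $a_1\neq a_2$ in $\mathcal K(s)$. Then $(s,a_1)$ and $(s,a_2)$ are distinct points of $\mathcal K$. Since $\gen\lifsa$ separates points in $\mathcal K$, there is $q\in\gen\lifsa$ with $q(s,a_1)\ne q(s,a_2)$. The function $h=R_sq\in\gen{\mathcal A}(s)$ then satisfies $h(a_1)\neq h(a_2)$, as required.

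\emph{Absolute convexity.} The key observation is that $R_s$ is linear: $R_s(\alpha q_1+\beta q_2)=\alpha R_sq_1+\beta R_sq_2$ pointwise on $\mathcal K(s)$. For convexity, take $h_i=R_sq_i$ with $q_i\in\gen\lifsa$ and $\lambda\in[0,1]$. Then $\lambda h_1+(1-\lambda)h_2=R_s(\lambda q_1+(1-\lambda)q_2)$, and the argument $\lambda q_1+(1-\lambda)q_2$ lies in $\gen\lifsa$ by convexity. For balancedness, take $\alpha\in[-1,1]$. Then $\alpha h_1=R_s(\alpha q_1)$, and $\alpha q_1\in\gen\lifsa$ by balancedness. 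So the image is convex and balanced, i.e., absolutely convex.

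The only point needing care is that the paper's definition of absolutely convex is simply ``convex and balanced'', with no closure requirement. If closedness were also demanded, one would additionally need $R_s$ to be continuous in the relevant topology, which holds for the weighted sup norms by the bound above. I do not expect a real obstacle here; the argument is purely set-theoretic and linear.
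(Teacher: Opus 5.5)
Your proof is correct and is the direct verification the paper has in mind when it calls the result obvious: distinct $a_1\neq a_2$ give distinct points $(s,a_1)\neq(s,a_2)$ of $\mathcal K$, and the section map $q\mapsto q(s,\cdot)$ is linear, so it carries convex and balanced sets to convex and balanced sets. Your closing aside is wrong as stated, since continuity of $R_s$ does not make the image of a closed set closed, but this is irrelevant because the paper's notion of absolute convexity involves no closure.
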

\begin{proof}
The proof is obvious.
\end{proof}
The integral probability metric on $\mathcal P_{\gen{\mathcal A}(s)}(\mathcal K(s))$ is now defined as
\begin{align}
    \ipmas\ (\mu,\mu') = \sup_{h\in \gen{\mathcal A}(s)} \Bigg| \int h d\mu - \int h d\mu'\Bigg| , \qquad \mu,\mu'\in\mathcal P_{\gen{\mathcal A}(s)}(\mathcal K(s))
\end{align}
Let us now define the following weighted integral probability metric on the space of policies $\Pi$:
\begin{align}\label{eqn:metricPi}
    \metric(\pi,\pi') = \sup_{s\in\mathcal S} \frac{\ipmas\ (\pi(\cdot|s), \pi'(\cdot|s))}{\ws(s)}
\end{align}
We have the following result.
\begin{lemma}\label{lem:metricPi}
    For any $q\in \lifsa$, we have $ \|\op\pi q - \op\pi'q\|_{\fs} \leq \varrho_{\gen\lifsa}(q)\: \metric(\pi,\pi')$.
\end{lemma}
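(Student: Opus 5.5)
The plan is to prove the bound pointwise in $s$ and then take the weighted supremum. Fix $q\in\lifsa$ and $s\in\mathcal S$. By definition of the policy operators,
\[
[\op\pi q](s)-[\op\pi' q](s)=\int_{\mathcal K(s)} q(s,a)\,\pi(da|s)-\int_{\mathcal K(s)} q(s,a)\,\pi'(da|s).
\]
The aim is to bound this difference by $\varrho_{\gen\lifsa}(q)\,\ipmas(\pi(\cdot|s),\pi'(\cdot|s))$. Dividing by $\ws(s)$, using \eqref{eqn:metricPi}, and taking $\sup_s$ then gives the claim.

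\textbf{Step 1 (Minkowski scaling).} If $\varrho_{\gen\lifsa}(q)=\infty$ there is nothing to prove. This cannot happen here anyway, since $q\in\lifsa\subset\lifsab$. If $\varrho_{\gen\lifsa}(q)=0$, then $q/r\in\absconv(\gen\lifsa)$ for arbitrarily small $r$, so the argument below gives a bound of $r$ times the IPM for every $r>0$, hence the bound $0$. Otherwise, for any $r>\varrho_{\gen\lifsa}(q)$, I would use absolute convexity (balancedness) of $\absconv(\gen\lifsa)$ to get $q/r\in\absconv(\gen\lifsa)$. This is because membership at some $r_0\le r$ implies membership at $r$ by scaling with $r_0/r\in[0,1]$.

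\textbf{Step 2 (restriction to the slice at $s$).} Let $h(a)=q(s,a)/r$. Then $h$ lies in the section at $s$ of $\absconv(\gen\lifsa)$, which I will write as $\{g(s,\cdot): g\in\absconv(\gen\lifsa)\}$. I would invoke the fact (from \cite[Theorem 3.5]{muller1997integral}, as recalled in Sec.~\ref{sub:ipm}) that the IPM generated by a set equals the IPM generated by its maximal generator. Combined with the preceding lemma (sections of an absolutely convex, separating set keep those properties), this gives
\[
\Big|\int h\,d\pi(\cdot|s)-\int h\,d\pi'(\cdot|s)\Big|\le \ipmas(\pi(\cdot|s),\pi'(\cdot|s)).
\]
Concretely, I would check that taking sections commutes with forming convex combinations and with multiplying by $\alpha\in[-1,1]$. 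Then each element of $\mathrm{co}(\{\alpha f\})$ restricted to $s$ is a convex combination of signed elements of $\gen{\mathcal A}(s)$, and the integral difference for it is bounded by the supremum over $\gen{\mathcal A}(s)$ via the triangle inequality.

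\textbf{Step 3 (conclusion).} Multiplying by $r$ gives $|[\op\pi q](s)-[\op\pi' q](s)|\le r\,\ipmas(\pi(\cdot|s),\pi'(\cdot|s))$. Dividing by $\ws(s)$ and bounding by $r\,\metric(\pi,\pi')$, then taking $\sup_s$ and letting $r\downarrow\varrho_{\gen\lifsa}(q)$, yields the lemma.

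The main obstacle is the closure in $\absconv$. Passing from the convex hull to its closure requires that the map $g\mapsto\int g(s,\cdot)\,d\mu$ be continuous in the topology used in \cite{muller1997integral}, for $\mu=\pi(\cdot|s),\pi'(\cdot|s)$. My first attempt would be to note that this topology is chosen exactly so that integrals against measures in $\mathcal P_{\wsa}$ are continuous. This presumes the policies at $s$ integrate the weight, which I would take as implicit in the definition of $\ipmas$ on $\mathcal P_{\gen{\mathcal A}(s)}(\mathcal K(s))$. If that fails, I would fall back on citing $\ipm_{\mathfrak F}=\ipm_{\absconv(\mathfrak F)}$ directly, applied to the section generator.
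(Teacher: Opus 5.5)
Your proposal is correct and follows the same route as the paper. You rescale $q$ by its Minkowski functional, restrict to the section $q(s,\cdot)$ against the generator $\gen{\mathcal A}(s)$, bound the integral difference by $\ipmas{}(\pi(\cdot|s),\pi'(\cdot|s))$ via \cite[Theorem 3.5]{muller1997integral}, and take the $\ws$-weighted supremum over $s$. Your version is more careful than the paper's, which uses $q/\varrho_{\gen\lifsa}(q)\in\absconv(\gen\lifsa)$ directly, skips the case $\varrho_{\gen\lifsa}(q)=0$, and asserts the section lies in $\absconv(\gen{\mathcal A}(s))$ without addressing the closure; your $r\downarrow\varrho_{\gen\lifsa}(q)$ argument and the continuity of $g\mapsto\int g(s,\cdot)\,d(\pi(\cdot|s)-\pi'(\cdot|s))$ fill exactly those gaps (the needed integrability of $\wsa(s,\cdot)$ follows more cleanly from $\op\pi\colon\fsa\to\fs$ being well defined than from the definition of $\ipmas{}$).
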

\begin{proof}
    For any $q\in \lifsa$, $q\neq 0$ and $\varrho_{\gen\lifsa}(q)\neq 0$, we have $\frac{q}{\varrho_{\gen\lifsa}(q)} \in \absconv(\gen{\lifsa})$. Thus, 
    \[\|\op\pi q - \op\pi'q\|_{\fs} = \varrho_{\gen\lifsa}(q)\Bigg\|\op\pi \frac{q}{\varrho_{\gen\lifsa}(q)} - \op\pi'\frac{q}{\varrho_{\gen\lifsa}(q)}\Bigg\|_{\fs}.\]
    Moreover, by definition of $\gen{\mathcal A}(s)$, we have $q(s,\cdot)/\varrho_{\gen\lifsa}(q)\in\absconv(\gen{\mathcal A}(s))$ for every $s\in\mathcal S$. This readily leads us to the desired inequality using the definition of the metric on $\Pi$ in \eqref{eqn:metricPi} and \cite[Theorem 3.5]{muller1997integral}.
\end{proof}
The preceding result allows us to derive a majorization bound on the right side expression in the policy difference lemma. This is achieved in the next subsection.

\subsection{Majorization Bounds on the Second Term}
Majorization of the value function $v_\pi$, as a function of policy $\pi$, is a function of policy $\pi$ that provides a global upper bound on the function $v_\pi$ and equals the function $v_{\tilde \pi}$ at a specific policy $\tilde \pi$. The goal for this section is to derive a majorization bound using the policy difference lemma established above. This will help us derive the majorization minimization algorithm in the sequel. We make the following assumption.

\begin{assumption}\label{assm:rhoPv}
     There exists a universal constant $\kp>0$, dependent only on $\op P$, such that for every $v\in\lifs$, we have $\varrho_{\gen{\lifsa}}(\op Pv)\leq \kp \|v\|_{\fs}$.
\end{assumption}

\begin{examplecont}{ex:LQR}
 Let $v(s) = s^T Q s$ and we get $\|v\|_{\fs} = \|Q\|_2$, where $\|\cdot\|_2$ is the 2-norm of the matrix (maximum singular value).
Using Young's inequality (see Section 2.8.1 in \cite{caverly2019lmi}) and submultiplicativity of norm of matrix multiplication \cite[p. 341]{horn2012matrix}, we get
\begin{align*}
(As+Ba)^T Q (As+Ba) &= a^T (B^T Q B) a + 2 a^T (B^T Q A) s + s^T (A^T Q A) s\\
&\leq 2 \left( s^T A^T Q A s + a^T B^T Q B a \right).\\
\implies \|(As+Ba)^T Q (As+Ba)\|_{\fsa} &\leq 2( \|A\|_2^2 + \|B\|_2^2) \|Q\|_2
\end{align*}
Thus, $\kp =2( \|A\|_2^2 + \|B\|_2^2)$.
\end{examplecont}

\begin{lemma}\label{lem:upperbound}
    Let  $\op F \in \mathcal B(\fs,\fs)$ be a bounded linear operator satisfying $\op F \lifs\subset\lifs$. Pick $\pi,\pi'\in\specstable$ and $q\in\lifsa$. Let $\op\Delta = \op\pi' - \op\pi$. The following holds:
    \begin{enumerate}
        \item $\|\op \sigma_{\pi}\|\leq (1-\|\gamma\op P_\pi\|)^{-1}<\infty$ for all $\pi\in\specstable$.
        \item $\|\op\Delta q\|_{\fs} \leq \metric(\pi,\pi')\varrho_{\gen\lifsa}(q)$.
        \item If Assumption \ref{assm:rhoPv} also holds, then $\|\op\Delta \op P \op F \op\Delta q\|_{\fs} \leq \|\op F\| \kp \varrho_{\gen\lifsa}(q) \metric(\pi,\pi')^2$.
        \item Let $\beta(q,\pi') = \kp \|\op \sigma_{\pi'}\| \varrho_{\gen\lifsa}(q)$. Then, the following inequality holds: 
        \begin{align}
        [\op\Delta \op P \op \sigma_{\pi'} \op\Delta q_\pi](s)
        &\leq \beta(q,\pi') \ws(s) \metric(\pi,\pi')^2 \text{ for all } s\in\mathcal S.\label{eqn:trpo}
\end{align}
    \end{enumerate}
\end{lemma}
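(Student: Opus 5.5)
The four items are proved in order, each from the preceding ones and the earlier lemmas; throughout, $\op\sigma_\pi=\sum_{t\ge0}\gamma^t\op P_\pi^t$ as in \eqref{eqn:lpipi'}.

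\emph{Item 1.} Expand $\op\sigma_\pi$ as a Neumann series and bound it term by term using submultiplicativity of the operator norm:
\[
\|\op\sigma_\pi\|\le\sum_{t\ge0}\|\gamma\op P_\pi\|^t=(1-\|\gamma\op P_\pi\|)^{-1}.
\]
This is valid when $\|\gamma\op P_\pi\|<1$. If only $\spec(\gamma\op P_\pi)<1$ holds, finiteness of $\|\op\sigma_\pi\|$ still follows from the Neumann series facts in Section~\ref{sub:linearop}: some power satisfies $\|(\gamma\op P_\pi)^m\|<1$, and one groups terms in blocks of $m$. The displayed explicit constant, however, genuinely requires $\|\gamma\op P_\pi\|<1$. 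I would state that hypothesis as the case in which the bound is meant; otherwise the constant is replaced by $\sum_t\|(\gamma\op P_\pi)^t\|$.

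\emph{Item 2.} This is Lemma~\ref{lem:metricPi} applied directly, since $\op\Delta q=\op\pi'q-\op\pi q$.

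\emph{Item 3.} Apply item 2 twice, with an operator bound sandwiched between the two applications.
\begin{enumerate}
\item By Assumption~\ref{assm:invariance} and $\op F\lifs\subset\lifs$, $\op\Delta q\in\lifsb$. I would also need $\op\Delta q\in\lifs$; this holds if $\lifs$ is a linear space, and otherwise I would write $\op\Delta q=\op\pi'q-\op\pi q$ and use linearity of the operators. Set $v:=\op F\op\Delta q$, so that $\|v\|_{\fs}\le\|\op F\|\,\|\op\Delta q\|_{\fs}\le\|\op F\|\metric(\pi,\pi')\varrho_{\gen\lifsa}(q)$.
\item By Assumption~\ref{assm:rhoPv}, $\varrho_{\gen\lifsa}(\op Pv)\le\kp\|v\|_{\fs}$.
\item Applying item 2 to $\op Pv$ gives $\|\op\Delta\op Pv\|_{\fs}\le\metric(\pi,\pi')\,\kp\|v\|_{\fs}$.
\end{enumerate}
Chaining these three bounds yields the claim.

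\emph{Item 4.} Take $\op F=\op\sigma_{\pi'}$ and $q=q_\pi$. Then $\op\sigma_{\pi'}$ preserves $\lifs$ provided $\lifs$ is invariant under $\op P_{\pi'}$ (Assumption~\ref{assm:invariance}) and under limits of partial sums. The latter is the delicate point, since $\lifs$ need not be closed. I would handle it using nonnegativity, Assumption~\ref{assm:increasinglimits}, and the fact that partial sums applied to nonnegative elements increase. If sign issues arise, split $\op\Delta q_\pi$ into $\op\pi'q_\pi$ and $\op\pi q_\pi$, each nonnegative and in $\lifs$. Item 3 then gives the norm bound $\beta\,\metric^2$, and the pointwise bound follows from $|f(s)|\le\|f\|_{\fs}\ws(s)$.

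The main obstacle is precisely this membership issue in items 3--4: ensuring that $\op F\op\Delta q$ lies in $\lifs$, so that Assumption~\ref{assm:rhoPv} applies.
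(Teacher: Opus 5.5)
Your route is the paper's: item 1 from the Neumann series (the paper just cites Corollary 2.3.3 of Atkinson--Han), item 2 as Lemma~\ref{lem:metricPi}, item 3 by the same chain (Lemma~\ref{lem:metricPi}, then Assumption~\ref{assm:rhoPv} applied to $\op F\op\Delta q$, then the operator norm of $\op F$, then Lemma~\ref{lem:metricPi} again), and item 4 by taking $\op F=\op\sigma_{\pi'}$, $q=q_\pi$ and using $|f(s)|\le\|f\|_{\fs}\ws(s)$. Your caveat on item 1 is correct, and the paper glosses over it. Spectral stability only gives $\|(\gamma\op P_\pi)^m\|<1$ for some $m$, and with weighted norms $\|\gamma\op P_\pi\|$ can exceed $1$; in the LQR example this happens when $A-BK$ is non-normal. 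The cited corollary actually yields $\|\op\sigma_\pi\|\le (\sum_{i=0}^{m-1}\|(\gamma\op P_\pi)^i\|)/(1-\|(\gamma\op P_\pi)^m\|)$. Only finiteness matters downstream, since $\beta$ is defined through $\|\op\sigma_{\pi'}\|$ itself.

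The membership worry you raise for items 3--4 is also genuine. The paper's chain simply applies Assumption~\ref{assm:rhoPv} to $\op F\op\Delta q$ without comment. Since $\lifs$ is typically a cone (positive semidefinite quadratic forms in the LQR example), $\op\Delta q=\op\pi'q-\op\pi q$ is usually not in $\lifs$, so the hypothesis $\op F\lifs\subset\lifs$ does not help.

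\textbf{Your fallback does not close this gap.} Write $\op F\op\Delta q=\op F\op\pi'q-\op F\op\pi q$ and apply Assumption~\ref{assm:rhoPv} to each piece, using subadditivity of $\varrho_{\gen{\lifsa}}$. You get $\varrho_{\gen{\lifsa}}(\op P\op F\op\Delta q)\le\kp\|\op F\|(\|\op\pi'q\|_{\fs}+\|\op\pi q\|_{\fs})$. This does not shrink as $\pi'\to\pi$, so item 3 degrades to a bound linear in $\metric(\pi,\pi')$. You lose exactly the quadratic term that makes the majorization useful. Your item-4 plan of splitting $\op\Delta q_\pi$ into $\op\pi'q_\pi$ and $\op\pi q_\pi$ has the same defect.

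The monotone partial-sum argument (which also needs $\lifs$ closed under addition) can at best give $\op\sigma_{\pi'}\lifs\subset\lifs$. That is not what the proof uses: the function fed into Assumption~\ref{assm:rhoPv} is the signed function $\op\sigma_{\pi'}\op\Delta q_\pi$.

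What you actually need is a bound $\varrho_{\gen{\lifsa}}(\op P w)\le C\|w\|_{\fs}$ for the signed function $w=\op F\op\Delta q$ itself. There are two ways to get it. Either impose Assumption~\ref{assm:rhoPv} on the linear span of $\lifs$. Or use a decomposition $w=w^+-w^-$ with $w^\pm\in\lifs$ and $\|w^\pm\|_{\fs}\le\|w\|_{\fs}$; in the LQR example, take the positive and negative semidefinite parts of the symmetric matrix defining $w$. The second route gives item 3 with $2\kp$ in place of $\kp$.
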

\begin{proof}
    The proof of the first assertion follows from Corollary 2.3.3 in \cite{atkinson2005theoretical}. The second assertion is established Lemma \ref{lem:metricPi}. The third assertion follows from the following inequalities:
    \begin{align*}
        \|\op\Delta \op P \op F \op\Delta q\|_{\fs} &\leq \metric(\pi,\pi')\varrho_{\gen\lifsa}(\op P \op F \op\Delta q) \tag{Lemma \ref{lem:metricPi}}\\
        & \leq \metric(\pi,\pi')\kp \|\op F \op\Delta q\|_{\fs}\tag{Assumption \ref{assm:rhoPv}}\\
        & \leq \metric(\pi,\pi')\kp \|\op F\|\|\op\Delta q\|_{\fsa}\tag{Norm of Linear Operators}\\
        & \leq \|\op F\| \kp \varrho_{\gen\lifsa}(q) \metric(\pi,\pi')^2. \tag{Lemma \ref{lem:metricPi}}
    \end{align*} 
    The fourth assertion is a direct consequence of the first and the third assertion. In the third inequality above, one can use Lemma \ref{lem:Deltaq} to conclude \eqref{eqn:trpo}.
\end{proof}

The last inequality above allows us to derive the following majorization bound. 

\begin{theorem}\label{thm:majorization}
Let $\pi,\pi'\in\specstable$. Then, the majorization bounds on the difference of the value functions and the policy performance are given by
\begin{align*}
    [v_{\pi'} - v_{\pi}](s) &\leq \Big[L_{\pi}(\pi') + \op \sigma_\pi\beta(q_\pi,\pi') \ws \metric(\pi,\pi')^2\Big](s)\quad \text{ for all } s\in\mathcal S,\\
    J_{\pi'}(\rho) - J_{\pi}(\rho) &= \pair{v_{\pi'} - v_{\pi},\rho}\leq \pair{\op \pi'A_\pi + \beta(q_\pi,\pi') \ws \metric(\pi,\pi')^2,\op \sigma_\pi^*\rho},
\end{align*}
where $\op\sigma_\pi^*$ is the adjoint of the operator $\op\sigma_\pi$.
\end{theorem}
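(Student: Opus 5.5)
The plan is to start from the second line of the policy difference lemma (Lemma \ref{lem:policydiff}), which gives the exact identity
\[
v_{\pi'} - v_\pi = L_\pi(\pi') + \gamma\, \op\sigma_\pi \op\Delta \op P \op\sigma_{\pi'} \op\Delta q_\pi ,
\]
with $\op\Delta = \op\pi' - \op\pi$. Here $\op\sigma_\pi = \sum_{t\geq 0}\gamma^t\op P_\pi^t = (\ids - \gamma\op P_\pi)^{-1}$, and the Neumann series converges because $\pi\in\specstable$. The first step is to bound the remainder term pointwise. For this I apply assertion 4 of Lemma \ref{lem:upperbound} with $q = q_\pi$ and $\op F = \op\sigma_{\pi'}$:
\[
[\op\Delta\op P\op\sigma_{\pi'}\op\Delta q_\pi](s) \le \beta(q_\pi,\pi')\,\ws(s)\,\metric(\pi,\pi')^2 \quad \text{for all } s.
\]
This needs $\op\sigma_{\pi'}\lifs\subset\lifs$. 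That holds because $\op P_{\pi'}$ leaves $\lifs$ invariant (Assumption \ref{assm:invariance}), and the partial sums of the Neumann series are increasing nonnegative combinations when applied to nonnegative elements, so their bounded limits stay in $\lifs$ by Assumption \ref{assm:increasinglimits}. I also need $q_\pi\in\lifsa$, which follows from $v_\pi\in\lifs$ together with Assumptions \ref{assm:wellposedness} and \ref{assm:invariance}. The factor $\gamma\le 1$ can be dropped, or absorbed into $\beta$, because the bound is nonnegative.

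The second step is to push this pointwise inequality through $\op\sigma_\pi$. The key observation is that $\op\sigma_\pi$ is a positive operator: each $\op P_\pi^t$ is integration against a probability kernel, so $f\le g$ pointwise with $f,g\in\fs$ implies $\op\sigma_\pi f\le \op\sigma_\pi g$ pointwise. Applying this to the inequality from the first step gives
\[
\op\sigma_\pi\op\Delta\op P\op\sigma_{\pi'}\op\Delta q_\pi \le \op\sigma_\pi\beta(q_\pi,\pi')\ws\,\metric(\pi,\pi')^2 .
\]
The right-hand side is finite because $\ws\in\fs$ and $\|\op\sigma_\pi\|<\infty$ by assertion 1 of Lemma \ref{lem:upperbound}. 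Adding $L_\pi(\pi')$ to both sides yields the first claimed inequality.

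For the second inequality, I pair the pointwise bound with $\rho\in\mathcal P_{\ws}(\mathcal S)$, which preserves inequalities because $\rho$ is a positive measure and all functions involved are $\rho$-integrable. This gives $J_{\pi'}(\rho)-J_\pi(\rho)=\pair{v_{\pi'}-v_\pi,\rho}$. I then rewrite $L_\pi(\pi') = \op\sigma_\pi\op\pi' A_\pi$ using \eqref{eqn:lpipi'} and combine it with the bound term:
\[
\pair{\op\sigma_\pi(\op\pi' A_\pi + \beta\ws\,\metric^2),\rho} = \pair{\op\pi'A_\pi+\beta\ws\,\metric^2,\op\sigma_\pi^*\rho}.
\]
Here $\op\sigma_\pi^*\rho = \sum_t\gamma^t\rho P_\pi^t$ is the (unnormalized) discounted occupancy measure. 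The step I expect to need the most care is justifying this adjoint pairing, namely interchanging the Neumann sum with the integral against $\rho$. I would handle it with dominated or monotone convergence, using $|\pair{\gamma^t\op P_\pi^t f,\rho}|\le \|(\gamma \op P_\pi)^t\|\,\|f\|_{\fs}\pair{\ws,\rho}$. These terms are summable once $\|(\gamma\op P_\pi)^m\|<1$ for some $m$, which the spectral stability of $\pi$ guarantees.
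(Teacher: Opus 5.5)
Your proposal is correct and takes essentially the paper's route: the paper's proof just cites the identity $v_{\pi'}-v_\pi = L_\pi(\pi') + \gamma\op\sigma_\pi\op\Delta\op P\op\sigma_{\pi'}\op\Delta q_\pi$ from Lemma~\ref{lem:policydiff} together with assertion 4 of Lemma~\ref{lem:upperbound}, and you supply the details it leaves implicit (positivity of $\op\sigma_\pi$, dropping the factor $\gamma$, and the summability argument for the adjoint pairing with $\rho$). One caveat: your side argument that $\op\sigma_{\pi'}\lifs\subset\lifs$ also needs $\lifs$ to be closed under nonnegative linear combinations and the element to be nonnegative. Neither is a stated assumption, and closure fails for the Lipschitz-ball example, though the paper also applies this hypothesis of Lemma~\ref{lem:upperbound} without checking it.
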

\begin{proof}
The proof follows from the preceding discussions in Lemmas \ref{lem:policydiff} and \ref{lem:upperbound}.
\end{proof}

\section{Policy Gradient Algorithms}\label{sec:policy-gradient}
In this section, we present some of the main algorithms used in reinforcement learning and connect those algorithms to the first and second order methods for policy optimization. The key results enabling these connections are derived in the previous section. We first need an interchange of integral and minimum assumption, so that we can conclude 
\begin{align*}
    \pi_{k+1} &\in \argmin_{\pi'\in\Pi}  \pair{\op \pi'q_k,\op\sigma_{\pi_k}^*\rho} \iff \pi_{k+1}(s) \in \min_{p\in\mathcal P(\mathcal K(s))} \int q_k(s,a) p(da)
\end{align*}
for almost every $s\in \supp(\op\sigma_{\pi_k}^*\rho)$, where $\op\sigma_{\pi_k}^*$ is the adjoint of the operator $\op\sigma_{\pi_k}$. We shed some light on this topic in the next subsection.

\begin{remark}
In the context of sampling based reinforcement learning, we can approximate the pairing $\pair{\op \pi'q_k,\op\sigma_{\pi_k}^*\rho}$ by
\begin{align*}
    \pair{\op \pi'q_k,\op\sigma_{\pi_k}^*\rho} \approx \sum_{i=0}^N \int q_k(s_i,a) \pi'(da|s_i),
\end{align*}
where $\{s_i\}_{i=0}^N$ is generated by $s_0\sim\rho$ and $s_{i+1}\sim P_{\pi_k}(\cdot|s_i)$, and $N$ is distributed according to geometric distribution with parameter $\gamma$, that is, $\pr{N=n} = (1-\gamma)\gamma^n$. It is a routine calculation to show that the right side is an unbiased estimate of the left side \cite{zhang2020global}.
\end{remark}

\begin{remark}
Let $F:\specstable\to\lifs$ be a function. It is easy to see that if the interchange of integral and minimization is valid and that the minimum exists,
    \[\argmin_{\pi'\in\Pi}  \pair{\op \pi'q_\pi + F(\pi'), \op\sigma_{\pi_k}^*\rho} = \argmin_{\pi'\in\Pi}  \pair{\op \pi'A_\pi + F(\pi'),\op\sigma_{\pi_k}^*\rho}.\]
In the sequel, we use this idea at several occasions without mention.
\end{remark}
\subsection{Interchange of Integral and Minimization}
Recall that we are interested in minimizing $\langle \op\pi' q,\xi\rangle$ for some $\xi\in\mathcal P_{\ws}(\mathcal S)$, which is minimization of an integral over a subset of linear operators (generally, $\specstable$). Under certain conditions, this minimization operation can be ``pushed'' inside the integral, whereby we can minimize the function $q$ over the actions $a$ and construct a $\pi'$ that minimizes $\langle \op\pi' q,\xi\rangle$. We first introduce a definition of interchange of integral and minimization and present a condition under which this is a feasible operation.

\begin{definition}
We say that $\lifsa$ and $\Pi_0$ enjoy the property of interchange of integral and minimization if and only if for any $q\in\lifsa$ that is inf-compact and for every  $\xi\in\mathcal{P}_{\ws}(\mathcal{S})$, the following holds:
    \begin{align*}
        \pi^*_{q,\xi} \in \argmin_{\pi\in\Pi_0} \int q(s,\pi(s)) \xi(ds) \iff \pi^*_{q,\xi}(s) \in \argmin_{a\in\mathcal K(s)} q(s,a) \text{ for $\xi$ a.e. } s\in\mathcal S.
    \end{align*}
\end{definition}
Early results on this property appeared in \cite{rockafellar2006integral}, which was further extended in \cite{hiai1977integrals,bouchitte1988integral}. Further extensions under more generality is studied in \cite{hafsa2003interchange} and \cite{giner2009necessary}. We make the following assumption in the paper. 

\begin{assumption}\label{assm:interchange}
    $\lifsa$ and $\Pi$ enjoy the interchange of integral and minimization property. 
\end{assumption}
When the action space is a Euclidean space and there is no structural restriction on the policy class, this assumption is readily satisfied as established in Lemma \ref{lem:interchange} below. 
\begin{lemma}\label{lem:interchange}
    Suppose that $\mathcal A = \Re^n$. Let $\Pi$ denote the set of all possible measurable policies $\pi:\mathcal S\to \mathcal A$.  Then, $\lifsa$ and $\Pi$ enjoy the property of interchange of integral and minimization.
\end{lemma}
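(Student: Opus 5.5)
We prove the two directions of the equivalence separately. Throughout, fix an inf-compact $q\in\lifsa$ and a measure $\xi\in\mathcal P_{\ws}(\mathcal S)$. Since $\Pi$ consists of all measurable maps $\pi:\mathcal S\to\mathcal A$, we must also restrict attention to maps with $\pi(s)\in\mathcal K(s)$; we use this constraint implicitly.

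\textbf{Step 1: construct a measurable minimizer.} Define $m(s)=\inf_{a\in\mathcal K(s)} q(s,a)$. Inf-compactness of $q$ on $\mathcal K$ implies that $q(s,\cdot)$ is inf-compact on $\mathcal K(s)$, because the section $\{a:(s,a)\in\mathcal K,\ q(s,a)\le \ell\}$ is homeomorphic to a closed subset of a compact set. Hence the infimum is attained and $m(s)>-\infty$. Since $q\ge 0$, or more generally $q\in\fsa$ is bounded below by $-\|q\|_{\fsa}\wsa$, we may invoke the measurable selection theorem for inf-compact functions on Borel graphs (e.g.\ \cite{feinberg2007optimality}, Arsenin--Kunugui type). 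This gives that $m$ is Borel measurable and that there exists a measurable $\pi^\circ:\mathcal S\to\mathcal A$ with $\pi^\circ(s)\in\argmin_{a\in\mathcal K(s)}q(s,a)$ for every $s$. Because $\Pi$ contains all measurable selectors, $\pi^\circ\in\Pi$.

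\textbf{Step 2: the ``$\Leftarrow$'' direction.} Take any $\pi\in\Pi$. Pointwise we have $q(s,\pi(s))\ge m(s)$, so $\int q(s,\pi(s))\,\xi(ds)\ge\int m\,d\xi$. If $\pi^*$ attains $m(s)$ for $\xi$-a.e.\ $s$, then its integral equals $\int m\,d\xi$. Therefore $\pi^*$ is a minimizer.

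We must check that the integrals are well defined. The lower bound $q\ge -\|q\|_{\fsa}\wsa$ makes $\int q(s,\pi(s))\,\xi(ds)$ well defined in $(-\infty,\infty]$ as long as $\int\wsa(s,\pi(s))\,\xi(ds)<\infty$, or trivially so when $q\ge0$. In the general case, integrability of the negative part should follow from $\xi\in\mathcal P_{\ws}$ together with $\|\ws\|_{\fsa}<\infty$, and we would state this as the integrability convention. We also note that $m\le q(\cdot,\pi(\cdot))$ for any feasible $\pi$, which gives one-sided integrability.

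\textbf{Step 3: the ``$\Rightarrow$'' direction.} Suppose $\pi^*$ minimizes the integral. By Step 2 applied to $\pi^\circ$, the minimum value is exactly $\int m\,d\xi$. If this value is finite, then $\int \big(q(s,\pi^*(s))-m(s)\big)\,\xi(ds)=0$ with a nonnegative integrand, so the integrand vanishes $\xi$-a.e. This is precisely the pointwise argmin condition.

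\textbf{Main obstacle.} The delicate point is the case $\int m\,d\xi=+\infty$. In that case every policy is a minimizer, and the ``$\Rightarrow$'' direction can fail. We plan to exclude it by showing $\int m\,d\xi<\infty$. Assumption \ref{assm:existencevpi} supplies a stable policy, but that does not bound $m$ directly. Instead we would use $m(s)\le q(s,a_0(s))$ for some measurable feasible $a_0$ with $|q(s,a_0(s))|\le\|q\|_{\fsa}\wsa(s,a_0(s))\le C\ws(s)$, relying on $\|\ws\|_{\fsa}<\infty$ read as comparability of the weights. If that comparability is not available, we would restrict the definition to the case of finite optimal value, which is the only case relevant for the algorithms.

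The measurable selection step in Step 1 is standard given inf-compactness, so we regard the finiteness issue above as the real obstacle.
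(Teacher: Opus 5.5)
Your argument is correct once the finiteness hypothesis you flag is added, and it takes a different route from the paper. The paper extends $q$ by $+\infty$ off $\mathcal K$ and notes that inf-compactness gives a closed epigraph, hence a normal integrand. It then cites Rockafellar--Wets, Theorem 14.60 (interchange of minimization and integration over decomposable spaces) as a black box. You instead reprove the needed special case directly. A measurable-selection theorem gives a Borel $m$ and a Borel pointwise minimizer $\pi^\circ\in\Pi$. Both directions then follow from $q(s,\pi(s))\ge m(s)$ and the fact that a nonnegative function with zero $\xi$-integral vanishes $\xi$-a.e. Your route is more elementary and never uses $\mathcal A=\Re^n$, since selection for inf-compact functions works on any Borel action space. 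The paper's citation is shorter and covers general decomposable policy classes without needing the pointwise infimum to be attained.

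\textbf{The lower bound in Step 2 is automatic.} An inf-compact $q$ attains its global minimum, because a nonempty sublevel set is compact and $q$ is lower semicontinuous on it. So $q$ is bounded below by a constant. Hence every $\int q(s,\pi(s))\,\xi(ds)$ is well defined in $(-\infty,\infty]$, and $\int m\,d\xi>-\infty$. Your appeal to $\|\ws\|_{\fsa}<\infty$ there goes in the wrong direction. That hypothesis gives $\ws\le C\wsa$, not the reverse. The same objection applies to your proposed bound $\wsa(s,a_0(s))\le C\ws(s)$.

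\textbf{The obstacle in Step 3 is real.} It cannot be removed under the stated hypotheses. Take $\mathcal S=\mathcal A=\Re$, $\ws\equiv 1$, $\wsa(s,a)=e^{s^2}+a^2$, and $q=\wsa$. This $q$ is inf-compact with $\|q\|_{\fsa}=1$, and it lies in $\lifsa$ if, e.g., $\lifsa=\fsa$ with $\gen{\lifsa}$ the unit ball of $\fsa$. Let $\xi=\mathcal N(0,1)\in\mathcal P_{\ws}(\mathcal S)$. Then $\int m\,d\xi=\int e^{s^2}\,\xi(ds)=\infty$. So every measurable policy minimizes the integral, while only policies with $\pi=0$ $\xi$-a.e.\ minimize pointwise.

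Hence the ``$\Rightarrow$'' direction needs the extra hypothesis that some $\pi\in\Pi$ has finite integral. This is exactly the standing assumption in Theorem 14.60 (the integral functional is not identically $+\infty$), which the paper's one-line proof also leaves unchecked. Your fallback of assuming it is the right fix. It costs nothing for the algorithms: there $q=q_{\pi_k}$ and $\xi\propto\op\sigma_{\pi_k}^*\rho$. Moreover $m\le\op\pi_k q_{\pi_k}=v_{\pi_k}$, whose integral against $\op\sigma_{\pi_k}^*\rho$ is finite.
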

\begin{proof}
    If $q$ is inf-compact, then the epigraph is a closed set; thus, $q$ is a normal integrand (see Definition 14.27 in \cite[p. 661]{rockafellar1998variational}). The proof then follows from Theorem 14.60 in \cite[p. 677]{rockafellar1998variational}. The second part of the statement follows immediately from Assumption \ref{assm:Pioptimal}.
\end{proof}

If Assumption \ref{assm:Pioptimal} holds in case $\Pi$ is a structured policy class, then $\pi^*_{q,\xi}\in\Pi$. Thus, if Assumption \ref{assm:Pioptimal} holds, then $\lifsa$ and $\Pi$ enjoy the property of interchange of integral and minimization. Theorem 2.2 in \cite{hiai1977integrals} establishes a similar result as in Lemma \ref{lem:interchange} for more general case when the action spaces are Polish and $\mathcal K$ satisfies certain conditions. 

\subsection{First order Policy Gradient Based Methods}

\paragraph{Policy iteration} Minimizing $\exo{L_{\pi_k}(\pi')(s)}{s\sim\rho} = \pair{\op\pi' A_{\pi_k}, \op\sigma_{\pi_k}^*\rho}$, that is the expected policy advantage function, leads to policy iteration algorithm; thus, policy iteration is equivalent to the usual gradient descent in the policy space, as can be observed below:
\begin{align*}
    \pi_{k+1} &= \argmin_{\pi'\in\Pi}  \pair{\op \pi'A_{\pi_k} ,\op\sigma_{\pi_k}^*\rho} \\
    \iff
    \pi_{k+1}(s) &= \argmin_{p\in \mathcal P(\mathcal K(s))} \int A_{\pi_k}(s,a) p(da) = \delta_{\mu_{k+1}(s)}, \text{ where } \mu_{k+1}(s) = \argmin_{a\in \mathcal K(s)} q_{\pi_k}(s,a)
\end{align*}
The objective vanishes at $\pi_k$; thus, $\pi_{k+1}$ necessarily does not degrade performance. This leads to a sequence of improving policies and if a policy is not improved at the next step, that is, $\pi_{k+1} = \pi_k$, then it is an optimal policy \cite{puterman2014markov}. 

\paragraph{Proximal policy optimization} PPO is a conservative policy optimization algorithm, where we minimize an upper bound on the advantage function estimate. The descent direction is
\begin{subequations}\label{eq:PPO}
\begin{align}
    \pi_{k+1} &= \argmin_{\pi'\in\Pi}  \pair{\op \pi_k\:\texttt{CPI}\left(\frac{d\pi'}{d\pi_k} , A_{\pi_k}\right) ,\op\sigma_{\pi_k}^*\rho},\\
    \text{where } \texttt{CPI}\left(\frac{d\pi'}{d\pi_k} , A_{\pi_k}\right) & = \max\left\{\frac{d\pi'}{d\pi_k} A_{\pi_k}, \texttt{clip}\left(\frac{d\pi'}{d\pi_k} ,1-\epsilon,1-\epsilon\right) A_{\pi_k}\right\},\\
    \texttt{clip}(x,1-\epsilon,1+\epsilon) &= \begin{cases}
        x & x\in[1-\epsilon,1+\epsilon]\\
        1-\epsilon & x<1-\epsilon\\
        1+\epsilon & x>1+\epsilon
    \end{cases},
\end{align}
\end{subequations}
$\frac{d\pi'}{d\pi_k}(s,a)$ is the Radon-Nikodym derivative between $\pi'(da|s)$ and $\pi_k(da|s)$ evaluated at action $a$, and $\epsilon$ is a hyperparameter. The operation $\texttt{CPI}$ provides a conservative estimate of the advantage function at low cost actions at certain states. It is clear, however, that the objective function here is a conservative estimate (upper bound) of the derivative $\pair{\op \pi'A_{\pi_k} ,\op\sigma_{\pi_k}^*\rho}$ within the class of $\pi'$ that is absolutely continuous with respect to $\pi_k$. This leads to a stable training algorithm even if the advantage function estimate is far from the true advantage function as long as $\pi_0$ has sufficiently large support. During training, the conservative estimate of the objective function ensures that the policy does not update drastically due to a bad estimate of the advantage function during sampling based reinforcement learning.

The convergence of a certain variation of the PPO algorithm is studied in \cite{lascu2025ppo} for MDPs with general state and action spaces.

\paragraph{Policy mirror descent} The regularized MDP problem was studied in \cite{geist2019theory}, where the authors added a strongly convex regularizer in the state-action value function. This idea was later generalized to a policy mirror descent algorithm in 
\cite{ju2022policy} with further developments of the algorithm and convergence rates in \cite{zhan2023policy}. Let  $D_\Pi$ be a Bregman divergence on the space of policies $\stable$ satisfying $D_\Pi(\pi,\pi')>0$ if $\pi(s)\neq \pi'(s)$ for a set of positive measure in the support of $\op \sigma_\pi^*\rho$. Let $D_{\mathcal P(\mathcal A)}$ be the restriction of the Bregman divergence on the distributions over the action set, and $\eta_k>0$ be a hyperparameter (stepsize). The policy mirror descent is defined as
\begin{align*}
    \pi_{k+1} &= \argmin_{\pi'\in\Pi}\pair{\op \pi'q_{\pi_k} + \frac{1}{\eta_k} \: D_\Pi(\pi_k,\pi'),\op\sigma_{\pi_k}^*\rho}\\
    \pi_{k+1}(s)&= \argmin_{p\in\mathcal P(\mathcal K(s))} \int q_{\pi_k}(s,a)p(da) + \frac{1}{\eta_k} \: D_{\mathcal P(\mathcal A)}(\pi_k(s),p).
\end{align*}
The objective vanishes at $\pi_k$; thus, $\pi_{k+1}$, if not equal to $\pi_k$ for almost every $s\in\supp(\op\sigma_{\pi_k}^*\rho)$, necessarily improves the performance:
\begin{align*}
    \op \pi_{k+1}q_{\pi_k} \leq - \frac{1}{\eta_k} \: D_\Pi(\pi_k,\pi_{k+1})<0 \text{ for $\op\sigma_{\pi_k}^*\rho$-almost every $s$}.
\end{align*}
If the space of policies is assumed to be the set of all measurable functions  $\pi:\mathcal S\to\mathcal P(\mathcal A)$ and $\mathcal K = \mathcal S\times\mathcal A$, then Assumption \ref{assm:interchange} is satisfied under fairly general conditions \cite{hiai1977integrals,bouchitte1988integral}. 

\subsection{Second Order and Approximate Second Order Policy Gradient Methods}
We now turn our attention to second order methods for solving for the optimal policy. 
\paragraph{Newton's method}
In Newton's method, the goal is to minimize the second order term. 
\begin{align*}
    \pi_{k+1} &= \argmin_{\pi'\in\Pi}  \pair{\op \pi'A_{\pi_k} + (\op\pi' - \op\pi
    _k)\op P\op \sigma_{\pi_k} \op\pi' A_{\pi_k},\op\sigma_{\pi_k}^*\rho} 
\end{align*}
This algorithm is infeasible to implement since the second term in the expression above is difficult to compute for a policy $\pi'$ that is unknown. Thus, one can use the majorization of the second term to arrive at the majorization mimimization algorithm as discussed below.

\paragraph{Trust region policy optimization}
In the trust region policy optimization \cite{schulman2015trust}, a trust region around the current policy is created and the policy advantage function is optimized within that trust region. In our case, this would be given by
\begin{align*}
    \pi_{k+1} &= \argmin_{\pi'\in\Pi} \langle L_{\pi_k}(\pi'),\rho\rangle \text{ s.t. } \metric(\pi_k,\pi')^2\leq \eta_k^2,
\end{align*}
where $\eta_k$ is a hyperparameter that is picked decreasingly at every time step $k$ of the algorithm. The algorithm above is a generalization of the original TRPO algorithm in \cite{schulman2015trust}.

\begin{remark}
To see the equivalence of the above algorithm with the TRPO algorithm \cite[Theorem 1]{schulman2015trust}, note that $\ws(s) \equiv 1$, $\wsa(s,a) \equiv 1$, $\lifs = \fs$, $\lifsa = \fsa$, $\mathcal K = \mathcal S\times\mathcal A$, and $\ipmas\ $ is the usual total variation norm. One can replace $\beta(q_{\pi_k},\pi')$ with its upper bound $\frac{4\gamma\|c\|_\infty}{(1-\gamma)^3}$ and $\ipmas\ (\pi_k(s),p)^2\leq D_{KL}(\pi_k(s)\| p)$, where $D_{KL}$ is the KL divergence. We further note here that while $\ipmas\ (\pi_k(s),p)^2\leq D_{KL}(p\|\pi_k(s))$, the paper \cite{schulman2015trust} took the other inequality in their TRPO algorithm.
\end{remark}

The paper \cite{schulman2015trust} assumed in their derivation that the reward is independent of the state in the finite MDP setting, which is not the case here. 


\paragraph{Majorization minimization (OTPG algorithm)} 
This algorithm is the basis of the TRPO algorithm discussed above. In this part, suppose that the hypotheses in Theorem \ref{thm:valueiteration}, Assumption \ref{assm:rhoPv} and Assumption \ref{assm:interchange} hold. Note that due to Theorem \ref{thm:majorization}, we have
\begin{align*}
    \pair{\op \pi'A_{\pi} + \gamma \op\Delta\op P\op \sigma_{\pi'} \op\Delta q_\pi,\op\sigma_{\pi_k}^*\rho} \leq \pair{\op \pi'A_{\pi} + \beta(q_\pi,\pi')\ws \metric(\pi',\pi)^2,\op\sigma_{\pi_k}^*\rho}
\end{align*}
At iteration $k$, assuming $\beta(q_{\pi_k},\pi')\leq \beta_k$ for some $\beta_k>0$ and for all $\pi'\in\stable$ (or at least, stable $\pi'$ in a small neighborhood around $\pi_k$). This leads to our majorization minimization algorithm, which is a generalization of the TRPO algorithm:
\begin{align}
    \pi_{k+1} &= \argmin_{\pi'\in\Pi}  \pair{\op \pi'A_{\pi_k} + \beta_k\ws \metric(\pi',\pi_k)^2,\op\sigma_{\pi_k}^*\rho} \nonumber\\
    \pi_{k+1}(s)&= \argmin_{p\in\mathcal P(\mathcal K(s))} \int A_{\pi_k}(s,a) p(da) +  \beta_k\ws(s) \ipmas\ (\pi_k(s),p)^2.\label{eqn:ipmppo}
\end{align}
We refer to this algorithm as OTPG (operator theoretic policy gradient) algorithm.

Through the right choice of integral probability metric such as maximum mean discrepancy in OTPG and by adding a KL divergence term to mimic mirror descent in the high dimensional policy space, we simplify the computation for finite MDPs, as demonstrated below.

\subsection{MM algorithm for Finite MDPs under RKHS}
Consider a finite MDP with $|\mathcal S| = n, |\mathcal A| = m$ and $\mathcal K=\mathcal S\times\mathcal A$. Let $Q\in \Re^{n\times n}$ and $R\in\Re^{m\times m}$ be two positive definite matrices. Let $\ws\equiv 1$ and $\wsa\equiv 1$ be the identity function. We define $\lifs$ and $\lifsa$ as the reproducing kernel Hilbert space (RKHS) with the kernel function $\kernels(s,s') = \frac{1}{2}Q_{ss'}$ and $\kernelsa((s,a),(s',a')) = \frac{1}{2}(Q_{ss'}+R_{aa'})$, where the generators $\gen{\lifs}$ and $\gen{\lifsa}$ are unit balls in the respective RKHS. The $\ipms$ and $\ipmsa$ are the usual maximum mean discrepancy (MMD). The existence of an optimal policy here is already well-known \cite{puterman2014markov} and the value iteration converges due to the contraction mapping theorem. 

With these definitions, $\mathcal P(\mathcal A)$ is the simplex in $m$ dimensions (we thus treat the probability measure as a vector) and $\ipmas\ (p_1,p_2)^2$ is given by $\ipmas\ (p_1,p_2)^2 = \frac{1}{2} \big(p_1^T R p_1 + p_2^T R p_2 - 2 p_1^T R p_2\big)$ for $p_1,p_2\in \mathcal P(\mathcal A)$. We apply \eqref{eqn:ipmppo} to derive a new TRPO algorithm:
\begin{align*}
    \pi_{k+1}(s)&= \argmin_{p\in\mathcal P(\mathcal A)} A_{\pi_k}(s,\cdot)^T p +  \beta_k \ipmas\ (\pi_k(s),p)^2.
\end{align*}
Note that since the MMD is bounded, we do not need the clipping function that is generally used in TRPO \cite{schulman2017proximal}. Given the quadratic structure of the objective function as a function of $p$ and the optimization over simplex, the above minimization problem can be solved approximately and iteratively using a mirror descent based algorithm  using entropic regularization \cite[p. 323]{bertsekas2016nonlinear} (see equation (3.115) and problem 3.6.3 in the book). In this case, we modify the objective function slightly using a hyperparamter $\eta_k\uparrow \infty$ yielding
\begin{align}
    \pi_{k+1}(s)&= \argmin_{p\in\mathcal P(\mathcal A)} A_{\pi_k}(s,\cdot)^T p +  \beta_k \ipmas\ (\pi_k(s),p)^2 + \frac{1}{\eta_k}D_{KL}(p\|\pi_k(s)). \label{eqn:newppo}
\end{align}
Note that the objective function on the right side of the equation above is a valid majorization on the performance difference as derived in Theorem \ref{thm:majorization} and is a strictly convex function of $p$. Thus, one can view it as a variant of TRPO algorithm as well. Let $R_a$ denote the $a^{th}$ column of the matrix $R$. Due to the problem 3.6.3 in \cite[p. 323]{bertsekas2016nonlinear}, \eqref{eqn:newppo} is solved using a low-complexity iterative scheme with $p_0 = \pi_k(s)$ and
\begin{align}
    \vartheta_{k,l}(s,a) &= -\eta_k \Big(A_{\pi_k}(s,a) - \beta_k R_a^T \big(\pi_k(s)-p_l\big)\Big)\\
    p_{l+1,a} &= \frac{1}{Z_k(p_l)} p_{l,a} \exp(\vartheta_{k,l}(s,a)),
    \text{ where } Z_k(p_l) = \sum_{a\in\mathcal A} p_{l,a} e^{\vartheta_{k,l}(s,a)}. \label{eqn:pimmrkhs}
\end{align}
By stopping the iteration early, say at iteration $\bar l$, one can set $\pi_{k+1}(s) = p_{\bar l}$ for all $s\in\mathcal S$ and proceed with computation of $A_{\pi_{k+1}}$. We refer to this update scheme as MM-RKHS algorithm to differentiate it with the PPO, TRPO, and OTPG algorithms.

Thus, the computation of the new policy using the advantage function of the old policy is straightforward. Note that by eliminating IPM and replacing the KL divergence term with $\beta D_{KL}(\pi_k(s)\| p)$ in \eqref{eqn:newppo}, we arrive at the usual TRPO algorithm as defined in equation (9) of \cite{schulman2015trust}. The above approach involving IPM, however, yields a class of TRPO algorithm depending on the choice of the positive definite matrix $R$. This has the potential to lead to faster convergence in comparison to the usual TRPO algorithm by carefully identifying the matrix $R$. Moreover, the above algorithm does not require computation of the second derivative of the KL divergence term; consequently, it is faster than TRPO in practice. 

One could also update $R$ as more training data is gathered to speed up the convergence. In sampling based reinforcement learning, some hyperparameter tuning for $(\beta_k,\eta_k)$ is also needed to ensure that the algorithm does not lead to rapidly changing policies and converges to a near-optimal solution due to bad estimates of the advantage function. Under certain further assumptions, the convergence of this algorithm for finite MDPs can be investigated using Proposition 7.3.5 in \cite{lange2016mm} and the gradient domination result from \cite[Lemma 4]{agarwal2021theory}.

\subsection{Numerical Simulations of MM-RKHS algorithm}
The goal of this section is to demonstrate the efficacy of the MM-RKHS algorithm defined above in comparison to widely used PPO algorithm. To do so, we study randomly generated  GARNET environments \cite{bhatnagar2009natural} with 1000 states, 200 actions, discount factor $\gamma = 0.95$, and the branching factor is 20 (that is, for each state-action pair, the state can transition to 20 other states). The initial distribution $\rho$ is uniform distribution over the entire state space $\mathcal S$. We consider the performance of the following algorithms.
\begin{enumerate}
    \item \textbf{PPO} where the cost and transition kernel are known and the advantage function is computed using closed form expression. The hyperparameter~$\epsilon$ in~\eqref{eq:PPO} is chosen as $\epsilon=0.2$ (which is the commonly recommended value \cite{schulman2017proximal}). For policy gradient algorithm, we used the learning rate of 0.8 and ran the policy update for 10 iterations using this learning rate.
    \item \textbf{Sample based PPO}, where the cost and transition kernel are not known and the advantage function is estimated from sampled trajectories. The hyperparameter are the same as that of PPO.
    \item  \textbf{MM-RKHS} with $R$ as the identity matrix, $\beta_k(s) = \|A_{\pi_k}(s,\cdot)\|_\infty /\sqrt{k+1}$, $\bar l = 1$, and $\eta_k = \eta_0(k+1)$ with $\eta_0 = 1.15$, where $\beta_k$ and $\eta_k$ were chosen to improve the stability of the algorithm. 
    \item \textbf{Sample based MM-RKHS}, where the cost and transition kernel are not known and the advantage function is estimated from sampled trajectories. In this settings, we clip the exponent $\vartheta_{k,l}(s,a)$ to be between $[-1.5,1.5]$ to improve the numerical stability, as is commonly done in reinforcement learning algorithms with advantage estimates. 
\end{enumerate}    

We run each algorithm for $K=50$ iterations. For each iteration for the sample-based algorithm, we generated 50000 samples per episode and 5 episodes to estimate the advantage function. The samples where generated independently for sample-based PPO and sample-based MM-RKHS algorithms. Advantage function was estimated using samples by employing the technique presented in \cite[Section 5.2]{sutton2018reinforcement}.

\begin{figure}
    \centering
    \includegraphics[width=0.48\linewidth]{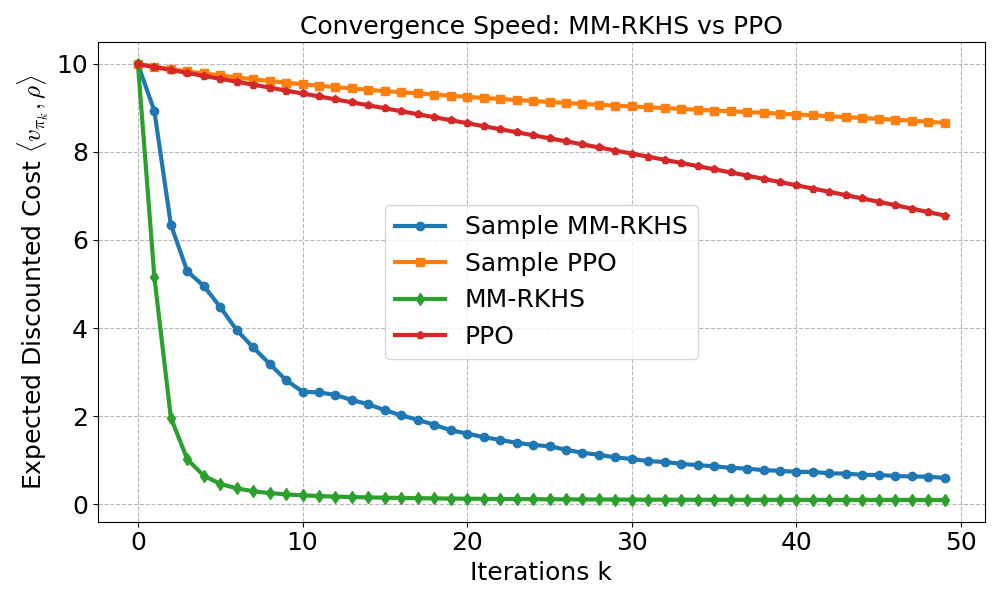}
    \includegraphics[width=0.48\linewidth]{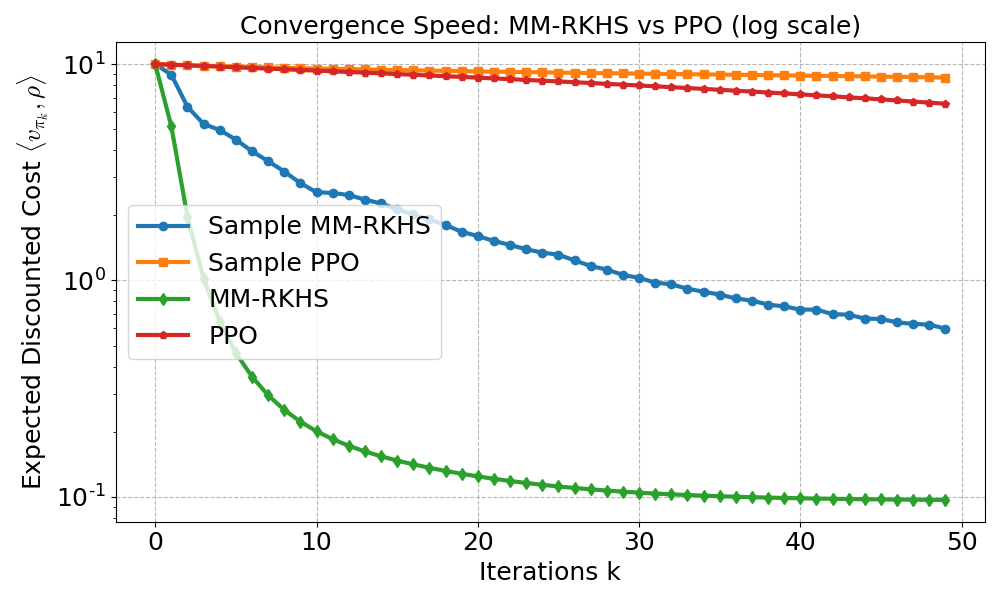}
    \caption{\label{fig:MMRKHS}MM-RKHS versus PPO on a randomly generated GARNET instance. Two variants of both algorithms are investigated: the standard variants compute the advantage function using closed-form formulas while the sample-based variants  compute the advantage function via sampled trajectories.}
\end{figure}

We show the result in Figure \ref{fig:MMRKHS}, where we plot $\pair{v_{\pi_k},\rho}$ and $\log(\pair{v_{\pi_k},\rho})$ on the y axis as a function of iteration index $k$ on x axis. We observe that there is a substantial speedup in sampling based learning using MM-RKHS algorithm. This is not surprising -- PPO approximates the TRPO algorithm, but largely uses first-order gradient update. On the other hand, the algorithm  MM-RKHS developed in this paper is minimizing a majorization of the second order term (just like TRPO algorithm). Consequently, it converges to the optimal solution much faster -- at a superlinear convergence rate as evident from the log scale graph. The sampling based MM-RKHS algorithm also enjoys a faster convergence speed as compared to sampling based PPO algorithm.

We admit the PPO has been implemented extensively across many applications, and is robust to hyperparameter selection (although it suffers from high sample complexity). Extensive simulations are needed for MM-RKHS algorithm as well to conclude that MM-RKHS is superior to PPO in terms of convergence speed and sample complexity across a variety of reinforcement learning applications. 

\section{Conclusion}\label{sec:conclude}
We investigated the problem of solving MDPs with general state and action spaces. By viewing the transition kernel and the policies as linear operators over certain spaces, we establish a new result for the existence of optimal policies in MDPs with general state and action spaces. This new framework then allows us to derive the policy difference lemma and majorization bound for computation of an optimal policy through majorization minimization algorithm. The algorithm thus obtained is termed as OTPG (operator theoretic policy gradient) algorithm. We further developed a variant of OTPG algorithm for finite MDPs, which we call MM-RKHS algorithm. We demonstrated through numerical simulation that MM-RKHS algorithm for finite MDPs is considerably faster and sample efficient than PPO algorithm on GARNET environment.

To improve the reinforcement learning algorithms for applicability in complex problems with infinite dimensional state/action space such as active flow control and new material discovery, further research needs to be done in which the transition kernel may have a certain structure such as it being generated from elliptical PDEs or has measure-preserving properties. Moreover, the cost function may have sparsity in these problems. Moreover, convergence properties of this class of reinforcement learning algorithms along with their gradient dominance properties \cite{agarwal2021theory} remain to be investigated. A general theory of reinforcement learning for solving inverse problems in scientific computation further remains an open problem. We believe that the theoretical framework developed here can help us solve these problems, and we leave them for the future research.

\appendix

\section{Proof of Theorem \ref{thm:existencevpi}}\label{app:existencevpi}

We prove the result separately for $\pi \in \specstable$ and $\pi \in \decay$.

\begin{lemma}\label{lem:pi-stable}
    If $\pi\in\specstable$ and $c_\pi\in\fs$, then $v_\pi\in\fs$ exists.
\end{lemma}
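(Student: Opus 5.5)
The plan is to show that the Neumann series $\sum_{t\ge0}\gamma^t\op P_\pi^t$ converges in operator norm on $\mathcal B(\fs,\fs)$, and then to apply the resulting bounded operator to $c_\pi\in\fs$. First I check that $\op P_\pi=\op\pi\op P$ is a bounded operator on $\fs$. This is implicit in the hypothesis that $\spec(\gamma\op P_\pi)$ is defined, since the spectral radius was introduced only for elements of $\mathcal B(\mathcal X,\mathcal X)$. So I take $\op P_\pi\in\mathcal B(\fs,\fs)$ as part of the meaning of $\pi\in\specstable$.

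Next, I use the Gelfand spectral radius formula cited in Sec.~\ref{sub:linearop} (Theorem 10.13 of \cite{rudin1973functional}): $\spec(\gamma\op P_\pi)=\lim_{m\to\infty}\|(\gamma\op P_\pi)^m\|^{1/m}<1$. Fix $r$ with $\spec(\gamma\op P_\pi)<r<1$. Then there is an $m_0$ such that $\|(\gamma\op P_\pi)^t\|\le r^t$ for all $t\ge m_0$. The series $\sum_t\|(\gamma\op P_\pi)^t\|$ is therefore dominated by a convergent geometric series. Since $\mathcal B(\fs,\fs)$ is a Banach space (because $\fs$ is), the partial sums $\op\sigma_\pi^N:=\sum_{t=0}^N\gamma^t\op P_\pi^t$ converge in operator norm to a bounded operator $\op\sigma_\pi$. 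Equivalently, I can invoke Corollary 2.3.3 of \cite{atkinson2005theoretical} as quoted in the paper, which gives $\op\sigma_\pi=(\ids-\gamma\op P_\pi)^{-1}$.

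Then I define $v_\pi:=\op\sigma_\pi c_\pi$. Because $c_\pi\in\fs$, I get $\|v_\pi\|_{\fs}\le\|\op\sigma_\pi\|\,\|c_\pi\|_{\fs}<\infty$, so $v_\pi\in\fs$. To identify this with the expected discounted cost, I note that the partial sums $\op\sigma_\pi^Nc_\pi$ converge to $v_\pi$ in $\|\cdot\|_{\fs}$, and hence pointwise, because $|f(s)|\le\ws(s)\|f\|_{\fs}$. Each partial sum equals $\mathbb E\big[\sum_{t=0}^N\gamma^t c(s_t,a_t)\mid s_0=s\big]$, by the kernel representation of $\op P_\pi^t$ and of $c_\pi=\op\pi c$. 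Since $c\ge0$ under Assumption \ref{assm:wellposedness}, monotone convergence identifies the pointwise limit with $v_\pi(s)$ as defined in Sec.~\ref{sec:problem}. The same identity shows $v_\pi=\op T_\pi^\infty0$, since $\op T_\pi^{N+1}0=\op\sigma_\pi^Nc_\pi$.

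I expect the main obstacle to be the first step: justifying that $\op P_\pi$ is genuinely bounded on the weighted space $\fs$, since no explicit drift condition on $\ws$ has been assumed. I would handle it by reading boundedness into the definition of $\specstable$, as above. If an explicit argument is wanted instead, I would bound $\|\op P_\pi\|=\|\op P_\pi\ws\|_{\fs}$ using positivity of the operator. Once boundedness is in hand, the remaining steps are routine.
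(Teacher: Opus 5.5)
Your proposal is correct and follows essentially the paper's route: the spectral radius formula (Theorem 10.13 of \cite{rudin1973functional}) gives geometric decay of $\|(\gamma\op P_\pi)^t\|$, so the Neumann series converges in $\mathcal B(\fs,\fs)$ to $(\ids-\gamma\op P_\pi)^{-1}$, and $v_\pi=\op\sigma_\pi c_\pi\in\fs$. Your extra steps only make explicit what the paper leaves implicit: reading boundedness of $\op P_\pi$ into the definition of $\specstable$, and using monotone convergence to identify the operator-norm limit with the expected discounted cost.
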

\begin{proof}
    By Theorem 10.13 on p. 235 of \cite{rudin1973functional}, if $\spec(\gamma\op P_\pi)<1$, then $\inf_{t\geq 0}\|\gamma^t\op P_\pi^t\|^{1/t}<1$; see Subsection \ref{sub:linearop} for a discussion on this. Let $t_0\in \Na$ be such that $\|\gamma^{t_0}\op P_\pi^{t_0}\|<1$. This implies that $(\ids -\gamma\op P_\pi)^{-1}$ is a bounded linear operator from Corollary 2.3.3 in \cite{atkinson2005theoretical}; consequently, $\sum_{t\geq 0} \gamma^t\op P_\pi^t = (\ids -\gamma\op P_\pi)^{-1}$. We use this fact to arrive at $v_\pi = \sum_{t\geq 0} \gamma^t\op P_\pi^t c_\pi = (\ids -\gamma\op P_\pi)^{-1}c_\pi$. This concludes the proof.
\end{proof}

\begin{lemma}\label{lem:pi-decay}
    If $\pi\in\decay$ and $c_\pi\in\fs$, then $v_\pi\in\fs$ exists. 
\end{lemma}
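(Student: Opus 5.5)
The plan is to reduce the claim to the spectrally stable case, Lemma~\ref{lem:pi-stable}, applied to a restricted operator. Fix $\pi\in\decay$ and let $\mathcal H_\pi$ be the closed subspace in \eqref{eqn:H}. It is closed in $\fs$, so $(\mathcal H_\pi,\|\cdot\|_{\fs})$ is a Banach space.

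First I would check that the terms of the candidate series lie in $\mathcal H_\pi$. Using $\op P_\pi=\op\pi\op P$ and associativity,
\[
\gamma^t\op P_\pi^t c_\pi=\gamma^t(\op\pi\op P)^t\op\pi c=\gamma^t\op\pi(\op P\op\pi)^t c .
\]
This is exactly the $t$-th generator of $\mathcal H_\pi$, so $c_\pi\in\mathcal H_\pi$. Next I would check that $\gamma\op P_\pi$ maps $\mathcal H_\pi$ into itself. On the generators this is immediate, since $\gamma\op P_\pi$ sends the $t$-th generator to the $(t+1)$-th. It therefore maps finite linear combinations of generators into their span. Since $\op P_\pi$ is bounded on $\fs$, it also maps closures into closures, so the restriction $\gamma\op P_\pi|_{\mathcal H_\pi}$ is a bounded operator on $\mathcal H_\pi$. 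Boundedness of $\op P_\pi$ follows from boundedness of $\op P$ and $\op\pi$; for $\pi\in\decay$ this is implicit in the compactness requirement.

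By Lemma~\ref{lem:restriction}, $\spec(\gamma\op P_\pi|_{\mathcal H_\pi})<1$. I would then repeat the argument of Lemma~\ref{lem:pi-stable} on the Banach space $\mathcal H_\pi$ in place of $\fs$. Gelfand's formula (Theorem 10.13 in \cite{rudin1973functional}) gives $t_0$ with $\|(\gamma\op P_\pi|_{\mathcal H_\pi})^{t_0}\|<1$. Corollary 2.3.3 of \cite{atkinson2005theoretical} then shows that the Neumann series $\sum_t\gamma^t\op P_\pi^t|_{\mathcal H_\pi}$ converges in operator norm to $(\id{\mathcal H_\pi}-\gamma\op P_\pi|_{\mathcal H_\pi})^{-1}$. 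Applying this to $c_\pi\in\mathcal H_\pi$ gives
\[
v_\pi=\sum_{t\ge0}\gamma^t\op P_\pi^t c_\pi\in\mathcal H_\pi\subset\fs ,
\]
with convergence in $\|\cdot\|_{\fs}$ and hence also pointwise. The pointwise limit agrees with the expected discounted cost, so $v_\pi$ is indeed the value function.

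The main obstacle is making sure that Lemma~\ref{lem:restriction} really applies to the restricted operator on the subspace \eqref{eqn:H} as defined, since there is a mismatch between spaces. The hypothesis $\gamma^t(\op P\op\pi)^tc\to0$ and the compactness of $\op P\op\pi$ are stated on $\fsa$, while $\mathcal H_\pi\subset\fs$. If needed, I would bridge this by transporting through $\op\pi$. Compactness of $\op P\op\pi$ on $\fsa$ gives compactness of $\op\pi(\op P\op\pi)^t$, by the lemma preceding \eqref{eqn:H}. The decay transfers as $\gamma^t\op\pi(\op P\op\pi)^tc=\op\pi\,\gamma^t(\op P\op\pi)^tc\to0$, using boundedness of $\op\pi$. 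I would then run Suzuki's argument \cite{suzuki1976convergence} on the orbit of $c_\pi$ under $\gamma\op P_\pi$. The key fact is that the spectrum of a compact operator away from $0$ consists of isolated eigenvalues of finite multiplicity. Decay of the orbit then forces the orbit's components along spectral projections for eigenvalues of modulus at least $1$ to vanish. Consequently the restriction to the orbit's closed span has spectral radius strictly below one.
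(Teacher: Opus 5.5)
Your proposal is correct and is essentially the paper's argument. The paper simply invokes Suzuki's theorem, and the content of that theorem is what you spell out: the restricted spectral-radius bound (Lemma~\ref{lem:restriction}) followed by the Neumann series on the closed invariant subspace $\mathcal H_\pi$. One tidy-up for your fallback paragraph: the Riesz--Schauder argument on $\fs$ needs power-compactness of $\gamma\op P_\pi$, which holds because $\op P_\pi^2=\op\pi(\op P\op\pi)\op P$ is compact, rather than compactness of $\op\pi(\op P\op\pi)^t$; more simply, apply Suzuki's result on $\fsa$ to $\gamma\op P\op\pi$ and $c$, where both hypotheses hold literally, and push the convergent sum through the bounded operator $\op\pi$.
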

\begin{proof}
    The existence of $v_\pi$ is a direct application of the main result in \cite{suzuki1976convergence}.
\end{proof}

\section{Proof of Theorem \ref{thm:uniformlybounded}}\label{app:uniformlybounded}
Let $\mathcal X$ be a subspace of a Banach space and $\op A\in\mathcal B(\mathcal X,\mathcal X)$ be a bounded linear operator. Let $y\in\mathcal X$ be fixed and define an operator $\op B:\mathcal X\to\mathcal X$ as $\op B x = y+\op A x$. This yields $\op B^l x = \sum_{i=0}^{l-1} \op A^i y + \op A^l x$ for any $l\in\Na$. We have the following result. 
\begin{lemma}\label{lem:specA}
     If $\spec(\op A)<1$, then there exists constants $\tilde\kappa, \kappa>0$ such that for any $x\in\mathcal X$ satisfying $ \|x\|\leq \tilde\kappa\|y\|$, we have $\|\op B^k x\|\leq \kappa \|y\|$ for all $k\in \Na$.
\end{lemma}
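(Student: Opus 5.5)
We use the explicit formula $\op B^k x = \sum_{i=0}^{k-1}\op A^i y + \op A^k x$ stated just before the lemma. Bounding each of the two pieces uniformly in $k$ gives the claim for any choice of $\tilde\kappa>0$.

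The key quantity is $M := \sup_{t\ge 0}\|\op A^t\|$ together with the summability of $\sum_t \|\op A^t\|$. Since $\spec(\op A)<1$, the spectral radius formula (Theorem 10.13 of \cite{rudin1973functional}, as used in Lemma~\ref{lem:pi-stable}) gives $\lim_{t\to\infty}\|\op A^t\|^{1/t} = \spec(\op A) <1$. Fix $r$ with $\spec(\op A)<r<1$. Then there is $t_0$ with $\|\op A^t\|\le r^t$ for all $t\ge t_0$. Setting
\[
C := \max\Big\{1, \max_{0\le t< t_0} \|\op A^t\|\, r^{-t}\Big\},
\]
we get $\|\op A^t\|\le C r^t$ for every $t\ge 0$. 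Hence
\[
\sum_{i=0}^{\infty}\|\op A^i\| \le \frac{C}{1-r} \qquad\text{and}\qquad \sup_t \|\op A^t\|\le C.
\]
This step is the only nontrivial one. If $\mathcal X$ is only a (possibly non-closed) subspace of a Banach space, I will work with operator norms on $\mathcal X$ directly, or on its closure by continuous extension. The bound $\|\op A^t\|\le Cr^t$ is purely a statement about norms of powers, so completeness is not needed for the summation estimates below.

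With these bounds the triangle inequality gives, for $\|x\|\le\tilde\kappa\|y\|$,
\[
\|\op B^k x\| \le \sum_{i=0}^{k-1}\|\op A^i\|\,\|y\| + \|\op A^k\|\,\|x\| \le \Big(\frac{C}{1-r} + C\tilde\kappa\Big)\|y\|.
\]
So for any prescribed $\tilde\kappa>0$ (say $\tilde\kappa=1$) we may take $\kappa = C/(1-r) + C\tilde\kappa$, which is independent of $k$, $x$ and $y$. This proves the lemma.

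The main subtlety is the spectral-radius step on a space that may not be complete. The spectral radius formula is a Banach-algebra statement, so I will apply it to the bounded extension of $\op A$ to the closure of $\mathcal X$. Its powers have the same norms as those of $\op A$, so the conclusion $\|\op A^t\|\le Cr^t$ transfers back to $\op A$ on $\mathcal X$.
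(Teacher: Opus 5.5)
Your proof is correct, but it is organized differently from the paper's. The paper uses only a single $L$ with $\alpha=\|\op A^L\|<1$ and sets $\beta=\sum_{i=0}^{L-1}\|\op A^i\|$. It then shows by induction that the ball of radius $\tilde\kappa\|y\|$, with the specific choice $\tilde\kappa=\beta/(1-\alpha)$, is invariant under $\op B^L$. The intermediate iterates $\op B^{nL+l}x$ with $1\le l\le L-1$ are bounded separately to produce $\kappa$.

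You instead take the full limit in the spectral radius formula to get the uniform bound $\|\op A^t\|\le Cr^t$, and then estimate the closed form of $\op B^k x$ in one step. This has three advantages:
\begin{itemize}
\item it removes the induction and the block bookkeeping;
\item it makes explicit that any $\tilde\kappa>0$ works, with $\kappa=C/(1-r)+C\tilde\kappa$;
\item in the case $x=0$, which is the one used later for $\op T_\pi^k 0$, it gives directly $\sup_k\|\op B^k 0\|\le \frac{C}{1-r}\|y\|$.
\end{itemize}
The paper's reliance on just one power of norm below one is no real saving. Submultiplicativity, $\|\op A^{nL+l}\|\le\alpha^n\|\op A^l\|$, converts it into exactly your geometric bound.

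Your care about a possibly non-complete $\mathcal X$ goes beyond the paper, which applies Rudin's theorem without comment. The one link you leave implicit is $\spec(\bar{\op A})\le\spec(\op A)$ for the extension $\bar{\op A}$ to the closure. This holds because a bounded inverse of $\op A-\lambda$ on $\mathcal X$ extends by continuity to a bounded inverse of $\bar{\op A}-\lambda$ on the closure, so the spectrum can only shrink under the extension.
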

\begin{proof}
If $\spec(\op A)<1$, then there exists $L\in\Na$ such that $\alpha:=\|\op A^L\|<1$ (see Subsection \ref{sub:linearop}).
Let $\beta = \sum_{i=0}^{L-1} \|\op A^i\|\in(0,\infty)$, $\tilde \kappa = \frac{\beta}{1-\alpha}>0$, and $\kappa = \beta+\tilde\kappa \max\{1,\|\op A\|,\ldots,\|\op A^{L-1}\|\}$. We now apply the principle of mathematical induction to complete the proof. First, we have an intermediate result: $\|\op B^{nL} x\| \leq \tilde \kappa \|y\|$ for all $n\in\Na$, which we establish through induction. Indeed, for $n=1$, if $ \|x\|\leq \tilde\kappa\|y\|$, then we have 
\[\|\op B^L x\|\leq \left\|\sum_{i=0}^{L-1} \op A^i y\right\| + \|\op A^L\| \|x\| \leq \beta \|y\| + \alpha \|x\|\leq \|\leq\beta \|y\| + \alpha \tilde \kappa \|y\| = \tilde \kappa \|y\|.\]
Suppose that this statement holds for $n\in\Na$. We use the same argument as above to get 
\begin{align*}
    \|\op B^{(n+1)L} x\| = \|\op B^L \op B^{nL}x\|\leq \left\|\sum_{i=0}^{L-1} \op A^i y\right\| + \|\op A^L\| \|\op  B^{nL} x\| \leq \tilde \kappa \|y\|. 
\end{align*}
The intermediate result in thus established. Let us establish the result now. For $1\leq l\leq L-1$, since $ \|x\|\leq \tilde \kappa\|y\|$, we have 
\begin{align}
    \|\op B^l x\| \leq \beta\|y\| + \|\op A^l\| \|x\|\leq (\beta+\|\op A^l\|\tilde\kappa)\|y\|\leq \kappa\|y\|.\label{eqn:Blx}
\end{align}
Next, we observe that $\op B^{nL+l} x = \op B^l \op B^{nL} x$, and we apply \eqref{eqn:Blx} with $x$ replaced with $\op B^{nL} x$ to conclude that $\|\op B^{nL+l} x\|\leq \kappa \|y\|$. This completes the induction step and we arrive at the conclusion that $\|\op B^k x\|\leq \kappa \|y\|$ for all $k\in \Na$ (recall that by construction, $\kappa>\tilde\kappa$). 
\end{proof}

Next, we note that since $c\geq 0$, we have $\op T v\leq \op T_\pi v$ for any $\pi\in\stable$. Moreover, the Bellman operator $\op T$ is monotone, that is, if $v_1\leq v_2$, then $\op T v_1\leq \op T v_2$. An application of the principle of mathematical induction implies that $\op T^k 0 \leq \op T_\pi^k 0$ for all $k\geq 0$ for any $\pi\in\stable$. 

 Pick any $\pi\in\stable$ and set $y = c_\pi$. If $\pi \in \specstable$, then pick $\op A = \gamma\op P_\pi$, $\mathcal X=\lifs$ and $\op B = \op T_\pi$. If $\pi \in \decay$, then pick $\op A = \gamma\op P_\pi\big|_{\mathcal H_\pi}$, $\mathcal X= \mathcal H_\pi$ and $\op B = \op T_\pi$, where $\mathcal H_\pi$ is defined in \eqref{eqn:H}. We can now apply Lemma \ref{lem:specA} to conclude that $\sup_{k\geq 0} \|\op T_\pi^k 0\|_{\fs} \leq \kappa \|c_\pi\|_{\fsa}<\infty$. This result, coupled with the fact that $\op T^k 0 \leq \op T_\pi^k 0$ for all $k\geq 0$, allows us to conclude the result. 


\section{Proof of Lemma \ref{lem:policydiff}}\label{app:policydiff}

We have the following lemma that delineates the property of the policy advantage function.
\begin{lemma}\label{lem:Deltaq}
    Let $\pi,\pi'\in\specstable$ and define $\op\Delta = \op\pi' - \op\pi$. We have $\op\Delta q_\pi = \op\Delta A_\pi = \op\pi' A_\pi$ and $L_{\pi}(\pi') =\op \sigma_\pi \op\Delta q_\pi $.
\end{lemma}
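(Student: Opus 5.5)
The plan is to verify both identities directly from the definitions, using two facts. The first is that every policy operator preserves functions of the state alone. The second is the Bellman fixed-point relation for $v_\pi$ when $\pi\in\specstable$.

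\textbf{Step 1 (constants in $a$).} Each $\pi(s)$ is a probability measure on $\mathcal K(s)$. Hence for any $v\in\fs$, viewed as an element of $\fsa$ via $(s,a)\mapsto v(s)$ (allowed since $\|\ws\|_{\fsa}<\infty$), we have
\[
[\op\pi v](s)=\int v(s)\,\pi(da|s)=v(s).
\]
So $\op\pi v=v$ and $\op\pi' v=v$, and therefore $\op\Delta v=0$.

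\textbf{Step 2 (Bellman relation).} For $\pi\in\specstable$, Lemma~\ref{lem:pi-stable} gives $v_\pi=(\ids-\gamma\op P_\pi)^{-1}c_\pi$. Equivalently $v_\pi=c_\pi+\gamma\op P_\pi v_\pi=\op\pi(c+\gamma\op P v_\pi)=\op\pi q_\pi$. It follows that $\op\pi A_\pi=\op\pi q_\pi-\op\pi v_\pi=v_\pi-v_\pi=0$.

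\textbf{Step 3 (first chain of identities).} Since $A_\pi=q_\pi-v_\pi$ and $\op\Delta v_\pi=0$ by Step~1, we get $\op\Delta A_\pi=\op\Delta q_\pi$. Next, $\op\Delta A_\pi=\op\pi'A_\pi-\op\pi A_\pi=\op\pi'A_\pi$ by Step~2.

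\textbf{Step 4 (policy advantage function).} Here $\op\sigma_\pi$ denotes $\sum_t\gamma^t\op P_\pi^t=(\ids-\gamma\op P_\pi)^{-1}$, which is bounded. Substituting Step~3 into the definition \eqref{eqn:lpipi'} gives
\[
L_\pi(\pi')=\op\sigma_\pi\op\pi'A_\pi=\op\sigma_\pi\op\Delta q_\pi.
\]

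The only point requiring care is Step~1, namely the identification of state functions inside $\fsa$ and the fact that $\op\pi$ acts on them as the identity. This is exactly where the standing assumption that $\fs\subset\fsa$ (equivalently $\|\ws\|_{\fsa}<\infty$) is used. Beyond that, the argument is routine.
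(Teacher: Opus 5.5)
Your proof is correct and follows the same route as the paper: use $\op\pi A_\pi = 0$ to rewrite $\op\pi' A_\pi$ as $\op\pi' q_\pi - \op\pi q_\pi = \op\Delta q_\pi$, then substitute into the definition of $L_\pi(\pi')$. You also justify two facts the paper uses without comment. The first is $\op\pi A_\pi = 0$, which you derive from the fixed-point relation $v_\pi = \op\pi q_\pi$. The second is $\op\Delta v_\pi = 0$, which holds because policy operators fix functions of the state alone.
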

\begin{proof}
Note that $\op\pi A_\pi = 0$. Thus, $ \op \pi' A_\pi = \op \pi' A_\pi - \op \pi A_\pi = \op \pi'q_\pi - \op \pi q_\pi$. This yields
\begin{align*}
        L_{\pi}(\pi') & =\op \sigma_\pi \op \pi' A_\pi = \op \sigma_\pi (\op \pi'q_\pi - \op \pi q_\pi) =\op \sigma_\pi \op\Delta q_\pi = \op\sigma_\pi \Big(c_{\Delta} + \gamma \op P_\Delta \op\sigma_\pi c_\pi\Big),
    \end{align*}
    where $c_\Delta = \op\Delta c$ and $\op P_\Delta = \op\Delta \op P$.
\end{proof}

Now, we turn our attention to establishing Lemma \ref{lem:policydiff}. Since $\pi,\pi'\in\specstable$, $\op\sigma_{\pi}$ and $\op\sigma_{\pi'}$ are well-defined and $\op\sigma_{\pi'} = (\ids - \gamma \op P_\pi -\gamma \op P_\Delta)^{-1}$, where $\op P_\Delta = \op\Delta \op P$. We use \eqref{eqn:diffinvop2} to conclude that $\op\sigma_{\pi'} - \op\sigma_\pi = \gamma \op\sigma_{\pi'} \op P_\Delta \op\sigma_\pi = \gamma \op \sigma_\pi \op P_\Delta \op \sigma_{\pi} + \gamma^2 \op \sigma_\pi \op P_\Delta \op \sigma_{\pi'} \op P_\Delta \op \sigma_{\pi}$. Computing the expression of $v_{\pi'} - v_{\pi}$ and using \eqref{eqn:lpipi'} and \eqref{eqn:diffinvop2}, we get
\begin{align*}
    v_{\pi'} - v_{\pi} &=  \op \sigma_{\pi'} c_\pi - \op \sigma_{\pi} c_\pi + \op \sigma_{\pi'} c_{\pi'} - \op \sigma_{\pi'} c_\pi\\
    & = \gamma \op \sigma_\pi \op P_\Delta \op \sigma_{\pi}  c_{\pi} + \gamma^2 \op \sigma_\pi \op P_\Delta \op \sigma_{\pi'} \op P_\Delta \op \sigma_{\pi}  c_{\pi} + \op \sigma_{\pi}  c_\Delta + \gamma \op \sigma_{\pi}\op P_\Delta \op \sigma_{\pi'}   c_\Delta \\
    & =  L_{\pi}(\pi')   + \gamma  \op \sigma_\pi \op P_\Delta\op \sigma_{\pi'} ( c_\Delta + \gamma \op P_\Delta \op \sigma_{\pi}  c_{\pi}) \\
    & =  L_{\pi}(\pi')   + \gamma  \op \sigma_\pi \op\Delta\op P\op \sigma_{\pi'} \op\Delta q_\pi
\end{align*}
This completes the proof.
\newpage
\bibliographystyle{ieeetr}
\bibliography{math, ipm, rl, mdp, infinite, robotics}

\end{document}